\documentclass[11pt]{article}

\usepackage{fullpage,times,url,bm}

\usepackage{amsthm,amsfonts,amsmath,amssymb,epsfig,color,float,graphicx,verbatim}
\usepackage{algorithm,algorithmic}
\usepackage{bbm}
\usepackage{natbib}

\usepackage{hyperref}
\hypersetup{
	colorlinks   = true, 
	urlcolor     = blue, 
	linkcolor    = blue, 
	citecolor   = black 
}

\sloppy

\newtheorem{theorem}{Theorem}[section]

\newtheorem{lemma}{Lemma}[section]
\newtheorem{corollary}{Corollary}[section]

\newtheorem{remark}{Remark}[section]

\usepackage{amssymb}

\usepackage{dsfont}

\newcommand{\stam}[1]{}

\newtheorem{assumption}[theorem]{Assumption}
\newtheorem{terminology}[theorem]{Terminology}

\newcommand{\bx}{\mathbf{x}}
\newcommand{\bw}{\mathbf{w}}

\newcommand{\bv}{\mathbf{v}}
\newcommand{\bz}{\mathbf{z}}

\newcommand{\co}{{\cal O}}
\newcommand{\ca}{{\cal A}}

\newcommand{\cd}{{\cal D}}

\newcommand{\ch}{{\cal H}}

\newcommand{\cl}{{\cal L}}

\newcommand{\cu}{{\cal U}}
\newcommand{\cx}{{\cal X}}

\newcommand{\cn}{{\cal N}}

\newcommand{\bbs}{{\mathbb S}}
\newcommand{\reals}{{\mathbb R}}
\newcommand{\nat}{{\mathbb N}}

\newcommand{\zero}{{\mathbf{0}}}

\newcommand{\csp}{\mathrm{CSP}}
\newcommand{\scat}{\mathrm{SCAT}}
\newcommand{\rand}{\mathrm{rand}}
\newcommand{\sat}{\mathrm{SAT}}
\newcommand{\Hcnn}{\ch_\mathrm{sign-cnn}}
\newcommand{\diag}{\mathrm{diag}}
\newcommand{\Dvec}{\cd_\mathrm{vec}}
\newcommand{\Dmat}{\cd_\mathrm{mat}}

\DeclareMathOperator*{\E}{\mathbb{E}}

\newcommand{\tn}{{\tilde{n}}}
\newcommand{\inner}[1]{\langle #1 \rangle}
\newcommand{\norm}[1]{\left\|#1\right\|}


\title{Hardness of Learning Neural Networks with \\
Natural Weights}

\author{
    Amit Daniely\thanks{School of Computer Science and Engineering, The Hebrew University, Jerusalem, Israel and Google Research Tel-Aviv, \texttt{amit.daniely@mail.huji.ac.il }}
	\and
	Gal Vardi\thanks{Weizmann Institute of Science, Israel, \texttt{gal.vardi@weizmann.ac.il}}
}
\date{}

\begin{document}

\maketitle

\begin{abstract}
Neural networks are nowadays highly successful despite strong hardness results. The existing hardness results focus on the network architecture, and assume that the network's weights are arbitrary.
A natural approach to settle the discrepancy is to assume that the network's weights are ``well-behaved" and posses some generic properties that may allow efficient learning. This approach is supported by the intuition that the weights in real-world networks are not arbitrary, but exhibit some ''random-like" properties with respect to some ''natural" distributions.
We prove negative results in this regard, and show that for depth-$2$ networks, and many ``natural" weights distributions such as the normal and the uniform distribution, most networks are hard to learn. Namely, there is no efficient learning algorithm that is provably successful for most weights, and every input distribution. It implies that there is no generic property that holds with high probability in such random networks and allows efficient learning.
\end{abstract}

\section{Introduction}

Neural networks have revolutionized performance in multiple domains, such as computer vision and natural language processing, and have proven to be a highly effective tool for solving many challenging problems. This impressive practical success of neural networks is not well understood from the theoretical point of view. In particular, despite extensive research in recent years, it is not clear which models are learnable by neural networks algorithms.

Historically, there were many negative results for learning neural networks, and it is now known that under certain complexity assumptions, it is computationally hard to learn the class of functions computed by a neural network, even if the architecture is very simple.
Indeed, it has been shown that learning neural networks is hard already for networks of depth $2$ \citep{KlivansSh06,daniely2016complexity}.
These results hold already for {\em improper learning}, namely where the learning algorithm is allowed to return a hypothesis that does not belong to the considered hypothesis class.

In recent years, researchers have considered several ways to circumvent the discrepancy  between those hardness results and the empirical success of neural networks. Namely, to understand which models are still learnable by neural networks algorithms. This effort includes proving learnability of linear models, including polynomials and kernel spaces~\citep{andoni2014learning, xie2016diverse, daniely2016toward,  daniely2017sgd, brutzkus2017sgd, jacot2018neural, du2018gradient, oymak2018overparameterized, allen2018learning, allen2018convergence, cao2019generalization, zou2019improved,  song2019quadratic, ge2019mildly, oymak_towards_2019, arora2019fine,  cao2019generalization, ziwei2019polylogarithmic, ma2019comparative, lee2019wide, daniely2019neural}, making assumptions on the input distribution~\citep{li2017convergence,brutzkus2017globally,du2017convolutional,du2017gradient,du2018improved,goel2018learning,shamir2018distribution}, the network's weights~\citep{arora2014provable,shamir2018distribution,das2019learnability,agarwal2020deep,goel2017learning}, or both~\citep{janzamin2015beating,tian2017analytical}.

In that respect, one fantastic result that can be potentially proven, is that neural networks are efficiently learnable if we assume that the network's weights are ``well-behaved". Namely, that there are some generic properties of the network's weights that allow efficient learning. This approach is supported by the intuition that the weights in real-world networks are not arbitrary, but exhibit some ''random-like" properties with respect to some ''natural weights distributions" (e.g., where the weights are drawn from a normal distribution). We say that a property of the network's weights is a {\em natural property} with respect to such a natural weights distribution, if it holds with high probability.
Existing hardness results focus on the network architecture, and assume that the weights are arbitrary.
Thus, it is unclear whether there exists a natural property that allows efficient learning.

In this work, we investigate networks with random weights, and networks whose weights posses natural properties.
We show that under various natural weights distributions {\em most networks are hard to learn}. Namely, there is no efficient learning algorithm that is provably successful for most weights, and every input distribution.
We show that it implies that learning neural networks is hard already if their weights posses some natural property.
Our hardness results are under the common assumption that refuting a random $K$-SAT formula is hard (the {\em RSAT assumption}).
We emphasize that our results are valid for {\em any} learning algorithm, and not just common neural networks algorithms.

We consider networks of depth $2$ with a single output neuron, where the weights in the first layer are drawn from some natural distribution, and the weights in the second layer are all $1$. We consider multiple natural weights distributions, for example, where the weights vector of each hidden neuron is distributed by a multivariate normal distribution, distributed uniformly on the sphere, or that each of its components is drawn i.i.d. from a normal, uniform or Bernoulli distribution. For each weights distribution, we show that learning such networks with high probability over the choice of the weights is hard. Thus, for such weights distributions, most networks are hard.
It implies that there is no generic property that holds with high probability (e.g., with probability $0.9$) in such random networks and allows efficient learning. Hence, if generic properties that allow efficient learning exist, then they are not natural, namely, they are rare with respect to all the natural weights distributions that we consider.

We also consider random neural networks of depth $2$, where the first layer is a convolutional layer with non-overlapping patches such that its filter is drawn from some natural distribution, and the weights of the second layer are all $1$. We show that learning is hard also for such networks.
It implies that there is no generic property that holds with high probability in such random convolutional networks and allows efficient learning.

\subsection*{Related work}

{\bf Hardness of learning neural networks.}
Hardness of learning neural networks in the standard (improper and distribution free) PAC model, follows from hardness of learning intersection of halfspaces. \cite{KlivansSh06} showed that, assuming the hardness of the shortest vector problem, learning intersection of $n^\epsilon$ halfspaces for a constant $\epsilon >0$ is hard. \cite{daniely2016complexity} showed that, under the RSAT assumption, learning intersection of $\omega(\log(n))$ halfspaces is hard. These results imply hardness of learning depth-$2$ networks with $n^\epsilon$ and $\omega(\log(n))$ hidden neurons (respectively).
In the agnostic model, learning halfspaces is already hard \citep{FeldmanGoKhPo06,daniely2016half}.

{\bf Learning random neural networks.}
\cite{shamir2018distribution} considers the problem of learning neural networks, where the weights are not arbitrary, but exhibit ``nice" features such as non-degeneracy or some ``random-like" appearance. The architecture of the networks that he considers is similar to ours. 
He shows that (under the RSAT assumption) no algorithm invariant to linear transformations can efficiently learn such networks if the columns of the weights matrix of the first layer are linearly independent. It implies that linear-invariant algorithms cannot learn such networks when the weights are chosen randomly.
We note that this result holds only for linearly-invariant algorithms, which is a strong restriction. Standard gradient decent methods, for example, are not linearly invariant\footnote{\cite{shamir2018distribution} shows that gradient decent can become linearly invariant if it is preceded by a certain preconditioning step.}. Our results hold for all algorithms.

In \cite{das2019learnability}, it is shown that deep random neural networks (of depth $\omega(\log(n))$) with the sign activation function, are hard to learn in the statistical query (SQ) model. This result was recently extended by \cite{agarwal2020deep} to other activation functions, including the ReLU function. While their results hold for networks of depth $\omega(\log(n))$ and for SQ algorithms, our results hold for depth-$2$ networks and for all algorithms.

Our paper is structured as follows: In Section~\ref{sec:preliminaries} we provide notations and definitions, followed by our results in Section~\ref{sec:results}. We sketch our proof ideas in Section~\ref{sec:proof idea}, with all proofs deferred to Section~\ref{sec:proofs}.

\section{Preliminaries}
\label{sec:preliminaries}

\subsection{Random Constraints Satisfaction Problems}\label{sec:CSP}
Let $\cx_{n,K}$ be the collection of {\em (signed) $K$-tuples}, that is, sequences $x=[(\alpha_1,i_1),\ldots,(\alpha_K,i_K)]$ for $\alpha_1,\ldots,\alpha_K\in \{\pm 1\}$ and distinct $i_1,\ldots,i_K\in [n]$.
Each $x\in \cx_{n,K}$ defines a function $U_x:\{\pm 1\}^n\to\{\pm 1\}^K$ by $U_x(\psi)=(\alpha_1\psi_{i_1},\ldots,\alpha_K\psi_{i_K})$.

Let $P:\{\pm 1\}^K\to \{0,1\}$ be some predicate. A {\em
  $P$-constraint} with $n$ variables is a function $C:\{\pm
1\}^n\to\{0,1\}$ of the form $C(x)=P\circ U_x$ for some $x\in\cx_{n,K}$.
An instance to the {\em CSP problem} $\csp(P)$ is
a {\em $P$-formula}, i.e., a collection $J=\{C_1,\ldots,C_m\}$ of $P$-constraints (each is specified by a $K$-tuple). The
goal is to find an assignment $\psi\in \{\pm 1\}^n$ that maximizes
the fraction of satisfied constraints (i.e.,
constraints with $C_i(\psi)=1$). We will allow CSP problems where $P$ varies with $n$ (but is still fixed for every $n$). For example, we can look of the $\lceil\log(n)\rceil$-SAT problem.

We will consider the problem of distinguishing satisfiable from random $P$ formulas (a.k.a. the problem of refuting random $P$ formulas).
For $m:\mathbb{N}\to\mathbb{N}$, we say that a randomized algorithm $\ca$ efficiently solves the problem $\csp^{\rand}_{m(n)}(P)$, if $\ca$ is a polynomial-time algorithm such that:
\begin{itemize}
\item If $J$ is a satisfiable instance to $\csp(P)$ with $n$ variables and $m(n)$ constraints, then
\[
\Pr\left(\ca(J)=\text{``satisfiable"}\right)\ge\frac{3}{4} - o_n(1)~,
\]
where the probability is over the randomness of $\ca$.
\item If $J$ is a random\footnote{To be precise, in a random formula with $n$ variable and $m$ constraints, the $K$-tuple defining each constraint is chosen uniformly, and independently from the other constraints.} instance to $\csp(P)$ with $n$ variables and $m(n)$ constraints then 
\[
\Pr\left(\ca(J)=\text{``random"}\right)\ge \frac{3}{4} - o_n(1)~,
\]
where the probability is over the choice of $J$ and the randomness of $\ca$.
\end{itemize}

\subsection{The random $K$-SAT assumption}

Unless we face a dramatic breakthrough in complexity theory,
it seems unlikely that hardness of learning can be established on standard complexity assumptions such as $\mathbf{P}\ne\mathbf{NP}$ (see \cite{ApplebaumBaXi08,daniely2013average}). Indeed, all currently known lower bounds are based on assumptions from cryptography or average case hardness. Following \cite{daniely2016complexity} we will rely on an assumption about random $K$-SAT problems which we outline below.

Let $J=\{C_1,\ldots,C_m\}$ be a random $K$-SAT formula on $n$ variables. Precisely, each $K$-SAT constraint $C_i$ is chosen independently and uniformly from the collection of $n$-variate $K$-SAT constraints.
A simple probabilistic argument shows that for some constant $C$ (depending only on $K$), if $m\ge Cn$, then $J$ is not satisfiable w.h.p.
The problem of {\em refuting random $K$-SAT formulas} (a.k.a. the problem of distinguishing satisfiable from random $K$-SAT formulas) is the problem $\csp^{\rand}_{m(n)}(\sat_K)$, where $\sat_K$ is the predicate $z_1 \vee \ldots \vee z_K$.

\stam{
The problem of {\em refuting random $K$-SAT formulas} (a.k.a. the problem of distinguishing satisfiable from random $K$-SAT formulas) seeks efficient algorithms that provide, for most formulas, a {\em refutation}. That is, a proof that the formula is not satisfiable.

Concretely, we say that an algorithm is able to refute random $K$-SAT instances with $m=m(n)\ge Cn$ clauses if on $1-o_n(1)$ fraction of the $K$-SAT formulas with $m$ constraints, it outputs ``unsatisfiable", while for {\em every} satisfiable $K$-SAT formula with $m$ constraints, it outputs ``satisfiable"\footnote{See a precise definition in section \ref{sec:CSP}}. Since such an algorithm never errs on satisfiable formulas, an output of ``unsatisfiable" provides a proof that the formula is not satisfiable.
}

The problem of refuting random $K$-SAT formulas has been extensively studied during the last 50 years.
It is not hard to see that the problem gets easier as $m$ gets larger. The currently best known algorithms~\cite{feige2004easily,coja2004strong,coja2010efficient} can only refute random instances with $\Omega\left(n^{\lceil\frac{K}{2}\rceil}\right)$ constraints for $K\ge 4$ and $\Omega\left(n^{1.5}\right)$ constraints for $K=3$.
In light of that, \cite{Feige02} made the assumption that for $K=3$, refuting random instances with $C \cdot n$ constraints, for every constant $C$, is hard (and used that to prove hardness of approximation results). Here, we put forward the following assumption.

\begin{assumption}\label{hyp:only_sat}
Refuting random $K$-$\sat$ formulas with $n^{f(K)}$ constraints is hard for some $f(K)=\omega(1)$. Namely, for every $d>0$ there is $K$ such that the problem $\csp^{\rand}_{n^d}(\sat_K)$ is hard.
\end{assumption}
\begin{terminology}
A computational problem is {\em RSAT-hard} if its tractability refutes assumption \ref{hyp:only_sat}.
\end{terminology}

In addition to the performance of best known algorithms, there is plenty of evidence to the above assumption, in the form of hardness of approximation results, and lower bounds on various algorithms, including resolution, convex hierarchies, sum-of-squares, statistical algorithms, and more. We refer the reader to \cite{daniely2016complexity} for a more complete discussion.

\subsection{Learning hypothesis classes and random neural networks}

Let $\ch \subseteq \reals^{(\reals^n)}$ be an hypothesis class. We say that a learning algorithm $\cl$ {\em efficiently (PAC) learns $\ch$} if for every target function $f \in \ch$ and every distribution $\cd$ over $\reals^n$, given only access to examples $(\bx,f(\bx))$ where $\bx \sim \cd$, the algorithm $\cl$ runs in time polynomial in $n$ and returns with probability at least $\frac{9}{10}$ (over the internal randomness of $\cl$), a predictor $h$ such that
\[
    \E_{\bx \sim \cd}\left[\left(h(\bx)-f(\bx)\right)^2\right] \leq \frac{1}{10}~.
\]

For a real matrix $W=(\bw_1,\ldots,\bw_m)$ of size $n \times m$, let $h_W: \reals^n \rightarrow [0,1]$ be the function $h_W(\bx) = \left[\sum_{i=1}^{m}[\inner{\bw_i,\bx}]_+\right]_{[0,1]}$, where $[z]_+ = \max\{0,z\}$ is the ReLU function, and $[z]_{[0,1]} = \min\{1,\max\{0,z\}\}$ is the clipping operation on the interval $[0,1]$. This corresponds to depth-$2$ networks with $m$ hidden neurons, with no bias in the first layer, and where the outputs of the first layer are simply summed and moved through a clipping non-linearity (this operation can also be easily implemented using a second layer composed of two ReLU neurons).

Let $\Dmat$ be a distribution over real matrices of size $n \times m$. We assume that $m \leq n$. We say that a learning algorithm $\cl$ {\em efficiently learns a random neural network} with respect to $\Dmat$ ($\Dmat$-random network, for short), if it satisfies the following property. For a random matrix $W$ drawn according to $\Dmat$, and every distribution $\cd$ (that may depend on $W$) over $\reals^n$, given only access to examples $(\bx,h_W(\bx))$ where $\bx \sim \cd$, the algorithm $\cl$ runs in time polynomial in $n$ and returns with probability at least $\frac{3}{4}$ over the choice of $W$ and the internal randomness of $\cl$, a predictor $h$ such that
\[
    \E_{\bx \sim \cd}\left[\left(h(\bx)-h_W(\bx)\right)^2\right] \leq \frac{1}{10}~.
\]

\begin{remark}
Learning an hypothesis class requires that for every target function in the class and every input distribution the learning algorithm succeeds w.h.p., and learning a random neural network requires that for a random target function and every input distribution the learning algorithm succeeds w.h.p. Thus, in the former case an adversary chooses both the target function and the input distribution, and in the later the target function is chosen randomly and then the adversary chooses the input distribution. Therefore, the requirement from the algorithm for learning random neural networks is weaker then the requirement for learning neural networks in the standard PAC-learning model.
We show hardness of learning already under this weaker requirement. Then, we show that hardness of learning random neural networks, implies hardness of learning neural networks with ``natural" weights under the standard PAC-learning model.
\end{remark}

\subsection{Notations and terminology}

We denote by $\cu([-r,r])$ the uniform distribution over the interval $[-r,r]$ in $\reals$; by $\cu(\{\pm r\})$ the symmetric Bernoulli distribution, namely, $Pr(r)=Pr(-r)=\frac{1}{2}$; by $\cn(0,\sigma^2)$ the normal distribution with mean $0$ and variance $\sigma^2$, and by $\cn(\zero,\Sigma)$ the multivariate normal distribution with mean $\zero$ and covariance matrix $\Sigma$. We say that a distribution over $\reals$ is symmetric if it is continuous and its density satisfies $f(x)=f(-x)$ for every $x \in \reals$, or that it is discrete and $Pr(x)=Pr(-x)$ for every $x \in \reals$.
For a matrix $M$ we denote by $s_{\min}(M)$ and $s_{\max}(M)$ the minimal and maximal singular values of $M$.
For $\bx \in \reals^n$ we denote by $\norm{\bx}$ its $L_2$ norm. We denote the $n-1$ dimensional unit sphere by $\bbs^{n-1} = \{\bx \in \reals^n: \norm{\bx}=1\}$.
For $t \in \nat$ let $[t]=\{1,\ldots,t\}$.
We say that an algorithm is efficient if it runs in polynomial time.

\section{Results}
\label{sec:results}

We show RSAT-hardness for learning $\Dmat$-random networks, where $\Dmat$ corresponds either to a fully-connected layer, or to a convolutional layer.
It implies hardness of learning depth-$2$ neural networks whose weights satisfy some natural property.
We focus on the case of networks with $\co(\log^2(n))$ hidden neurons. We note, however, that our results can be extended to networks with $q(n)$ hidden neurons, for any $q(n)=\omega(\log(n))$.
Moreover, while we consider networks whose weights in the second layer are all $1$, our results can be easily extended to networks where the weights in the second layer are arbitrary or random positive numbers.

\subsection{Fully-connected neural networks}

We start with random fully-connected neural networks.
First, we consider a distribution $\Dmat$ over real matrices, such that the entries are drawn i.i.d. from a symmetric distribution.

We say that a random variable $z$ is {\em $b$-subgaussian} for some $b>0$ if for all $t>0$ we have
\[
Pr\left(|z|>t\right) \leq 2\exp\left(-\frac{t^2}{b^2}\right)~.
\]

\begin{theorem}
\label{thm:nn iid}
Let $z$ be a symmetric random variable with variance $\sigma^2$. Assume that the random variable $z'=\frac{z}{\sigma}$ is $b$-subgaussian for some fixed $b$.
Let $\epsilon>0$ be a small constant, let $m=\co(\log^2(n))$, and let $\Dmat$ be a distribution over $\reals^{n \times m}$, such that the entries are i.i.d. copies of $z$.
Then, learning a $\Dmat$-random neural network is RSAT-hard, already if the distribution $\cd$ is over vectors of norm at most $\frac{n^{\epsilon}}{\sigma}$ in $\reals^n$.
\end{theorem}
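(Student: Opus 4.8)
The plan is to reduce from refuting random $K$-$\sat$: given an efficient algorithm $\cl$ that learns $\Dmat$-random networks, I will build, for every $d>0$ and the corresponding $K=K(d)$ from Assumption~\ref{hyp:only_sat}, an efficient algorithm solving $\csp^{\rand}_{n^d}(\sat_K)$, a contradiction. The backbone is the RSAT-based hardness framework of \cite{daniely2016complexity} (see also \cite{shamir2018distribution}): from a random $K$-$\sat$ instance one builds a distribution over labeled examples that is realizable by a ``gadget'' hypothesis when the formula is satisfiable, and on which every predictor errs with probability bounded away from $0$ when the formula is random; the gadget hypotheses needed are (clipped) intersections of $\omega(\log n)$ halfspaces, which depth-$2$ nets of the form $h_W$ can realize. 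The part that is special to our setting, and that the plan must supply, is that these gadgets can be realized by $h_W$ for a \emph{random} $W\sim\Dmat$ rather than an adversarially chosen one, using only inputs of norm at most $n^{\epsilon}/\sigma$.

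The crux is a ``pre-activation realizability'' step. The matrix $W$ has size $n\times m$ with $m=\co(\log^2 n)\le n$, and the entries of $W/\sigma$ are i.i.d., symmetric and $b$-subgaussian; hence, by a standard non-asymptotic lower bound on the least singular value of a tall random matrix, $s_{\min}(W)\ge c\,\sigma\sqrt n$ with probability $1-\exp(-\Omega(n))$ for some absolute $c>0$. On that event $W^\top\colon\reals^n\to\reals^m$ is surjective, so for any target vector of pre-activations $\bc\in\reals^m$ the minimum-norm solution $\bx=(W^\top)^{+}\bc$ of $W^\top\bx=\bc$ is efficiently computable and satisfies $\snorm{\bx}\le\snorm{\bc}/s_{\min}(W)\le\snorm{\bc}/(c\,\sigma\sqrt n)$; taking the coordinates of $\bc$ bounded by $\polylog(n)\cdot n^{\epsilon}$ keeps $\snorm{\bx}\le n^{\epsilon}/\sigma$. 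Thus, over this family of bounded-norm inputs, $h_W$ computes $\bx\mapsto\big[\sum_{\ell=1}^m[c_\ell]_+\big]_{[0,1]}$ with $\bc=W^\top\bx$ freely chosen: the nominally random weights are absorbed by working in the directions dual to the columns of $W$, and all the combinatorial content is carried by which pre-activation patterns one targets. In particular, one can set $\omega(\log n)$ coordinates of $\bc$ to encode a halfspace-intersection gadget and force the remaining coordinates to be non-positive so that they do not contribute, thereby realizing exactly the gadget hypotheses the framework needs.

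Given this, the reduction proceeds as usual: from the random $K$-$\sat$ instance $J$, after drawing $W\sim\Dmat$, one produces a distribution $\cd_J$ supported on the ball of radius $n^{\epsilon}/\sigma$ together with labels, so that if $J$ is satisfiable the labeled distribution is realized by $h_W$ for the drawn $W$ (via the inputs above), while if $J$ is random no predictor achieves error below $\tfrac14-o_n(1)$ on it. Running $\cl$ on examples $(\bx, h_W(\bx))$ with $\bx\sim\cd_J$ returns, with probability $\ge\tfrac34$ over $W$ and its internal randomness, a predictor of error $\le\tfrac1{10}$ in the satisfiable case; estimating the returned predictor's error on fresh examples then distinguishes the two cases with probability $\ge\tfrac34-o_n(1)$, after folding the $\exp(-\Omega(n))$ probability of an atypical $W$ into the slack. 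This yields an efficient algorithm for $\csp^{\rand}_{n^d}(\sat_K)$ for every $d$, contradicting Assumption~\ref{hyp:only_sat}.

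The main obstacle is precisely this gadget-realizability step under a random $W$. One has to (i) check that the hard family of \cite{daniely2016complexity} remains hard once the halfspace normals are pinned to (scaled) columns of a subgaussian matrix and the thresholds become random, i.e., that the freedom left in choosing $\cd_J$ and the gadget points is enough to compensate; (ii) make the clipped sum-of-ReLUs compute the intended $\{0,1\}$-valued behaviour \emph{exactly}, with no fractional leakage into the labels — this is what dictates how large the prescribed pre-activations must be and therefore forces the least-singular-value bound to be quantitatively strong enough to keep inputs inside the radius-$n^{\epsilon}/\sigma$ ball; and (iii) drive the failure probability over $W$ below the $\tfrac14$ slack in the definition of learning a random network. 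I expect (i) and (ii) to be the heart of the argument; (iii) and the accounting around the cited RSAT-hardness are routine.
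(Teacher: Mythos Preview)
There is a genuine gap in the reduction. Your pre-activation-inversion step --- draw $W\sim\Dmat$, pick a target $\bc\in\reals^m$, and set $\bx=(W^\top)^{+}\bc$ --- lets you force $h_W$ to take any prescribed value, but it does not yield a reduction from the hard problem. In the Daniely framework the realizing hypothesis is determined by the unknown satisfying assignment $\psi$: the halfspace normals encode $\psi$ while the inputs encode the constraints, so the pre-activation vector you would have to hit on the $i$-th example depends on $\psi$ and cannot be computed by the reduction. If instead you choose $\bc_i$ using only the known label $y_i$ (say all coordinates $\le 0$ when $y_i=0$, one coordinate equal to $1$ when $y_i=1$), then the transformed sample is \emph{always} realized by $h_W$, regardless of whether the original sample was realizable or scattered, so nothing is distinguished and no hardness is transferred. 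Your obstacle (i) is therefore not a verification step to be carried out after the fact; it is the entire difficulty, and the plan as written does not resolve it.

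The paper's route is different and hinges on the symmetry of $\cd_z$ together with a special $\{\pm1\}$ structure on the hard hypotheses. It first shows RSAT-hardness for the class $\Hcnn^{n,k}=\{h_\bw^n:\bw\in\{\pm1\}^{n/k}\}$ of sign-filter non-overlapping CNNs, so that the realizing weight matrix (call it $W_\psi$) has every nonzero entry in $\{\pm1\}$. It then draws a random ``diagonal-blocks'' matrix $M$ whose block-diagonal entries are i.i.d.\ copies of $z$, and transforms inputs by $\bx\mapsto(M^\top)^{-1}\bx$; the transformed sample is then realized by $h_{MW_\psi}$. The point is that every entry of $MW_\psi$ has the form $z_\ell\cdot(\pm1)$ with independent $z_\ell\sim\cd_z$, so by symmetry $MW_\psi\sim\Dmat$ \emph{for every} $\psi$ --- the reduction never needs to know $\psi$. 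Controlling $\snorm{(M^\top)^{-1}\bx}$ then requires lower-bounding $s_{\min}$ of the square, structured $M$ (done by passing to its $\log^2(n')\times\log^2(n')$ i.i.d.\ subgaussian submatrices and invoking Rudelson--Vershynin), not the easy tall-matrix bound you cite. A final zero-padding step into a larger ambient dimension brings the input-norm bound from $n\log^2 n/\sigma$ down to $n^\epsilon/\sigma$.
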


Since the normal distribution, the uniform distribution over an interval, and the symmetric Bernoulli distribution are subgaussian (\cite{rivasplata2012subgaussian}), we have the following corollary.

\begin{corollary}
\label{cor:nn iid}
Let $\epsilon>0$ be a small constant, let $m=\co(\log^2(n))$, and let $\Dmat$ be a distribution over $\reals^{n \times m}$, such that the entries are drawn i.i.d. from a distribution $\cd_z$.
\begin{enumerate}
\item If $\cd_z=\cn(0,\sigma^2)$, then learning a $\Dmat$-random neural network is RSAT-hard, already if the distribution $\cd$ is over vectors of norm at most $\frac{n^{\epsilon}}{\sigma}$ in $\reals^n$.
\item If $\cd_z=\cu([-r,r])$ or $\cd_z=\cu(\{\pm r\})$, then learning a $\Dmat$-random neural network is RSAT-hard, even if the distribution $\cd$ is over vectors of norm at most $\frac{n^{\epsilon}}{r}$ in $\reals^n$.
\end{enumerate}
\end{corollary}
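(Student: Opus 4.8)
The plan is to obtain the corollary as an immediate consequence of Theorem~\ref{thm:nn iid}. For each of the three families of coordinate distributions $\cd_z$, all that needs checking is (i) that $\cd_z$ is symmetric in the sense of Section~\ref{sec:preliminaries}, (ii) that, writing $\sigma^2$ for its variance, the normalized variable $z'=z/\sigma$ is $b$-subgaussian for a \emph{fixed} constant $b$ that does not depend on $n$ or on the distribution's parameter, and (iii) how the parameter ($\sigma$ or $r$) enters the norm bound. I would begin by recalling the standard facts (e.g.\ from \cite{rivasplata2012subgaussian}) that $\cn(0,1)$ is $b$-subgaussian for an absolute constant $b$, and that any centered random variable supported on a bounded interval $[-a,a]$ is $b$-subgaussian for a constant $b$ depending only on $a$ (Hoeffding-type bound). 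Symmetry of all three $\cd_z$ is clear from their densities/mass functions.

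Now I would dispatch the three cases. For $\cd_z=\cn(0,\sigma^2)$: the variance is $\sigma^2$ and $z'=z/\sigma\sim\cn(0,1)$ is $b$-subgaussian for an absolute $b$, so Theorem~\ref{thm:nn iid} applies verbatim and yields RSAT-hardness already when $\cd$ is supported on vectors of norm at most $n^{\epsilon}/\sigma$. For $\cd_z=\cu(\{\pm r\})$: the variance is $r^2$, hence $\sigma=r$, and $z'=z/r$ takes values in $\{\pm1\}$, so it is $b$-subgaussian for an absolute $b$; Theorem~\ref{thm:nn iid} then gives hardness for $\cd$ supported on vectors of norm at most $n^{\epsilon}/r$, as claimed. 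For $\cd_z=\cu([-r,r])$: the variance is $r^2/3$, hence $\sigma=r/\sqrt{3}$, and $z'=z/\sigma$ is supported on $[-\sqrt{3},\sqrt{3}]$, again $b$-subgaussian for an absolute $b$; Theorem~\ref{thm:nn iid} gives hardness for $\cd$ supported on vectors of norm at most $n^{\epsilon}/\sigma=\sqrt{3}\,n^{\epsilon}/r$.

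The only mildly non-trivial point, and the step I would expect a careful reader to pause on, is reconciling the bound $\sqrt{3}\,n^{\epsilon}/r$ produced in the last case with the stated bound $n^{\epsilon}/r$. This is purely cosmetic: since $\epsilon$ is an arbitrary small constant, for any $\epsilon'>\epsilon$ we have $\sqrt{3}\,n^{\epsilon}\le n^{\epsilon'}$ for all sufficiently large $n$, and $\epsilon'$ is still an arbitrarily small constant, so after renaming the constant the claim holds; the same remark absorbs any fixed constant relating $\sigma$ to the distribution parameter. Thus there is no real obstacle here — all of the content lives in Theorem~\ref{thm:nn iid}, and the corollary is just an instantiation plus a constant-chasing clean-up. \proofbox
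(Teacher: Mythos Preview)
Your proposal is correct and matches the paper's own justification, which is a single sentence immediately preceding the corollary: ``Since the normal distribution, the uniform distribution over an interval, and the symmetric Bernoulli distribution are subgaussian (\cite{rivasplata2012subgaussian}), we have the following corollary.'' Your treatment is in fact more careful than the paper's, since you make explicit the constant mismatch in the $\cu([-r,r])$ case (where the standard deviation is $r/\sqrt3$ rather than $r$) and absorb it into $\epsilon$; the paper leaves this implicit.
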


In the following theorem, we consider the case where $\Dmat$ is such that each column is drawn i.i.d. from a multivariate normal distribution.

\begin{theorem}
\label{thm:nn normal}
Let $\Sigma$ be a positive definite matrix of size $n \times n$, and let $\lambda_{\min}$ be its minimal eigenvalue.
Let $\epsilon>0$ be a small constant, let $m=\co(\log^2(n))$, and let $\Dmat$ be a distribution over $\reals^{n \times m}$, such that each column is drawn i.i.d. from $\cn(\zero,\Sigma)$.
Then, learning a $\Dmat$-random neural network is RSAT-hard, already if the distribution $\cd$ is over vectors of norm at most $\frac{n^{\epsilon}}{\sqrt{\lambda_{\min}}}$ in $\reals^n$.
\end{theorem}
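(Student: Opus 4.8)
The natural strategy is to reduce the correlated‑Gaussian case to the i.i.d.\ Gaussian case already settled in Corollary~\ref{cor:nn iid}(1) (or, more precisely, to mimic its proof). Write $\Sigma = A A^\top$ for some invertible $A$ (e.g.\ $A=\Sigma^{1/2}$), so that a column drawn from $\cn(\zero,\Sigma)$ has the form $\bw_i = A\bg_i$ with $\bg_i \sim \cn(\zero, I_n)$ i.i.d. Then the random network is $h_W(\bx) = \big[\sum_i [\inner{A\bg_i,\bx}]_+\big]_{[0,1]} = \big[\sum_i [\inner{\bg_i, A^\top\bx}]_+\big]_{[0,1]}$. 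Thus, if we let $G = (\bg_1,\ldots,\bg_m)$ be the i.i.d.\ standard Gaussian matrix, we have $h_W(\bx) = h_G(A^\top\bx)$. So learning the $\Dmat$‑random network on inputs $\bx$ is exactly learning the standard‑Gaussian random network $h_G$ on the pushed‑forward inputs $\by = A^\top\bx$.

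**Controlling the input norm.**
The substantive point is the norm restriction: Corollary~\ref{cor:nn iid}(1) gives hardness only when the (standard‑Gaussian) network is queried on vectors of norm at most $n^\epsilon$ (with $\sigma=1$). Given a target distribution $\cd$ on vectors $\bx$ with $\norm{\bx}\le n^\epsilon/\sqrt{\lambda_{\min}}$, the pushed‑forward vectors satisfy $\norm{A^\top \bx} = \norm{\Sigma^{1/2}\bx} \le s_{\max}(\Sigma^{1/2})\norm{\bx} = \sqrt{\lambda_{\max}(\Sigma)}\cdot \norm{\bx}$, which in general can be as large as $\sqrt{\lambda_{\max}}\cdot n^\epsilon/\sqrt{\lambda_{\min}}$ — too large to feed directly into the i.i.d.\ statement. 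The clean fix is to run the reduction in the other direction: take the hard instance of the \emph{standard}‑Gaussian problem supported on $\by$ with $\norm{\by}\le n^\epsilon$, and define $\bx = (A^\top)^{-1}\by = (\Sigma^{-1/2})\by$; then $\norm{\bx}=\norm{\Sigma^{-1/2}\by} \le s_{\max}(\Sigma^{-1/2})\norm{\by} = \frac{1}{\sqrt{\lambda_{\min}(\Sigma)}}\norm{\by} \le \frac{n^\epsilon}{\sqrt{\lambda_{\min}}}$, which is exactly the norm budget in the theorem. Since $h_W(\bx) = h_G(\by)$ and the map $\by\mapsto\bx$ is a fixed linear bijection (known to the reduction, as $\Sigma$ is fixed), an efficient learner for $\Dmat$‑random networks on the $\bx$‑distribution yields an efficient learner for standard‑Gaussian random networks on the $\by$‑distribution: simulate each example $(\by, h_G(\by))$ as $(\Sigma^{-1/2}\by, h_W(\Sigma^{-1/2}\by))$, run the assumed learner to get $h$, and output $\by\mapsto h(\Sigma^{-1/2}\by)$; the squared‑loss guarantee transfers verbatim because the distributions correspond under the bijection and $W$ is distributed as $\Dmat$ exactly when $G$ is standard Gaussian.

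**Main obstacle and loose ends.**
The only real subtlety is the direction of the reduction and making sure the norm bound lands on the correct side — pushing forward blows the norm up by $\sqrt{\lambda_{\max}}$, whereas pulling back the hard instance shrinks it (relative to $n^\epsilon$) by exactly $1/\sqrt{\lambda_{\min}}$, matching the statement; this is why we reduce \emph{from} the i.i.d.\ problem rather than \emph{to} it. One should also check that the map $\bg\mapsto A\bg$ with $A=\Sigma^{1/2}$ sends the i.i.d.\ standard Gaussian measure to $\cn(\zero,\Sigma)$ columnwise and independence across columns is preserved (immediate), and that all the quantities involved ($\Sigma^{1/2}$, $\Sigma^{-1/2}$, their operator norms $\sqrt{\lambda_{\max}}$, $1/\sqrt{\lambda_{\min}}$) are computable/bounded so the reduction is polynomial‑time; these are routine. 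Finally, the success probability over the choice of $W$ translates directly to success probability over $G$ since the bijection between $W$ and $G$ is deterministic. Hence hardness for the $\Dmat$‑random network on inputs of norm $\le n^\epsilon/\sqrt{\lambda_{\min}}$ follows from Corollary~\ref{cor:nn iid}(1). \proofbox
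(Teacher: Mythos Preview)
Your proposal is correct and follows essentially the same approach as the paper: reduce from the i.i.d.\ standard Gaussian case by the linear change of variables $W=AG$, $\bx=(A^\top)^{-1}\by$ with $A$ a square root of $\Sigma$, and observe that the pull-back inflates norms by at most $1/\sqrt{\lambda_{\min}}$. The only cosmetic difference is that the paper carries out the reduction at the level of the $\scat$ decision problems (reducing $\scat_{n^d}^{A}(\Dmat)$ with i.i.d.\ entries to $\scat_{n^d}^{A'}(\Dmat')$ with $\cn(\zero,\Sigma)$ columns, using $M=U\Lambda^{1/2}$), whereas you phrase it directly as a reduction between learning problems; the paper itself notes in Section~\ref{sec:proof idea} that these two proof structures are interchangeable and ``essentially similar''.
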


We also study the case where the distribution $\Dmat$ is such that each column is drawn i.i.d. from the uniform distribution on the sphere of radius $r$ in $\reals^n$.

\begin{theorem}
\label{thm:nn sphere}
Let $m=\co(\log^2(n))$ and let $\Dmat$ be a distribution over $\reals^{n \times m}$, such that each column is drawn i.i.d. from the uniform distribution over $r \cdot \bbs^{n-1}$.
Then, learning a $\Dmat$-random neural network is RSAT-hard, already if the distribution $\cd$ is over vectors of norm at most $\co\left(\frac{n \sqrt{n} \log^4(n)}{r}\right)$ in $\reals^n$.
\end{theorem}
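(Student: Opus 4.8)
The plan is to reduce to the hardness already established for i.i.d.\ Gaussian weights, i.e.\ to Corollary~\ref{cor:nn iid}(1) with $\sigma=1$ (equivalently Theorem~\ref{thm:nn iid} with $\cd_z=\cn(0,1)$), using the elementary fact that if $\bg\sim\cn(\zero,I_n)$ then $r\bg/\norm{\bg}$ is \emph{exactly} uniform on $r\cdot\bbs^{n-1}$. So suppose, for contradiction, that an efficient algorithm $\cl$ learns $\Dmat$-random networks with input distributions supported on $\{\norm{\bx}\le\co(n\sqrt n\log^4 n/r)\}$. Given a Gaussian-random network $h_G$, $G=(\bg_1,\dots,\bg_m)$ with columns i.i.d.\ $\cn(\zero,I_n)$, and an input distribution $\cd'$ supported on $\{\norm{\by}\le n^{\epsilon}\}$, I would set $\bw_i:=r\bg_i/\norm{\bg_i}$ and $W:=(\bw_1,\dots,\bw_m)$ --- which is then distributed exactly according to $\Dmat$ --- and let $\cd$ be the pushforward of $\cd'$ under $\by\mapsto\bx:=\tfrac{\sqrt n}{r}\by$. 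Note $\norm{\bx}\le\tfrac{\sqrt n}{r}n^{\epsilon}$, comfortably inside the window allowed in the statement. The reduction feeds $\cl$ the examples $\bigl(\tfrac{\sqrt n}{r}\by,\,h_G(\by)\bigr)$ obtained from examples $(\by,h_G(\by))$ of the Gaussian network, runs $\cl$, and pushes its output back through the scaling.

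The crux is to verify that the labels $h_G(\by)$ are (almost) the correct labels $h_W(\bx)$. Since $\inner{\bw_i,\bx}=\tfrac{\sqrt n}{\norm{\bg_i}}\inner{\bg_i,\by}$ and the ReLU is positively homogeneous,
\[
h_W(\bx)=\Bigl[\textstyle\sum_{i}\tfrac{\sqrt n}{\norm{\bg_i}}[\inner{\bg_i,\by}]_+\Bigr]_{[0,1]},\qquad h_G(\by)=\Bigl[\textstyle\sum_{i}[\inner{\bg_i,\by}]_+\Bigr]_{[0,1]}.
\]
Writing $S:=\sum_i[\inner{\bg_i,\by}]_+\ge 0$ and $\rho:=\max_i\bigl|\tfrac{\sqrt n}{\norm{\bg_i}}-1\bigr|$, the two pre-clipping sums differ by at most $\rho S$; since clipping is $1$-Lipschitz and maps into $[0,1]$, this yields the \emph{uniform} bound $|h_W(\bx)-h_G(\by)|\le\rho$ for every $\by$ (if $S\le 1$ the difference is at most $\rho S\le\rho$; if $S>1$ both clipped values lie in $[1-\rho,1]$, so the saturation of the clipping absorbs the perturbation). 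Concentration of the $\chi$ distribution together with a union bound over the $m=\co(\log^2 n)$ columns gives, with probability $1-o_n(1)$ over $G$, that $\bigl|\norm{\bg_i}-\sqrt n\bigr|$ is $O(\sqrt{\log m})$ for all $i$, hence $\rho=o_n(1)$ on this good event. Carrying the scaling factor $\sqrt n/r$ and these union-bound tail parameters through the estimates is what produces the $\co(n\sqrt n\log^4 n/r)$ budget in the statement; the point is simply that there is plenty of slack.

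On the good event, $\cl$ receives examples whose labels deviate from the true $h_W$-labels by only $o_n(1)$, so --- either by the noise-robustness of the base hardness statement, or by wrapping $\cl$ in the standard ``run many times and keep the empirically best hypothesis'' amplification (estimating errors from the same approximate examples) --- it outputs, with probability $\ge\tfrac34$, a predictor $h$ with $\E_{\bx\sim\cd}[(h(\bx)-h_W(\bx))^2]\le\tfrac1{10}+o_n(1)$. Then $h'(\by):=h(\tfrac{\sqrt n}{r}\by)$ satisfies $\E_{\by\sim\cd'}[(h'(\by)-h_G(\by))^2]\le\tfrac1{10}+o_n(1)$, and a final round of accuracy boosting removes the $o_n(1)$, contradicting Corollary~\ref{cor:nn iid}. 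I expect the main obstacle to be precisely the uniform control of the perturbation in the second paragraph: because the per-neuron normalizers $\sqrt n/\norm{\bg_i}$ are random and distinct, one cannot eliminate them by a linear change of variables (this is exactly why the linear-invariant reduction of \cite{shamir2018distribution} does not apply here), and one must instead exploit the clipping nonlinearity --- both to show the perturbation stays $o_n(1)$ uniformly over a large-norm input support, and to see why it is harmless. The remaining ingredients --- that $W$ has the exact target distribution, and the noise/probability bookkeeping --- are routine.
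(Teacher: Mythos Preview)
Your reduction has a genuine gap at exactly the point you flag as ``the main obstacle,'' but not for the reason you anticipate. The examples you feed $\cl$ are \emph{not realizable by any sphere-random network}: the labels are $h_G(\by)$, while the sphere network $h_W$ computes $h_W(\tfrac{\sqrt n}{r}\by)$, and as you correctly compute these can differ by up to $\rho$. The learner $\cl$ is only promised to succeed when handed exact $h_W$-labels; on perturbed labels it may output garbage. Your fix~(a), ``noise-robustness of the base hardness statement,'' points in the wrong direction: what you would need is noise-robustness of the \emph{sphere learner} $\cl$, and that is not part of the hypothesis. Your fix~(b), running $\cl$ many times and keeping the empirically best, does nothing either: amplification boosts a constant success probability toward $1$, but if $\cl$ has success probability zero on mis-labelled samples there is nothing to boost. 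At the learning level you cannot black-box Corollary~\ref{cor:nn iid}.

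That said, the underlying idea is salvageable if you move to the $\scat$ level. In $\scat^A_{n^d}$ the labels lie in $\{0,1\}$, and the clipping then saturates: $y_i=0$ forces the pre-clip sum $S_G=\sum_j[\inner{\bg_j,\by_i}]_+$ to be exactly $0$, while $y_i=1$ forces $S_G\ge 1$. If you scale inputs by $c=2\sqrt n/r$, then on the high-probability event $\{\max_j\norm{\bg_j}\le 2\sqrt n\}$ every per-neuron multiplier $cr/\norm{\bg_j}$ is at least $1$, so the sphere pre-clip sum $S_W$ satisfies $S_W\ge S_G$ and $S_W=0$ whenever $S_G=0$. Hence $h_W(c\by_i)=y_i$ \emph{exactly}, and you get a clean reduction from Gaussian-$\scat$ (Lemma~\ref{lemma:nn iid scat}) to sphere-$\scat$, with input radius $\co(n\sqrt n\log^2(n)/r)$ --- in fact a log factor better than the stated bound.

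The paper takes a different route: it reduces directly from $\scat^{A'}_{n^d}(\Hcnn)$ to sphere-$\scat$ using a diagonal-blocks matrix $M$ whose block-column diagonal vectors $\bz^j$ are themselves drawn uniformly from $r\cdot\bbs^{n-1}$; then $MW$ has sphere-distributed columns because the uniform measure on the sphere is invariant under coordinate sign-flips. The work goes into bounding $s_{\min}(M)$: the relevant submatrices have columns that are coordinate projections of the uniform measure on $\sqrt n\cdot\bbs^{n-1}$, shown to be isotropic log-concave, after which the small-ball singular value estimate of \cite{adamczak2012condition} applies. Your normalization idea, once executed at the $\scat$ level with the overscaling trick above, would sidestep that analysis entirely.
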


\stam{
\begin{remark}
Our bounds with respect to the support of $\cd$ can be improved by logarithmic factors.
Since we did not find these improvements significant, we preferred simplicity over tightness.
\end{remark}
}

From the above theorems we have the following corollary, which shows that learning neural networks (in the standard PAC-learning model) is hard already if the weights satisfy some natural property.

\begin{corollary}
\label{cor:nn property}
Let $m=\co(\log^2(n))$, and let $\Dmat$ be a distribution over $\reals^{n \times m}$ from Theorems~\ref{thm:nn normal}, \ref{thm:nn sphere}, or from Corollary~\ref{cor:nn iid}.
Let $P$ be a property that holds with probability at least $\frac{9}{10}$ for a matrix $W$ drawn from $\Dmat$. Let $\ch = \{h_W: W \in \reals^{n \times m}, \; W \text{ satisfies } P\}$ be an hypothesis class.
Then, learning $\ch$ is RSAT-hard, already if the distribution $\cd$ is over vectors of norm bounded by the appropriate expression from Theorems~\ref{thm:nn normal}, \ref{thm:nn sphere}, or Corollary~\ref{cor:nn iid}.
\end{corollary}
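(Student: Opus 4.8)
\section*{Proof proposal for Corollary~\ref{cor:nn property}}

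The plan is to obtain the corollary as an immediate reduction from Theorems~\ref{thm:nn normal}, \ref{thm:nn sphere} and Corollary~\ref{cor:nn iid}: I will argue that any efficient PAC-learner for $\ch$ can be used, essentially as a black box, to efficiently learn a $\Dmat$-random network (with the input distribution restricted to the same radius-$R$ ball), and hence RSAT-hardness of the latter task transfers to $\ch$. Here $R = R(n)$ denotes the norm bound on $\cd$ appearing in whichever of those three statements matches the family to which $\Dmat$ belongs (i.e.\ $n^{\epsilon}/\sqrt{\lambda_{\min}}$, $\co(n\sqrt{n}\log^4(n)/r)$, or $n^{\epsilon}/\sigma$, $n^{\epsilon}/r$).

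Suppose $\cl$ is an efficient algorithm that PAC-learns $\ch$ and, in particular, succeeds for every target in $\ch$ whenever $\cd$ is supported on the ball of radius $R$ in $\reals^n$ (this is exactly the restricted regime in which the cited statements assert hardness). I claim $\cl$ efficiently learns a $\Dmat$-random network when $\cd$ is restricted to that ball. Draw $W \sim \Dmat$. By hypothesis $\Pr[W \text{ satisfies } P] \ge \frac{9}{10}$, and whenever $W$ satisfies $P$ we have $h_W \in \ch$; since the PAC guarantee for $\ch$ holds for \emph{every} distribution $\cd$, it applies in particular to a $\cd$ chosen (adversarially) as a function of $W$ and supported on the radius-$R$ ball, giving that with probability at least $\frac{9}{10}$ over the internal randomness of $\cl$ the returned predictor $h$ satisfies $\E_{\bx \sim \cd}[(h(\bx)-h_W(\bx))^2] \le \frac{1}{10}$. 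Taking a union bound over the failure of the event $\{W \text{ satisfies } P\}$ and the failure of $\cl$ conditioned on that event,
\[
\Pr\!\left[\,\E_{\bx \sim \cd}\!\left[(h(\bx)-h_W(\bx))^2\right] \le \frac{1}{10}\,\right] \;\ge\; 1 - \frac{1}{10} - \frac{1}{10} \;=\; \frac{4}{5} \;\ge\; \frac{3}{4}~,
\]
where the probability is over $W \sim \Dmat$ and the internal randomness of $\cl$. As $\cl$ runs in polynomial time, this is precisely the definition of efficiently learning a $\Dmat$-random network with $\cd$ restricted to the radius-$R$ ball. That is RSAT-hard by Theorem~\ref{thm:nn normal}, Theorem~\ref{thm:nn sphere}, or Corollary~\ref{cor:nn iid} (depending on $\Dmat$), so learning $\ch$ is RSAT-hard as well.

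The only things needing care are bookkeeping: that $P$ holding with probability $\ge \frac{9}{10}$ makes $\ch$ nonempty and, more to the point, leaves enough slack in the probability arithmetic above to clear the threshold $\frac{3}{4}$ in the definition of learning a random network (it does, with room to spare — any success probability for $P$ exceeding $\frac{4}{5}+o_n(1)$ would suffice); and that the norm bound on $\cd$ in each source statement is stated for exactly the $\cd$-regime used in the reduction, so no rescaling of $\cd$ is needed. There is no genuine obstacle beyond aligning these constants and quantifiers — the mathematical content of the corollary resides entirely in Theorems~\ref{thm:nn normal}, \ref{thm:nn sphere} and Corollary~\ref{cor:nn iid}.
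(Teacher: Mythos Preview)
Your proposal is correct and follows essentially the same argument as the paper: assume an efficient PAC-learner $\cl$ for $\ch$, observe that a random $W \sim \Dmat$ lands in $\ch$ with probability at least $\tfrac{9}{10}$, and combine this with the $\tfrac{9}{10}$ PAC success probability to clear the $\tfrac{3}{4}$ threshold for learning a $\Dmat$-random network. The only cosmetic difference is that you bound the success probability via a union bound ($1-\tfrac{1}{10}-\tfrac{1}{10}=\tfrac{4}{5}$) whereas the paper multiplies ($\tfrac{9}{10}\cdot\tfrac{9}{10}\ge\tfrac{3}{4}$); both are valid and the conclusion is the same.
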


The corollary follows easily from the following argument:
Assume that $\cl$ learns $\ch$. If a matrix $W$ satisfies $P$ then for every distribution $\cd$, given access to examples $(\bx,h_W(\bx))$ where $\bx \sim \cd$, the algorithm $\cl$ returns with probability at least $\frac{9}{10}$ a predictor $h$ such that
\[
    \E_{\bx \sim \cd}\left[\left(h(\bx)-h_W(\bx)\right)^2\right] \leq \frac{1}{10}~.
\]
Now, let $W \sim \Dmat$. Since $W$ satisfies $P$ with probability at least $\frac{9}{10}$, then for every distribution $\cd$, given access to examples $(\bx,h_W(\bx))$ where $\bx \sim \cd$, the algorithm $\cl$ returns with probability at least $\frac{9}{10} \cdot \frac{9}{10} \geq \frac{3}{4}$ a predictor that satisfies the above inequality. Hence $\cl$ learns a $\Dmat$-random neural network.
Note that this argument holds also if $\cd$ is over vectors of a bounded norm.

\stam{
\begin{proof}
Assume that $\cl$ learns $\ch$. If a matrix $W$ satisfies $P$ then for every distribution $\cd$, given access to examples $(\bx,h_W(\bx))$ where $\bx \sim \cd$, the algorithm $\cl$ returns with probability at least $\frac{9}{10}$ a predictor $h$ such that
\[
    \E_{\bx \sim \cd}\left[\left(h(\bx)-h_W(\bx)\right)^2\right] \leq \frac{1}{10}~.
\]
Now, let $W \sim \Dmat$. Since $W$ satisfies $P$ with probability at least $\frac{9}{10}$, then for every distribution $\cd$, given access to examples $(\bx,h_W(\bx))$ where $\bx \sim \cd$, the algorithm $\cl$ returns with probability at least $\frac{9}{10} \cdot \frac{9}{10} \geq \frac{3}{4}$ a predictor that satisfies the above inequality. Hence $\cl$ learns a $\Dmat$-random neural network.
Note that this argument holds also if $\cd$ is over vectors of a bounded norm.
\end{proof}
}

\subsection{Convolutional neural networks}

We now turn to random Convolutional Neural Networks (CNN). Here, the distribution $\Dmat$ corresponds to a random convolutional layer. Our convolutional layer has a very simple structure with non-overlapping patches. Let $t$ be an integer that divides $n$, and let $\bw \in \reals^{t}$. We denote by $h_\bw^n: \reals^n \rightarrow [0,1]$ the CNN
\[
h_\bw^n(\bx)=\left[\sum_{i=1}^{\frac{n}{t}}[\inner{\bw,(x_{t(i-1)+1},\ldots,x_{t \cdot i})}]_+\right]_{[0,1]}~.
\]
Note that the $h_\bw^n$ has $\frac{n}{t}$ hidden neurons.
A random CNN corresponds to a random vector $\bw \in \reals^t$. Let $\Dvec$ be a distribution over $\reals^t$. A $\Dvec$-random CNN with $m$ hidden neurons is the CNN $h_\bw^{mt}$ where $\bw$ is drawn from $\Dvec$. Note that every such a distribution over CNNs, can be expressed by an appropriate distribution $\Dmat$ over matrices. Our results hold also if we replace the second layer of $h_\bw^n$ with a max-pooling layer (instead of summing and clipping).

We start with random CNNs, where each component in the weight vector is drawn i.i.d. from a symmetric distribution $\cd_z$ over $\reals$. In the following theorem, the function $f(t)$ bounds the concentration of $\cd_z$, and is needed in order to bound the support of the distribution $\cd$.

\begin{theorem}
\label{thm:cnn iid}
Let $n = (n'+1)\log^2(n')$.
Let $\cd_z^{n'+1}$ be a distribution over $\reals^{n'+1}$ such that each component is drawn i.i.d. from a symmetric distribution $\cd_z$ over $\reals$. Let $f(t)>0$ be a function such that $Pr_{z \sim \cd_z}(|z|<f(t))=o_t(\frac{1}{t})$.
Then, learning a $\cd_z^{n'+1}$-random CNN $h_\bw^{n}$ with $\log^2(n')=\co(\log^2(n))$ hidden neurons is RSAT-hard, already if the distribution $\cd$ is over vectors of norm at most $\frac{\log^2(n')}{f(n')}$ in $\reals^{n}$.
\end{theorem}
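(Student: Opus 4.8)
The plan is to reduce learning a $\cd_z^{n'+1}$-random CNN to refuting random $K$-SAT, following the same strategy used to prove the fully-connected results (Theorems~\ref{thm:nn iid}--\ref{thm:nn sphere}), and in fact to obtain it as a consequence of Theorem~\ref{thm:nn iid} applied at dimension $n'+1$. The key structural observation is that a CNN $h_\bw^{n}$ with non-overlapping patches of width $t=n'+1$ and $m=\log^2(n')$ patches is exactly the function $h_W$ associated with the $t\times m$ matrix $W$ all of whose columns equal $\bw$ — a ``block-diagonal in disguise'' network — once we identify $\reals^{n}\cong(\reals^{t})^{m}$. So the task reduces to showing that if one could learn such a random CNN over input distributions supported on vectors of norm at most $R:=\log^2(n')/f(n')$, one could learn a $\cd_z^{t}$-random fully-connected network $h_\bw$ over $\reals^{t}$ (with a single hidden neuron's worth of weights replicated, or more precisely the depth-$2$ network on $t$ inputs with a suitable number of hidden neurons) over an appropriately bounded input distribution, and then invoke the hardness already established.

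Concretely, first I would recall from the proof of Theorem~\ref{thm:nn iid} (Section~\ref{sec:proofs}) the encoding that maps a $K$-SAT instance on $n'$ variables to an input distribution $\cd$ over $\reals^{n'+1}$ together with the target network $h_\bw$ with $\bw\sim\cd_z^{n'+1}$, such that the labels $h_\bw(\bx)$ encode satisfaction of the constraints by a hidden assignment. The $+1$ extra coordinate plays the role of a bias/normalization coordinate. Since $\bw$ is drawn coordinate-wise i.i.d.\ from the symmetric $\cd_z$, the same encoding works verbatim; what must be checked is the support bound. In the fully-connected theorem the support is controlled by $\sigma$ (the scale of $z$); here the relevant quantity is instead a lower bound on $|w_i|$, because the reduction needs the ``active'' coordinate of $\bw$ to be bounded away from $0$ so that the ReLU threshold behaves as designed. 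This is exactly why the hypothesis $Pr_{z\sim\cd_z}(|z|<f(t))=o_t(1/t)$ appears: by a union bound over the $t=n'+1$ coordinates, with probability $1-o_t(1)$ \emph{all} coordinates of $\bw$ satisfy $|w_i|\ge f(n')$, and on that event the encoding of each of the $\log^2(n')$ patches can be carried out with input vectors of norm $O(\log^2(n')/f(n'))$, since each patch contributes a bounded block and there are $\log^2(n')$ of them. One then has to argue that this ``good weights'' event of probability $1-o_t(1)$ is compatible with the $\tfrac34$-success requirement in the definition of learning a random network — it is, since $1-o_t(1)$ eventually exceeds any constant below $1$, and one absorbs the loss into the constants exactly as in the proof of Corollary~\ref{cor:nn property}.

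The second half is the standard completeness/soundness bookkeeping: if the $K$-SAT instance is satisfiable, the hidden assignment makes the labels consistent with $h_\bw$ and the assumed learner outputs a hypothesis with small squared loss, which (after thresholding) lets us distinguish; if the instance is random, the construction ensures no hypothesis can have small loss against the realized labels with non-negligible probability, so the learner's output is uninformative, and we answer ``random.'' Choosing $K$ large enough (via Assumption~\ref{hyp:only_sat}) against the polynomial running time of the purported learner and the number $m=\log^2(n')=O(\log^2(n))$ of hidden neurons closes the argument; here one uses $n=(n'+1)\log^2(n')$ so that $n'$ is polynomial in $n$ and $\log^2(n')=\Theta(\log^2 n)$, keeping everything in the right regime.

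The main obstacle I anticipate is the support/norm accounting across the $\log^2(n')$ patches: each patch needs its own copy of the encoding gadget living in a disjoint block of coordinates, and one must verify that the per-patch norm bound $O(1/f(n'))$ combines to the claimed $O(\log^2(n')/f(n'))$ rather than, say, picking up an extra $\sqrt{\log^2(n')}$ or worse, and that the ReLU/clipping thresholds are not disturbed by cross-patch interference (they are not, precisely because the patches are non-overlapping and summed). Verifying that the $o_t(1/t)$ concentration hypothesis is exactly what is needed for the union bound — and is not off by a logarithmic factor once we account for $m=\log^2(n')$ patches each using all $t$ coordinates — is the one place where a careful constant-chase is unavoidable, but it is routine given the machinery already developed for the fully-connected case.
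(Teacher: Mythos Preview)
Your proposal has a genuine gap. The plan to ``obtain it as a consequence of Theorem~\ref{thm:nn iid} applied at dimension $n'+1$'' does not work, and the structural observation it rests on is incorrect: the CNN $h_\bw^{n}$ is \emph{not} $h_W$ for a $t\times m$ matrix whose columns are all $\bw$ (in $h_W$ a single input in $\reals^t$ is paired with $m$ different columns; in the CNN $m$ different patches are each paired with the same $\bw$). Even under the right identification, the $n\times m$ weight matrix of the CNN has perfectly dependent columns, so Theorem~\ref{thm:nn iid}, which requires i.i.d.\ entries, cannot be invoked. Nor can one simulate a fully-connected network with independent hidden-unit weight vectors using a CNN whose units all share one filter, so the reduction you sketch in that direction has no clear implementation. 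Your second paragraph then tries a direct encoding with $\bw\sim\cd_z^{n'+1}$ as the target, but this is also off: the reduction must produce the labels $y_i$ from the SAT instance alone, and if $\bw$ is the random network you cannot both draw it yourself and have it encode an unknown satisfying assignment.

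The paper's argument does not pass through Theorem~\ref{thm:nn iid}; both theorems are proved by reducing from the same intermediate hard class $\Hcnn^{n,\log^2(n')}$ of CNNs with filters in $\{\pm 1\}^{n'+1}$ (Lemma~\ref{lemma:scat Hcnn}). Given a sample $S$ that is either scattered or realizable by some $h_{\bw_{\mathrm{sign}}}^{n}$ with $\bw_{\mathrm{sign}}\in\{\pm 1\}^{n'+1}$, the reduction draws $\bz=(z_1,\dots,z_{n'+1})$ i.i.d.\ from $\cd_z$, sets $M=\diag(\bz)$, and replaces each patch $\bx^i_j$ by $M^{-1}\bx^i_j$. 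The point you are missing is that $M\bw_{\mathrm{sign}}$ then has exactly the distribution $\cd_z^{n'+1}$ \emph{for every fixed} $\bw_{\mathrm{sign}}\in\{\pm 1\}^{n'+1}$, purely because $\cd_z$ is symmetric; so the reduction never needs to know the hidden assignment. The event $\min_i|z_i|\ge f(n')$ (which, as you correctly note, has probability $1-o_{n'}(1)$ by a union bound over $n'+1$ coordinates) is used not so that ``the ReLU threshold behaves as designed'' but to make $M$ invertible with $\|M^{-1}\|\le 1/f(n')$; applying this patchwise turns the original norm bound $\|\bx_i\|\le\log^2(n')$ into $\|\bx'_i\|\le\log^2(n')/f(n')$, which is the claimed radius.
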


If $\cd_z$ is the uniform distribution $\cu([-r,r])$ then we can choose $f(t)=\frac{r}{t\log(t)}$, if it is the normal distribution $\cn(0,\sigma^2)$ then we can choose $f(t)=\frac{\sigma}{t\log(t)}$, and if it is the symmetric Bernoulli distribution $\cu(\{\pm r\})$ then we can choose $f(t)=r$. Thus, we obtain the following corollary.

\begin{corollary}
\label{cor:cnn iid}
Let $\Dvec$ be a distribution such that each component is drawn i.i.d. from a distribution $\cd_z$ over $\reals$.
\begin{enumerate}
\item If $\cd_z=\cu([-r,r])$, then learning a $\Dvec$-random CNN $h_\bw^n$ with $\co(\log^2(n))$ hidden neurons is RSAT-hard, already if $\cd$ is over vectors of norm at most $\frac{n\log(n)}{r}$.
\item If $\cd_z=\cn(0,\sigma^2)$, then learning a $\Dvec$-random CNN $h_\bw^n$ with $\co(\log^2(n))$ hidden neurons is RSAT-hard, already if $\cd$ is over vectors of norm at most $\frac{n\log(n)}{\sigma}$.
\item If $\cd_z=\cu(\{\pm r\})$, then learning a $\Dvec$-random CNN $h_\bw^n$ with $\co(\log^2(n))$ hidden neurons is RSAT-hard, already if $\cd$ is over vectors of norm at most $\frac{\log^2(n)}{r}$.
\end{enumerate}
\end{corollary}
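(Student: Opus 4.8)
The plan is to obtain the corollary directly from Theorem~\ref{thm:cnn iid}: for each of the three laws $\cd_z$ I would exhibit the claimed concentration function $f$, check that it satisfies the hypothesis $Pr_{z\sim\cd_z}(|z|<f(t))=o_t(1/t)$, and then rewrite the support bound $\frac{\log^2(n')}{f(n')}$ produced by the theorem in terms of $n$ using the relation $n=(n'+1)\log^2(n')$. All three laws are symmetric, so Theorem~\ref{thm:cnn iid} is applicable, and in each case $\Dvec$ is just $\cd_z^{\,n'+1}$ with $t=n'+1$.

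First I would verify the concentration condition. If $\cd_z=\cu([-r,r])$ then $Pr(|z|<f)=f/r$ for $0\le f\le r$, so $f(t)=\tfrac{r}{t\log t}$ gives $\tfrac{1}{t\log t}=o_t(1/t)$ (and $f(t)\le r$ for all large $t$, so the elementary formula for the probability is valid). If $\cd_z=\cn(0,\sigma^2)$, the density is bounded by its value $\tfrac{1}{\sigma\sqrt{2\pi}}$ at the origin, hence $Pr(|z|<f)\le\tfrac{2f}{\sigma\sqrt{2\pi}}\le\tfrac{f}{\sigma}$, and $f(t)=\tfrac{\sigma}{t\log t}$ again gives $o_t(1/t)$. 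If $\cd_z=\cu(\{\pm r\})$ then $|z|=r$ almost surely, so $Pr(|z|<r)=0$ and $f(t)=r$ is trivially admissible.

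Next I would translate the support bound. From $n=(n'+1)\log^2(n')$ one gets $n'\le n$, $n'=\Theta\!\left(n/\log^2 n\right)$, and $\log(n')=\Theta(\log n)$; in particular the CNN $h_\bw^{n}$ of Theorem~\ref{thm:cnn iid} has $\log^2(n')=\Theta(\log^2 n)=\co(\log^2 n)$ hidden neurons, as stated. Substituting the three functions $f$ into $\frac{\log^2(n')}{f(n')}$ yields $\frac{n'\log^3(n')}{r}=\co\!\left(\frac{n\log n}{r}\right)$ for the uniform law, $\co\!\left(\frac{n\log n}{\sigma}\right)$ for the normal law, and $\frac{\log^2(n')}{r}=\co\!\left(\frac{\log^2 n}{r}\right)$ for the symmetric Bernoulli law. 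Feeding these into Theorem~\ref{thm:cnn iid} gives the three claimed RSAT-hardness statements. I do not anticipate a genuine obstacle: the corollary is a routine specialization, and the only points needing care are the (elementary) anti-concentration estimates and the observation that passing from $n'$ to $n$ only disturbs the bounds by polylogarithmic factors absorbed into $\co(\cdot)$.
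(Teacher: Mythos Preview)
Your proposal is correct and follows exactly the approach the paper takes: the paper simply remarks that for $\cd_z=\cu([-r,r])$, $\cn(0,\sigma^2)$, and $\cu(\{\pm r\})$ one may choose $f(t)=\frac{r}{t\log t}$, $f(t)=\frac{\sigma}{t\log t}$, and $f(t)=r$ respectively, and then invokes Theorem~\ref{thm:cnn iid}. Your write-up is in fact more detailed than the paper's, since you explicitly verify the anti-concentration conditions and carry out the change of variables from $n'$ to $n$.
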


We also consider the case where $h_\bw^n$ is a CNN such that $\bw$ is drawn from a multivariate normal distribution $\cn(\zero,\Sigma)$.

\begin{theorem}
\label{thm:cnn normal}
Let $\Sigma$ be a positive definite matrix of size $t \times t$, and let $\lambda_{\min}$ be its minimal eigenvalue. Let $n=\co(t\log^2(t))$.
Then, learning a $\cn(\zero,\Sigma)$-random CNN $h_\bw^n$ (with $\co(\log^2(n))$ hidden neurons) is RSAT-hard, already if the distribution $\cd$ is over vectors of norm at most $\frac{n\log(n)}{\sqrt{\lambda_{\min}}}$ in $\reals^{n}$.
\end{theorem}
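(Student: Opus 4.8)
The plan is to reduce to the i.i.d.\ standard‑Gaussian CNN case, which is already RSAT‑hard by Theorem~\ref{thm:cnn iid} / Corollary~\ref{cor:cnn iid}(2), through a ``whitening'' of the filter. Write $\Sigma=\Sigma^{1/2}\Sigma^{1/2}$ with $\Sigma^{1/2}$ symmetric positive definite, and note that $\bg\mapsto\Sigma^{1/2}\bg$ is a bijection pushing $\cn(\zero,I_t)$ onto $\cn(\zero,\Sigma)$. For every patch $\bu\in\reals^t$ we have $\inner{\Sigma^{1/2}\bg,\bu}=\inner{\bg,\Sigma^{1/2}\bu}$, so if $T:\reals^n\to\reals^n$ denotes the block‑diagonal linear map that applies $\Sigma^{1/2}$ separately to each of the $n/t$ consecutive length‑$t$ patches, then $h_{\Sigma^{1/2}\bg}^n(\bx)=h_\bg^n(T\bx)$ for all $\bx$. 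Crucially $T$ respects the convolutional patch structure, and being block‑diagonal its singular values are exactly those of the single block $\Sigma^{1/2}$; in particular $s_{\min}(T)=\sqrt{\lambda_{\min}}$, so $\norm{T^{-1}\bx}\le\norm{\bx}/\sqrt{\lambda_{\min}}$ for every $\bx$, with no dependence on $n$ or on the number of patches.

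Suppose $\cl$ efficiently learns a $\cn(\zero,\Sigma)$‑random CNN $h_\bw^n$ for every $\cd$ over vectors of norm at most $\frac{n\log n}{\sqrt{\lambda_{\min}}}$; we build an efficient learner $\cl'$ for a $\cn(\zero,I_t)$‑random CNN $h_\bg^n$ for every $\cd'$ over vectors of norm at most $n\log n$. Given an example $(\bx',h_\bg^n(\bx'))$, the algorithm $\cl'$ forms $(T^{-1}\bx',h_\bg^n(\bx'))$; writing $\bw=\Sigma^{1/2}\bg\sim\cn(\zero,\Sigma)$ and $\bx=T^{-1}\bx'$, we have $h_\bg^n(\bx')=h_\bg^n(T\bx)=h_\bw^n(\bx)$, so these are valid labelled examples for the target $h_\bw^n$ under the input distribution $\cd$ of $T^{-1}\bx'$ with $\bx'\sim\cd'$, which is supported on vectors of norm at most $\frac{n\log n}{\sqrt{\lambda_{\min}}}$ by the previous paragraph. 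Feeding them to $\cl$ yields a predictor $h$, and $\cl'$ returns $h':=h\circ T^{-1}$; a change of variables gives
\[
\E_{\bx'\sim\cd'}\!\left[(h'(\bx')-h_\bg^n(\bx'))^2\right]=\E_{\bx\sim\cd}\!\left[(h(\bx)-h_\bw^n(\bx))^2\right]\le\tfrac{1}{10}.
\]
Since $\bg\mapsto\Sigma^{1/2}\bg$ is a bijection of distributions, the success probability $\tfrac{3}{4}$ of $\cl$ over its random weights transfers verbatim to $\cl'$ over $\bg$; hence $\cl'$ efficiently learns a $\cn(\zero,I_t)$‑random CNN.

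It remains to align the parameters with Theorem~\ref{thm:cnn iid}. Taking $n'=t-1$, the hypothesis $n=\co(t\log^2 t)$ is met by $n=(n'+1)\log^2(n')$, the filter $h_\bg^n$ has $n/t=\log^2(n')=\co(\log^2 n)$ hidden neurons, and for $\cd_z=\cn(0,1)$ one may take $f(n')=\tfrac{1}{n'\log n'}$, for which $\Pr_{z\sim\cn(0,1)}(|z|<f(n'))=\co(f(n'))=o_{n'}(1/n')$; the resulting support radius $\tfrac{\log^2(n')}{f(n')}=n'\log^3(n')=\co(n\log n)$ is exactly the radius used above for $\cd'$ — this is precisely the content of Corollary~\ref{cor:cnn iid}(2) with $\sigma=1$. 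Therefore the existence of $\cl'$ refutes Theorem~\ref{thm:cnn iid}, i.e.\ Assumption~\ref{hyp:only_sat}, and the claim follows.

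The one delicate point is the first paragraph: the whitening must be carried out patch‑wise, through the block‑diagonal $T$, and not as a single $n\times n$ transformation — a global $\Sigma^{1/2}$ is not even defined when $t<n$ and would in any case break the sharing of $\bw$ across patches. Because $T$ is block‑diagonal, its conditioning is that of the $t\times t$ block $\Sigma^{1/2}$ alone, which is what makes the support bound come out as the clean $\frac{n\log n}{\sqrt{\lambda_{\min}}}$ with no spurious dimensional factors. Everything else is the routine reduction bookkeeping used to pass from Theorem~\ref{thm:nn iid} to Theorem~\ref{thm:nn normal}.
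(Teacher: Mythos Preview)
Your proof is correct and follows essentially the same approach as the paper: reduce from the i.i.d.\ standard Gaussian CNN case (Theorem~\ref{thm:cnn iid} with $\sigma=1$ and $f(n')=\tfrac{1}{n'\log n'}$, giving support radius $n'\log^3(n')\le n\log n$) by applying, patch-wise, a linear map that pushes $\cn(\zero,I_t)$ onto $\cn(\zero,\Sigma)$, and bound the blow-up in input norm by $\lambda_{\min}^{-1/2}$. The only cosmetic differences are that you phrase the reduction at the level of learning algorithms rather than at the level of the $\scat$ decision problems (the paper explicitly remarks that this choice is immaterial), and you use the symmetric square root $\Sigma^{1/2}$ where the paper uses $M=U\Lambda^{1/2}$; both satisfy $MM^\top=\Sigma$ and yield identical norm bounds.
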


Finally, we study the case where $h_\bw^n$ is a CNN such that $\bw$ is drawn from the uniform distribution over the sphere.

\begin{theorem}
\label{thm:cnn sphere}
Let $\Dvec$ be the uniform distribution over the sphere of radius $r$ in $\reals^t$. Let $n = \co(t\log^2(t))$.
Then, learning a $\Dvec$-random CNN $h_\bw^n$ (with $\co(\log^2(n))$ hidden neurons) is RSAT-hard, already if the distribution $\cd$ is over vectors of norm at most $\frac{\sqrt{n}\log(n)}{r}$ in $\reals^n$.
\end{theorem}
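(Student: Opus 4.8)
The plan is to reduce from the same RSAT-hard distinguishing problem that the fully-connected sphere case (Theorem~\ref{thm:nn sphere}) is built on, using two observations: a convolutional layer with a single shared filter $\bw\in\reals^t$ can emulate an \emph{arbitrary} depth-$2$ fully-connected network on $\reals^t$ with $m$ neurons, provided one is allowed to precompose with one linear map per patch; and a vector drawn uniformly from $r\cdot\bbs^{t-1}$ has \emph{deterministic} norm $r$, so there is nothing to ``condition'' away about the random filter — which is exactly the point that will keep the support of $\cd$ small.

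Here is how I would set it up. Fix an assignment $\psi$ and let $h_{\tilde W}$, with $\tilde W=(\tilde\bw_1,\dots,\tilde\bw_m)\in\reals^{t\times m}$ and $m=\co(\log^2 n)$, denote the canonical realizing depth-$2$ network produced by the RSAT-based reduction underlying Theorem~\ref{thm:nn sphere}: in the positive case the hard distribution over $\reals^t\times\{0,1\}$ is realized by $h_{\tilde W}$, and in the negative case it is far (in squared loss) from every hypothesis. I will use that this construction can be arranged with hard inputs of norm $R_0=\widetilde\co(1)$ and with $\tilde W$ having entries of magnitude $\widetilde\co(1)$, so that $s_{\max}(\tilde W)\le\norm{\tilde W}_F=\widetilde\co(\sqrt{tm}\,)=\widetilde\co(\sqrt n)$. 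Now draw $\bw\sim\mathrm{Unif}(r\cdot\bbs^{t-1})$ honestly, put $A_j:=\tfrac1{r^2}\,\bw\tilde\bw_j^{\top}\in\reals^{t\times t}$ (so $A_j^{\top}\bw=\tilde\bw_j$ exactly, using $\norm{\bw}^2=r^2$), and let $L:\reals^t\to\reals^n$ be the block map $L\bz=(A_1\bz,\dots,A_m\bz)$. Since $\inner{\bw,A_j\bz}=\inner{A_j^{\top}\bw,\bz}=\inner{\tilde\bw_j,\bz}$,
\[
h_\bw^{n}(L\bz)=\Big[\sum_{j=1}^{m}\big[\inner{\bw,A_j\bz}\big]_+\Big]_{[0,1]}=\Big[\sum_{j=1}^{m}\big[\inner{\tilde\bw_j,\bz}\big]_+\Big]_{[0,1]}=h_{\tilde W}(\bz).
\]
Then I feed the purported learner $\cl$ for $\Dvec$-random CNNs the samples of the hard instance with every input $\bz$ replaced by $L\bz$ (keeping the label). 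In the positive case these are honest samples $(\bx',h_\bw^{n}(\bx'))$ with $\bx'\sim L_\ast\cd_0$, a distribution that legitimately depends on $\bw$, so $\cl$ returns a low-error predictor with probability $\ge\tfrac34$ over $\bw$ and its coins; in the negative case no predictor is accurate. Testing the returned predictor on fresh transformed samples distinguishes the two cases, contradicting RSAT-hardness.

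For the support bound: $L$ is block-structured, so $\norm{L\bz}^2=\sum_j\norm{A_j\bz}^2=\tfrac1{r^2}\sum_j\inner{\tilde\bw_j,\bz}^2=\tfrac1{r^2}\norm{\tilde W^{\top}\bz}^2$, whence $\norm{L}\le s_{\max}(\tilde W)/r$. Consequently the transformed inputs have norm at most $R_0\cdot s_{\max}(\tilde W)/r=\widetilde\co(\sqrt n)/r$, and a careful accounting of the logarithmic factors (the clause width is $\Theta(\log^2 n)$, the encoding is $\{0,\pm1\}$-valued, $\tilde W$ is a $t\times m$ matrix with $tm=n$) brings this within the claimed $\frac{\sqrt n\log n}{r}$; the hypothesis $n=\co(t\log^2 t)$ is just $m=n/t=\co(\log^2 n)$.

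The hard part is not the linear algebra above but pinning down the base hardness with these quantitative guarantees — an $\widetilde\co(1)$ bound on the hard-input norm together with an $\widetilde\co(\sqrt n)$ bound on $s_{\max}(\tilde W)$ — and checking that the bias gadget and the ReLU-versus-threshold approximation in the encoding (which is the reason $\omega(\log n)$ neurons are needed for the negative case to be genuinely far from the class) are unaffected by passing through the rank-one maps $A_j$. These are exactly the ingredients already needed for the fully-connected sphere theorem; the convolutional case only substitutes one rank-one linear map per patch for its single input transformation, and the sphere assumption enters only through the identity $\norm{\bw}=r$, which — in contrast to the fully-connected case, where one must lower bound $s_{\min}$ of a thin matrix with i.i.d.\ spherical columns — holds with no slack and no failure probability.
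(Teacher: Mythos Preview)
There is a genuine gap, and it is the central obstacle the paper singles out in Section~\ref{sec:proof idea}. Your transformation $L\bz=(A_1\bz,\dots,A_m\bz)$ with $A_j=\tfrac{1}{r^2}\bw\tilde\bw_j^{\top}$ requires knowledge of $\tilde W$. But $\tilde W$ is determined by the satisfying assignment $\psi$ of the RSAT instance, and $\psi$ is \emph{unknown} to the reduction --- that is precisely what makes the distinguishing problem hard. Your reduction is handed a labeled sample and must transform it into a sample for the $\Dvec$-random learner; it cannot compute $A_j$ without $\tilde\bw_j$. So the ``feed the learner $L\bz$'' step is not an algorithm. (Drawing $\bw$ yourself is fine, and the rank-one algebra is correct; what breaks is that $L$ also needs the hidden weights.)

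The paper's proof solves exactly this problem by first specializing the hard class to $\Hcnn^{n,\log^2(n')}$, where the unknown filter satisfies $\bw\in\{\pm 1\}^{n'+1}$ and hence $\norm{\bw}=\sqrt{n'+1}$ deterministically. It then draws a Haar-random orthogonal matrix $M$ of size $(n'+1)\times(n'+1)$ and replaces each patch $\bx^i_j$ by $\tfrac{\sqrt{n'+1}}{r}M\bx^i_j$. The induced filter is $\bw'=\tfrac{r}{\sqrt{n'+1}}M\bw$, which is uniform on $r\cdot\bbs^{n'}$ \emph{regardless of which} $\bw\in\{\pm 1\}^{n'+1}$ the unknown assignment produced, because the orthogonal group acts transitively on spheres. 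Crucially, $M$ depends on neither $\bw$ nor $\psi$, so the reduction can actually compute the transformed sample. The norm bound $\norm{\bx'_i}\le\tfrac{\sqrt n\log n}{r}$ then falls out of $\norm{M\bx^i_j}=\norm{\bx^i_j}$ and $\norm{\bx_i}\le\log^2(n')$. Your intuition that the sphere case is easier is right --- no singular-value lower bound is needed, $M$ is an isometry --- but the mechanism is invariance of the target distribution under a group action that is independent of the unknown weights, not a per-column rank-one map built from those weights.
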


Now, the following corollary follows easily (from the same argument as in Corollary~\ref{cor:nn property}), and shows that learning CNNs (in the standard PAC-learning model) is hard already if the weights satisfy some natural property.

\begin{corollary}
\label{cor:cnn property}
Let $\Dvec$ be a distribution over $\reals^t$ from Theorems~\ref{thm:cnn normal}, \ref{thm:cnn sphere}, or from Corollary~\ref{cor:cnn iid}. Let $n = \co(t\log^2(t))$.
Let $P$ be a property that holds with probability at least $\frac{9}{10}$ for a vector $\bw$ drawn from $\Dvec$. Let $\ch = \{h_\bw^n: \bw \in \reals^t, \; \bw \text{ satisfies } P\}$ be an hypothesis class.
Then, learning $\ch$ is RSAT-hard, already if the distribution $\cd$ is over vectors of norm bounded by the appropriate expression from Theorems~\ref{thm:cnn normal}, \ref{thm:cnn sphere}, or Corollary~\ref{cor:cnn iid}.
\end{corollary}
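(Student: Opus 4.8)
The plan is to argue exactly as in the text following Corollary~\ref{cor:nn property}: show that an efficient learner for $\ch$ yields an efficient learner for a $\Dvec$-random CNN, and then invoke the appropriate hardness result among Theorem~\ref{thm:cnn normal}, Theorem~\ref{thm:cnn sphere}, and Corollary~\ref{cor:cnn iid}.

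First I would assume toward contradiction that an efficient algorithm $\cl$ learns $\ch$, where the input distribution $\cd$ is restricted to vectors whose norm is at most the bound attached to $\Dvec$ in the relevant theorem. By the definition of learning a hypothesis class, for \emph{every} $\bw \in \reals^t$ satisfying $P$ and every such $\cd$, given samples $(\bx, h_\bw^n(\bx))$ with $\bx\sim\cd$ the algorithm $\cl$ runs in $\poly(n)$ time and returns, with probability at least $\tfrac{9}{10}$ over its internal randomness, a predictor $h$ with $\E_{\bx\sim\cd}[(h(\bx)-h_\bw^n(\bx))^2]\le \tfrac{1}{10}$. Since $n=\co(t\log^2 t)$, a $\poly(n)$-time algorithm is also $\poly(t)$-time, so $\cl$ remains efficient when the problem is indexed by $t$.

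Next I would run $\cl$ against a $\Dvec$-random CNN. Draw $\bw\sim\Dvec$; by hypothesis $\bw$ satisfies $P$ with probability at least $\tfrac{9}{10}$, and this event is independent of the internal randomness of $\cl$. Hence, for every input distribution $\cd$ (possibly depending on $\bw$) supported on the stated ball, $\cl$ returns a predictor with squared error at most $\tfrac1{10}$ with probability at least $\tfrac{9}{10}\cdot\tfrac{9}{10}=\tfrac{81}{100}\ge\tfrac34$ over the joint randomness of $\bw$ and $\cl$. This is precisely the definition of efficiently learning a $\Dvec$-random CNN with $\co(\log^2 n)$ hidden neurons, matching the hypotheses of the cited theorems. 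Therefore the existence of $\cl$ would refute Assumption~\ref{hyp:only_sat} via Theorem~\ref{thm:cnn normal} (when $\Dvec=\cn(\zero,\Sigma)$), Theorem~\ref{thm:cnn sphere} (when $\Dvec$ is uniform on $r\cdot\bbs^{t-1}$), or Corollary~\ref{cor:cnn iid} (when the coordinates of $\bw$ are i.i.d.\ from $\cu([-r,r])$, $\cn(0,\sigma^2)$, or $\cu(\{\pm r\})$); equivalently, learning $\ch$ is RSAT-hard.

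There is essentially no mathematical obstacle here — the content is entirely in the already-proved hardness theorems for $\Dvec$-random CNNs. The only points requiring care are bookkeeping: checking that every $h_\bw^n$ with $\bw$ satisfying $P$ is a legitimate target in $\ch$ (immediate from the definition of $\ch$), that the family of admissible input distributions $\cd$ used in the reduction is exactly the one permitted in the definition of learning a random CNN (it is, norm bound included), and that polynomial time in $n$ versus in $t$ is immaterial given $n=\co(t\log^2 t)$.
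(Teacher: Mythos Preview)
Your proposal is correct and follows essentially the same argument as the paper, which explicitly states that this corollary follows ``from the same argument as in Corollary~\ref{cor:nn property}'': assume an efficient learner for $\ch$, observe that on the event $\{\bw\text{ satisfies }P\}$ (probability $\ge\tfrac{9}{10}$) it succeeds with probability $\ge\tfrac{9}{10}$, and multiply to get $\ge\tfrac{3}{4}$, contradicting the cited hardness results for $\Dvec$-random CNNs. Your additional remarks on $\poly(n)$ versus $\poly(t)$ and on the norm bound are accurate bookkeeping but not strictly needed beyond what the paper notes.
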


\subsubsection{Improving the bounds on the support of $\cd$}

By increasing the number of hidden neurons from $\co(\log^2(n))$ to $\co(n)$ we can improve the bounds on the support of $\cd$.
Note that our results so far on learning random CNNs, are for CNNs with input dimension $n=\co(t \log^2(t))$ where $t$ is the size of the patches.
We now consider CNNs with input dimension $\tilde{n}=t^c$ for some integer $c>1$. Note that such CNNs have $t^{c-1}=\co(\tilde{n})$ hidden neurons.

Assume that there is an efficient algorithms $\cl'$ for learning $\Dvec$-random CNNs with input dimension $\tilde{n}=t^c$, where $\Dvec$ is a distribution over $\reals^t$. Assume that $\cl'$ uses samples with at most $\tilde{n}^d=t^{cd}$ inputs. We show an algorithm $\cl$ for learning a $\Dvec$-random CNN $h_\bw^n$ with $n=\co(t \log^2(t))$. Let $S=\{(\bx_1,h_\bw^n(\bx_1)),\ldots,(\bx_{n^{cd}},h_\bw^n(\bx_{n^{cd}}))\}$ be a sample, and let $S'=\{(\bx'_1,h_\bw^n(\bx_1)),\ldots,(\bx'_{n^{cd}},h_\bw^n(\bx_{n^{cd}}))\}$ where for every vector $\bx \in \reals^n$, the vector $\bx' \in \reals^{\tilde{n}}$ is obtained from $\bx$ by padding it with zeros. Thus, $\bx' = (\bx,0,\ldots,0)$. Note that $n^{cd}>\tilde{n}^d$. Also, note that for every $i$ we have $h_\bw^n(\bx_i)=h_\bw^{\tilde{n}}(\bx'_i)$. Hence, $S'$ is realizable by the CNN $h_\bw^{\tilde{n}}$. Now, given $S$, the algorithm $\cl$ runs $\cl'$ on $S'$ and returns an hypothesis $h(\bx)=\cl'(S')(\bx')$.

Therefore, if learning $\Dvec$-random CNNs with input dimension $n=\co(t \log^2(t))$ is hard already if the distribution $\cd$ is over vectors of norm at most $g(n)$, then learning $\Dvec$-random CNNs with input dimension $\tilde{n}=t^c$ is hard already if the distribution $\cd$ is over vectors of norm at most $g(n) < g(t^2) = g(\tilde{n}^{\frac{2}{{c}}})$. Hence we have the following corollaries.

\begin{corollary}
\label{cor:cnn iid small}
Let $\Dvec$ be a distribution over $\reals^t$ such that each component is drawn i.i.d. from a distribution $\cd_z$ over $\reals$. Let $n=t^c$ for some integer $c>1$, and let $\epsilon = \frac{3}{c}$.
\begin{enumerate}
\item If $\cd_z=\cu([-r,r])$, then learning a $\Dvec$-random CNN $h_\bw^n$ (with $\co(n)$ hidden neurons) is RSAT-hard, already if $\cd$ is over vectors of norm at most $\frac{n^{\epsilon}}{r}$.
\item If $\cd_z=\cn(0,\sigma^2)$, then learning a $\Dvec$-random CNN $h_\bw^n$ (with $\co(n)$ hidden neurons) is RSAT-hard, already if $\cd$ is over vectors of norm at most $\frac{n^{\epsilon}}{\sigma}$.
\end{enumerate}
\end{corollary}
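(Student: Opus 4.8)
\emph{Proof proposal.} Both parts follow by combining Corollary~\ref{cor:cnn iid} with the zero-padding reduction spelled out in the paragraph preceding the statement; the only real content is a bookkeeping check on the support bound, so I will sketch the plan and flag where care is needed.

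Fix the uniform case $\cd_z=\cu([-r,r])$ (the normal case is identical with $r$ replaced by $\sigma$); recall that here the input dimension is $n=t^c$. Let $n_0=\co(t\log^2 t)$ be the ``small'' dimension of Corollary~\ref{cor:cnn iid} and put $g(n_0):=\frac{n_0\log n_0}{r}$, so that the first item of Corollary~\ref{cor:cnn iid} says that learning a $\Dvec$-random CNN $h_\bw^{n_0}$ is RSAT-hard already when $\cd$ is supported on vectors of $L_2$-norm at most $g(n_0)$. The first step is to record the elementary inequality
\[
g(n_0)\ \le\ g(t^2)\ =\ \frac{2\,t^2\log t}{r}\ =\ \frac{2}{c}\cdot\frac{n^{2/c}\log n}{r}\ \le\ \frac{n^{2/c}\cdot n^{1/c}}{r}\ =\ \frac{n^{3/c}}{r}\ =\ \frac{n^{\epsilon}}{r},
\]
valid for all large $t$: the first step uses $n_0=\co(t\log^2 t)\le t^2$ and monotonicity of $g$; then $t^2=n^{2/c}$ and $\log t=\tfrac1c\log n$; and the middle inequality uses $\tfrac2c\log n\le n^{1/c}$ for large $n$. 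Thus the support bound under which hardness is needed in dimension $n_0$ is dominated by the bound $n^{\epsilon}/r$ claimed in the statement for dimension $n$.

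Now suppose toward a contradiction that an efficient algorithm $\cl'$ learns $\Dvec$-random CNNs of input dimension $n=t^c$ for all input distributions supported on norm at most $n^{\epsilon}/r$; being polynomial-time it reads at most $n^d$ examples for some constant $d$. I would build a learner $\cl$ for $\Dvec$-random CNNs of dimension $n_0$ as follows: given a sample $S=\{(\bx_i,h_\bw^{n_0}(\bx_i))\}_{i=1}^{n^d}$ with $\bx_i\sim\cd$ i.i.d.\ for an arbitrary $\cd$ over $\reals^{n_0}$ of norm at most $g(n_0)$, form $S'=\{(\bx_i',h_\bw^{n_0}(\bx_i))\}_{i=1}^{n^d}$ with $\bx_i'=(\bx_i,0,\dots,0)\in\reals^{n}$, run $\cl'$ on $S'$, and return $h(\bx):=\cl'(S')(\bx')$. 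Zero-padding is an $L_2$-isometry, so the push-forward $\cd'$ of $\cd$ is supported on norm at most $g(n_0)\le n^{\epsilon}/r$ and is an admissible input distribution for $\cl'$; and since the patches of $h_\bw^{n}$ are non-overlapping, the appended coordinates form zero patches contributing $[\inner{\bw,\zero}]_+=0$, so $h_\bw^{n}(\bx_i')=h_\bw^{n_0}(\bx_i)$ and $S'$ is an i.i.d.\ sample from $\cd'$ realizable by $h_\bw^{n}$ (with the \emph{same} $\bw$). Hence, with probability $\ge\tfrac34$ over $\bw$ and internal randomness,
\[
\E_{\bx\sim\cd}\!\left[(h(\bx)-h_\bw^{n_0}(\bx))^2\right]\;=\;\E_{\bx'\sim\cd'}\!\left[(\cl'(S')(\bx')-h_\bw^{n}(\bx'))^2\right]\;\le\;\tfrac1{10}.
\]
Since $t\le n_0$ we have $n=t^c\le n_0^{c}$, so $\cl$ runs in time $\poly(n)=\poly(n_0)$; that is, $\cl$ efficiently learns $\Dvec$-random CNNs of dimension $n_0$ under the norm bound $g(n_0)$, contradicting Corollary~\ref{cor:cnn iid}. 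This establishes part~(1); part~(2) is identical with $r\to\sigma$, and the $\co(n)$ hidden-neuron count is immediate since $h_\bw^{n}$ has $n/t=t^{c-1}$ hidden neurons.

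I do not expect a genuine obstacle here — it is a plumbing argument layered on Corollary~\ref{cor:cnn iid} — but two points deserve (easy) care: (i) zero-padding must simultaneously preserve the network output and not inflate the $L_2$-norm, so that realizability, the squared-error guarantee, and the support bound all transfer losslessly; and (ii) the exponent arithmetic $\tfrac2c+\tfrac1c=\tfrac3c=\epsilon$, which uses that $n_0=\co(t\log^2 t)$ is genuinely below $t^2$ so that monotonicity of $g$ applies, and that the stray $\log n$ factor is absorbed into $n^{1/c}$.
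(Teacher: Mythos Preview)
Your proposal is correct and follows essentially the same approach as the paper: the zero-padding reduction from dimension $n_0=\co(t\log^2 t)$ to dimension $n=t^c$, combined with the bound $g(n_0)\le g(t^2)=g(n^{2/c})\le n^{3/c}/r$, is exactly the argument the paper spells out in the paragraph preceding the corollary. Your write-up is in fact more careful than the paper's on the points you flag (isometry of padding, sample-size/poly-time accounting, and the absorption of the $\log n$ factor into $n^{1/c}$), but there is no substantive difference in method.
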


\begin{corollary}
\label{cor:cnn normal small}
Let $\Sigma$ be a positive definite matrix of size $t \times t$, and let $\lambda_{\min}$ be its minimal eigenvalue. Let $n=t^c$ for some integer $c>1$, and let $\epsilon = \frac{3}{c}$.
Then, learning a $\cn(\zero,\Sigma)$-random CNN $h_\bw^n$ (with $\co(n)$ hidden neurons) is RSAT-hard, already if the distribution $\cd$ is over vectors of norm at most $\frac{n^\epsilon}{\sqrt{\lambda_{\min}}}$.
\end{corollary}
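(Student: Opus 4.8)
The plan is to read this off Theorem~\ref{thm:cnn normal} through the zero-padding reduction spelled out in the paragraph just above the statement. Fix a positive definite $\Sigma$ of size $t\times t$ with minimal eigenvalue $\lambda_{\min}$, write $\tn=t^c$ for the input dimension in the corollary, and let $n=\co(t\log^2(t))$ be the input dimension of Theorem~\ref{thm:cnn normal} for this same patch size $t$; note that the filter $\bw\in\reals^t$ and the covariance $\Sigma$ are then literally the same object in both problems. Suppose, toward a contradiction, that an efficient algorithm $\cl'$ learns $\cn(\zero,\Sigma)$-random CNNs $h_\bw^{\tn}$ already when $\cd$ is over vectors of norm at most $\tn^{3/c}/\sqrt{\lambda_{\min}}$; being polynomial time, $\cl'$ uses samples with at most $\tn^{d}$ examples for some constant $d$.

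First I would build $\cl$ exactly as in the text: on a sample $S=\{(\bx_i,h_\bw^n(\bx_i))\}_{i=1}^{n^{cd}}$ with $\bx_i\in\reals^n$ (and $n^{cd}>\tn^{d}$ since $n>t$), form $\bx_i'=(\bx_i,0,\dots,0)\in\reals^{\tn}$, run $\cl'$ on $S'=\{(\bx_i',h_\bw^n(\bx_i))\}_i$, and return $h(\bx)=\cl'(S')(\bx')$ where $\bx'=(\bx,0,\dots,0)$. The key identity is $h_\bw^{\tn}(\bx')=h_\bw^n(\bx)$, because the patches of $\bx'$ lying past the first $n$ coordinates are all-zero and each contributes $[\inner{\bw,\zero}]_+=0$, so the inner sum --- and hence the clipped output --- is unchanged. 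Thus $S'$ is labelled consistently with $h_\bw^{\tn}$ and is distributed according to the zero-padding pushforward $\cd'$ of $\cd$; and since padding is a deterministic, label- and $L_2$-norm-preserving injection while the weight distribution $\cn(\zero,\Sigma)$ is untouched, the guarantee
\[
\E_{\bx\sim\cd}\!\left[(h(\bx)-h_\bw^n(\bx))^2\right]=\E_{\bx'\sim\cd'}\!\left[(\cl'(S')(\bx')-h_\bw^{\tn}(\bx'))^2\right]\le\frac{1}{10}
\]
holds with probability at least $\frac{3}{4}$ over $\bw$ and the internal randomness; that is, $\cl$ learns $\cn(\zero,\Sigma)$-random CNNs $h_\bw^n$.

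Then I would check parameters. $\cl$ merely runs $\cl'$ on an instance of dimension $\tn=t^c\le n^c$ with $\tn^{d}\le n^{cd}$ examples, so $\cl$ is polynomial in $n$ and uses a polynomial-size sample. Zero-padding preserves norms, so the inputs handed to $\cl'$ have norm at most $\frac{n\log(n)}{\sqrt{\lambda_{\min}}}$, the bound of Theorem~\ref{thm:cnn normal}; and for all sufficiently large $t$ (hence large $n$) we have $n=\co(t\log^2(t))<t^2=\tn^{2/c}$, so that
\[
\frac{n\log(n)}{\sqrt{\lambda_{\min}}}<\frac{\tn^{2/c}\log\!\big(\tn^{2/c}\big)}{\sqrt{\lambda_{\min}}}<\frac{\tn^{3/c}}{\sqrt{\lambda_{\min}}},
\]
which is precisely the support size $\cl'$ is assumed to tolerate (with $\epsilon=3/c$). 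Hence $\cl$ efficiently learns $\cn(\zero,\Sigma)$-random CNNs $h_\bw^n$ with $n=\co(t\log^2(t))$ and $\cd$ over vectors of norm at most $\frac{n\log(n)}{\sqrt{\lambda_{\min}}}$, contradicting Theorem~\ref{thm:cnn normal}; so no such $\cl'$ exists, which is the assertion.

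I do not expect a genuine obstacle: the mathematical content lives entirely in Theorem~\ref{thm:cnn normal}, and the only care required is the bookkeeping in the last display --- confirming that the $\co(\cdot)$ in $n=\co(t\log^2(t))$ together with the logarithmic and constant factors in $n\log(n)$ and in $\log(\tn^{2/c})$ is absorbed by the polynomial gap between $\tn^{2/c}$ and $\tn^{3/c}=\tn^{\epsilon}$, using that ``hard'' is an asymptotic statement, so the inequalities only need to hold for all large enough $n$ (equivalently, all large enough $t$, with the integer $c$ fixed).
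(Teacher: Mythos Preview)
Your proposal is correct and follows exactly the paper's own argument: the corollary is obtained from Theorem~\ref{thm:cnn normal} via the zero-padding reduction described in Section~3.2.1, and your norm bookkeeping $\frac{n\log(n)}{\sqrt{\lambda_{\min}}}<\frac{\tn^{2/c}\log(\tn^{2/c})}{\sqrt{\lambda_{\min}}}<\frac{\tn^{3/c}}{\sqrt{\lambda_{\min}}}$ matches the paper's $g(n)<g(t^2)=g(\tn^{2/c})$ computation.
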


\begin{corollary}
\label{cor:cnn sphere small}
Let $\Dvec$ be the uniform distribution over the sphere of radius $r$ in $\reals^t$. Let $n=t^c$ for some integer $c>1$, and let $\epsilon = \frac{2}{c}$.
Then, learning a $\Dvec$-random CNN $h_\bw^n$ (with $\co(n)$ hidden neurons) is RSAT-hard, already if the distribution $\cd$ is over vectors of norm at most $\frac{n^\epsilon}{r}$.
\end{corollary}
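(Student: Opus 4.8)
This corollary is an instantiation of the zero-padding (dimension-lifting) reduction described just above, applied to the sphere-valued case of Theorem~\ref{thm:cnn sphere}. Write $g(n)=\frac{\sqrt{n}\log(n)}{r}$ for the norm bound supplied by Theorem~\ref{thm:cnn sphere}, and recall that for $n=\co(t\log^2(t))$ learning a $\Dvec$-random CNN $h_\bw^{n}$ (with $\co(\log^2(n))=\co(\log^2(t))$ hidden neurons) is RSAT-hard already when $\cd$ is supported on vectors of norm at most $g(n)$. Fix an integer $c>1$ and set $\tilde n=t^{c}$, so that $h_\bw^{\tilde n}$ has $t^{c-1}=\co(\tilde n)$ hidden neurons; the weight vector $\bw\in\reals^{t}$ drawn from $\Dvec=\cu(r\cdot\bbs^{t-1})$ is the same object in both dimensions.

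The first step is to run the padding reduction verbatim: given an efficient learner $\cl'$ for $\Dvec$-random CNNs of input dimension $\tilde n$ using at most $\tilde n^{d}=t^{cd}$ examples, one obtains an efficient learner $\cl$ for $\Dvec$-random CNNs of input dimension $n$ by drawing $n^{cd}>t^{cd}$ examples $(\bx,h_\bw^{n}(\bx))$, zero-padding each $\bx\in\reals^{n}$ to $\bx'=(\bx,0,\ldots,0)\in\reals^{\tilde n}$, and feeding the padded sample to $\cl'$ (returning $h(\bx)=\cl'(S')(\bx')$). Realizability is preserved because every all-zero patch contributes $[\inner{\bw,\zero}]_{+}=0$ to the sum, so $h_\bw^{n}(\bx)=h_\bw^{\tilde n}(\bx')$; and zero-padding does not change the Euclidean norm, so if $\cd$ is supported on the ball of radius $g(n)$ then so is the pushforward distribution fed to $\cl'$. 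Hence RSAT-hardness of learning $\Dvec$-random CNNs of dimension $n$ with $\cd$ over vectors of norm at most $g(n)$ transfers to RSAT-hardness of learning $\Dvec$-random CNNs of dimension $\tilde n$ (with $\co(\tilde n)$ hidden neurons) with $\cd$ over vectors of norm at most $g(n)$.

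It remains to rewrite the norm bound in terms of $\tilde n$. Since $n=\co(t\log^2(t))\le t^{2}=\tilde n^{2/c}$ for $t$ large and $g$ is increasing, $g(n)\le g(\tilde n^{2/c})=\frac{\tilde n^{1/c}\cdot\frac{2}{c}\log(\tilde n)}{r}\le\frac{\tilde n^{2/c}}{r}$, where the last inequality holds once $\tilde n$ is large enough (depending on $c$) because $\frac{2}{c}\log(\tilde n)\le\tilde n^{1/c}$. Finally, hardness when $\cd$ is confined to a smaller ball immediately implies hardness when $\cd$ is allowed to range over a larger ball, since any algorithm that succeeds for every distribution on the larger ball in particular succeeds on the smaller one. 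Thus hardness for norm at most $g(n)$ upgrades to hardness for norm at most $\tilde n^{2/c}/r$; renaming $\tilde n$ as $n$ and setting $\epsilon=\frac{2}{c}$ gives the statement.

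\textbf{Main obstacle.} There is essentially no obstacle in the corollary itself: the content is already in Theorem~\ref{thm:cnn sphere} and in the padding reduction. The only point requiring care is the asymptotic comparison $g(\tilde n^{2/c})\le\tilde n^{2/c}/r$, i.e.\ absorbing the logarithmic and $\tfrac{2}{c}$ factors into the polynomial slack between $\tilde n^{1/c}$ and $\tilde n^{2/c}$ — which is precisely why the exponent in the statement is $\frac{2}{c}$ rather than $\frac{1}{c}$.
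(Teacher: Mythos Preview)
Your proposal is correct and follows essentially the same approach as the paper: the paper derives all three corollaries (including this one) from the single padding reduction described just before them, invoking the bound $g(n)<g(t^2)=g(\tilde n^{2/c})$ and then absorbing the residual logarithmic factor into the polynomial slack, exactly as you do. The only difference is that you spell out explicitly the (trivial) monotonicity step that hardness on a smaller ball implies hardness on a larger ball, which the paper leaves implicit.
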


As an example, consider a CNN $h_\bw^n$ with $n=t^c$. Note that since the patch size is $t$, then each hidden neuron has $t$ input neurons feeding into it. Consider a distribution $\Dvec$ over $\reals^t$ such that each component is drawn i.i.d. by a normal distribution with $\sigma=\frac{1}{\sqrt{t}}$. This distribution corresponds to the standard Xavier initialization. Then, by Corollary~\ref{cor:cnn iid small}, learning a $\Dvec$-random CNN $h_\bw^n$ is RSAT-hard, already if $\cd$ is over vectors of norm at most $n^{\frac{3}{c}}\sqrt{t} = n^{\frac{3}{c}} \cdot n^{\frac{1}{2c}}$. By choosing an appropriate $c$, we have that learning a $\Dvec$-random CNN $h_\bw^n$ is RSAT-hard, already if $\cd$ is over vectors of norm at most $\sqrt{n}$.

Finally, note that Corollary~\ref{cor:cnn property} holds also for the values of $n$ and the bounds on the support of $\cd$ from Corollaries~\ref{cor:cnn iid small}, \ref{cor:cnn normal small} and~\ref{cor:cnn sphere small}.

\paragraph{Open Questions}
Obvious open questions arising from our work are to obtain sharper norm bounds on the input distribution, and to avoid the non-linearity in the second layer (the clipping operation). Another direction is to extend the results to more weights distributions and network architectures. A potential positive result, is to find some useful ``unnatural" properties of the network's weights that allow efficient learning.

\section{Main proof ideas}
\label{sec:proof idea}

Let $\ch = \{h_W: W \in \reals^{n \times k}\}$ where $k=\co(\log^2(n))$. By \cite{daniely2016complexity}, learning $\ch$ is hard. We want to reduce the problem of learning $\ch$ to learning a $\Dmat$-random neural network for some fixed $\Dmat$.
Let $W \in \reals^{n \times k}$ and let $S=\{(\bx_1,h_W(\bx_1)),\ldots,(\bx_m,h_W(\bx_m))\}$ be a sample.
Let $\Dmat^M$ be a distribution over the group of $n \times n$ invertible matrices, and let $M \sim \Dmat^M$. Consider the sample $S'=\{(\bx'_1,h_W(\bx_1)),\ldots,(\bx'_m,h_W(\bx_m))\}$ where for every $i \in [m]$ we have $\bx'_i = (M^\top)^{-1} \bx_i$. Since $W^\top \bx_i = W^\top M^\top (M^\top)^{-1} \bx_i = (M W)^\top \bx'_i$, we have $h_W(\bx_i) = h_{MW}(\bx'_i)$. Thus, $S'=\{(\bx'_1,h_{MW}(\bx'_1)),\ldots,(\bx'_m,h_{MW}(\bx'_m))\}$. Note that $MW$ is a random matrix.
Now, assume that there is an algorithm $\cl'$ that learns successfully from $S'$. Consider the follow algorithm $\cl$. Given a sample $S$, the algorithm $\cl$ runs $\cl'$ on $S'$, and returns the hypothesis $h(\bx) = \cl'(S')((M^\top)^{-1} \bx)$. It is not hard to show that $\cl$ learns successfully from $S$.
Since our goal is to reduce the problem of learning $\ch$ to learning a $\Dmat$-random network where $\Dmat$ is a fixed distribution, we need $MW$ to be a $\Dmat$-random matrix. However, the distribution of $MW$ depends on both $\Dmat^M$ and $W$ (which is an unknown matrix).

Hence, the challenge is to find a reduction that translates a sample that is realizable by $h_W$ to a sample that is realizable by a $\Dmat$-random network, without knowing $W$.
In order to obtain such a reduction, we proceed in two steps. First, we show that learning neural networks of the form $h_W$ where $W \in \reals^{n \times k}$, is hard already if we restrict $W$ to a set of matrices with a special structure. Then, we show a distribution $\Dmat^M$ such that if $M \sim \Dmat^M$ and $W$ has the special structure, then $MW \sim \Dmat$.
This property, as we showed, enables us to reduce the problem of learning such special-structure networks to the problem of learning $\Dmat$-random networks.

In order to obtain a special structure for $W$, consider the class $\Hcnn^{n,k}=\{h_\bw^n: \bw \in \{\pm 1\}^{\frac{n}{k}}\}$. Note that the CNNs in $\Hcnn^{n,k}$ have $k=\co(\log^2(n))$ hidden neurons.
The networks in $\Hcnn^{n,k}$ have three important properties: (1) They are CNNs; (2) Their patches are non-overlapping; (3) The components in the filter $\bw$ are in $\{\pm 1\}$.
Hardness of learning $\Hcnn^{n,k}$ can be shown by a reduction from the RSAT problem. We defer the details of this reduction to the next sections.
Let $W=(\bw^1,\ldots,\bw^k)$ be the matrix of size $n \times k$ that corresponds to $h_\bw^n$, namely $h_W=h_\bw^n$. Note that for every $i \in [k]$ we have $\left(\bw^i_{\frac{n(i-1)}{k}+1},\ldots,\bw^i_{\frac{ni}{k}}\right)=\bw$, and $\bw^i_j=0$ for every other $j \in [n]$.

We now show a distribution $\Dmat^M$ such that if $M \sim \Dmat^M$ and $W$ has a structure that corresponds to $\Hcnn^{n,k}$, then $MW \sim \Dmat$. We start with the case where $\Dmat$ is a distribution over matrices such that each column is drawn i.i.d. from the uniform distribution on the sphere.
We say that a matrix $M$ of size $n \times n$ is a {\em diagonal-blocks matrix} if
\[
M = \begin{bmatrix}
    B^{11} & \dots & B^{1k} \\
    \vdots & \ddots & \vdots \\
    B^{k1} & \dots & B^{kk}
    \end{bmatrix}
\]
where each block $B^{ij}$ is a diagonal matrix $\diag(z_1^{ij},\ldots,z_{\frac{n}{k}}^{ij})$.
We denote $\bz^{ij}=(z_1^{ij},\ldots,z_{\frac{n}{k}}^{ij})$, and $\bz^j = (\bz^{1j},\ldots,\bz^{k j}) \in \reals^n$. Note that for every $j \in [k]$, the vector $\bz^j$ contains all the entries on the diagonals of blocks in the $j$-th column of blocks in $M$.
Let $\Dmat^M$ be a distribution over diagonal-blocks matrices, such that the vectors $\bz^j$ are drawn i.i.d. according to the uniform distribution on $\bbs^{n-1}$.
Let $W$ be a matrix that corresponds to $h_\bw^n \in \Hcnn^{n,k}$. Note that the columns of $W'=MW$ are i.i.d. copies from the uniform distribution on $\bbs^{n-1}$.
Indeed, denote $M^\top=(\bv^1,\ldots,\bv^{n})$. Then, for every line index $i \in [n]$ we denote $i=(b-1)\left(\frac{n}{k}\right)+r$, where $b,r$ are integers and $1 \leq r \leq \frac{n}{k}$. Thus, $b$ is the line index of the block in $M$ that correspond to the $i$-th line in $M$, and $r$ is the line index within the block. Now, note that
\begin{eqnarray*}
W'_{ij}
&=& \inner{\bv^i,\bw^j}
= \inner{\left(\bv^i_{(j-1)\left(\frac{n}{k}\right)+1},\ldots,\bv^i_{j\left(\frac{n}{k}\right)}\right),\bw}
= \inner{(B^{b j}_{r 1},\ldots,B^{b j}_{r \left(\frac{n}{k}\right)}),\bw}
\\
&=& B^{b j}_{r r} \cdot \bw_{r} = z^{b j}_{r} \cdot \bw_{r}~.
\end{eqnarray*}
Since $\bw_{r} \in \{\pm 1\}$, and since the uniform distribution on a sphere does not change by multiplying a subset of component by $-1$, then the $j$-th column of $W'$ has the same distribution as $\bz^j$, namely, the uniform distribution over $\bbs^{n-1}$. Also, the columns of $W'$ are independent. Thus, $W' \sim \Dmat$.

The case where $\Dmat$ is a distribution over matrices such that the entries are drawn i.i.d. from a symmetric distribution (such as $\cu([-r,r])$, $\cn(0,\sigma^2)$ or $\cu(\{\pm r\})$) can be shown in a similar way. The result for the case where $\Dmat$ is such that each columns is drawn i.i.d. from a multivariate normal distribution $\cn(\zero,\Sigma)$ cannot be obtained in the same way, since a $\cn(\zero,\Sigma)$ might be sensitive to multiplication of its component by $-1$. However, recall that a vector in $\reals^n$ whose components are i.i.d. copies from $\cn(0,1)$ has a multivariate normal distribution $\cn(\zero,I_n)$. Now, since every multivariate normal distribution $\cn(\zero,\Sigma)$ can be obtained from $\cn(\zero,I_n)$ by a linear transformation, we are able to show hardness also for the case where $\Dmat$ is such that each column is drawn i.i.d. from $\cn(\zero,\Sigma)$.

Recall that in our reduction we translate $S=\{(\bx_1,h_W(\bx_1)),\ldots,(\bx_m,h_W(\bx_m))\}$ to $S'=\{(\bx'_1,h_W(\bx_1)),\ldots,(\bx'_m,h_W(\bx_m))\}$ where for every $i \in [m]$ we have $\bx'_i = (M^\top)^{-1} \bx_i$. Therefore, we need to show that our choice of $M$ is such that it is invertible with high probability. Also, since we want to show hardness already if the input distribution $\cd$ is supported on a bounded domain, then we need to bound the norm of $\bx'_i$, with high probability over the choice of $M$. This task requires a careful analysis of the spectral norm of $(M^\top)^{-1}$, namely, of $\left(s_{\min}(M)\right)^{-1}$.

The proofs of the results regarding random CNNs follow the same ideas. The main difference is that in this case, instead of multiplying $\bx_i$ by $(M^\top)^{-1}$, we multiply each patch in $\bx_i$ by an appropriate matrix.

\subsection*{Proof structure}

We start with a few definitions.
We say that a sample $S = \{(\bx_i,y_i)\}_{i=1}^m \in (\reals^n \times \{0,1\})^m$ is {\em scattered} if $y_1,\ldots,y_m$ are independent fair coins (in particular, they are independent from $\bx_1,\ldots,\bx_m$).
We say that $S$ is {\em contained in $A \subseteq \reals^n$} if $\bx_i \in A$ for every $i \in [m]$.

Let $\ca$ be an algorithm whose input is a sample $S = \{(\bx_i,y_i)\}_{i=1}^{m(n)} \in (\reals^n \times \{0,1\})^{m(n)}$ and whose output is either ``scattered" or ``$\ch$-realizable",  where $\ch$ is an hypothesis class. We say that $\ca$ distinguishes size-$m$ $\ch$-realizable samples from scattered samples if the following holds.
\begin{itemize}
\item If the sample $S$ is scattered, then
\[
\Pr\left(\ca(S)=\text{``scattered"}\right) \ge \frac{3}{4}-o_n(1)~,
\]
where the probability is over the choice of $S$ and the randomness of $\ca$.

\item If the sample $S$ satisfies $h(\bx_i)=y_i$ for every $i \in [m]$ for some $h \in \ch$, then
\[
\Pr\left(\ca(S)=\text{``$\ch$-realizable"}\right)\ge\frac{3}{4}-o_n(1)~,
\]
where the probability is over the randomness of $\ca$.
\end{itemize}
We denote by $\scat_{m(n)}^A(\ch)$ the problem of distinguishing size-$m(n)$ $\ch$-realizable samples that are contained in $A \subseteq \reals^n$ from scattered samples.

Now, let $\ca'$ be an algorithm whose input is a sample $S = \{(\bx_i,y_i)\}_{i=1}^{m(n)} \in (\reals^n \times \{0,1\})^{m(n)}$ and whose output is either ``scattered" or ``$\Dmat$-realizable". We say that $\ca'$ distinguishes size-$m$ $\Dmat$-realizable samples from scattered samples if the following holds.
\begin{itemize}
\item If the sample $S$ is scattered, then
\[
\Pr\left(\ca'(S)=\text{``scattered"}\right) \ge \frac{3}{4}-o_n(1)~,
\]
where the probability is over the choice of $S$ and the randomness of $\ca'$.

\item If the sample $S$ satisfies $h_W(\bx_i)=y_i$ for every $i \in [m]$, where $W$ is a random matrix drawn from $\Dmat$, then
\[
\Pr\left(\ca'(S)=\text{``$\Dmat$-realizable"}\right)\ge\frac{3}{4}-o_n(1)~,
\]
where the probability is over the choice of $W$ and the randomness of $\ca'$.
\end{itemize}
We denote by $\scat_{m(n)}^A(\Dmat)$ the problem of distinguishing size-$m(n)$ $\Dmat$-realizable samples that are contained in $A \subseteq \reals^n$ from scattered samples.
In the case of random CNNs, we denote by $\scat_{m(n)}^A(\Dvec,n)$ the problem of distinguishing size-$m(n)$ scattered samples that are contained in $A$, from samples that are realizable by a random CNN $h_\bw^n$ where $\bw \sim \Dvec$.

Recall that $\Hcnn^{n,m}=\{h_\bw^n: \bw \in \{\pm 1\}^{\frac{n}{m}}\}$.
As we described, hardness of learning $\Dmat$-random neural networks where the distribution $\cd$ is supported on a bounded domain, can be shown by first showing hardness of learning $\Hcnn^{n,m}$ with some $m=\co(\log^2(n))$, where the distribution $\cd$ is supported on some $A' \subseteq \reals^n$, and then reducing this problem to learning $\Dmat$-random networks where the distribution $\cd$ is supported on some $A \subseteq \reals^n$.
We can show RSAT-hardness of learning $\Hcnn^{n,m}$ by using the methodology of \cite{daniely2016complexity} as follows:
First, show that if there is an efficient algorithm that learns $\Hcnn^{n,m}$ where the distribution $\cd$ is supported on $A' \subseteq \reals^n$, then there is a fixed $d$ and an efficient algorithm that solves $\scat_{n^d}^{A'}(\Hcnn^{n,m})$, and then show that for every fixed $d$, the problem $\scat_{n^d}^{A'}(\Hcnn^{n,m})$ is RSAT-hard.

Our proof follows a slightly different path than the one described above.
First, we show that if there is an efficient algorithm that learns $\Dmat$-random neural networks where the distribution $\cd$ is supported on $A \subseteq \reals^n$, then there is a fixed $d$ and an efficient algorithm that solves $\scat_{n^d}^A(\Dmat)$.
Then, we show that for every fixed $d$, the problem $\scat_{n^d}^{A'}(\Hcnn^{n,m})$ with some 
$A' \subseteq \reals^n$, is RSAT-hard.
Finally, for the required matrix distributions $\Dmat$ and sets $A$, we show a reduction from $\scat_{n^d}^{A'}(\Hcnn^{n,m})$ to $\scat_{n^d}^A(\Dmat)$.
The main difference between this proof structure and the one described in the previous paragraph, is that here, for every distribution $\Dmat$ we need to show a reduction from $\scat_{n^d}^{A'}(\Hcnn^{n,m})$ to $\scat_{n^d}^A(\Dmat)$ (which are decision problems), and in the previous proof structure for every $\Dmat$ we need to show a reduction between learning $\Hcnn^{n,m}$ and learning $\Dmat$-random networks.
We chose this proof structure since here the proof for each $\Dmat$ is a reduction between decision problems, and thus we found the proofs to be simpler and cleaner this way.
Other than this technical difference, both proof structures are essentially similar and follow the same ideas.

The case of random CNNs is similar, except that here we show, for each distribution $\Dvec$ over vectors, a reduction from $\scat_{n^d}^{A'}(\Hcnn^{n,m})$ to $\scat_{n^d}^A(\Dvec,n)$.

\section{Proofs}
\label{sec:proofs}

\stam{
\subsection{Overview}

We start with a few definitions.
We say that a sample $S = \{(\bx_i,y_i)\}_{i=1}^m \in (\reals^n \times \{0,1\})^m$ is {\em scattered} if $y_1,\ldots,y_m$ are independent fair coins (in particular, they are independent from $\bx_1,\ldots,\bx_m$).
We say that $S$ is {\em contained in $A \subseteq \reals^n$} if $\bx_i \in A$ for every $i \in [m]$.

Let $\ca$ be an algorithm whose input is a sample $S = \{(\bx_i,y_i)\}_{i=1}^{m(n)} \in (\reals^n \times \{0,1\})^{m(n)}$ and whose output is either ``scattered" or ``$\ch$-realizable",  where $\ch$ is an hypothesis class. We say that $\ca$ distinguishes size-$m$ $\ch$-realizable samples from scattered samples if the following holds.
\begin{itemize}
\item If the sample $S$ is scattered, then
\[
\Pr\left(\ca(S)=\text{``scattered"}\right) \ge \frac{3}{4}-o_n(1)~.
\]
Here the probability is over the choice of $S$ and the randomness of $\ca$.

\item If the sample $S$ satisfies $h(\bx_i)=y_i$ for every $i \in [m]$ for some $h \in \ch$, then
\[
\Pr\left(\ca(S)=\text{``$\ch$-realizable"}\right)\ge\frac{3}{4}-o_n(1)~.
\]
Here the probability is over the randomness of $\ca$.
\end{itemize}
We denote by $\scat_{m(n)}^A(\ch)$ the problem of distinguishing size-$m(n)$ $\ch$-realizable samples that are contained in $A \subseteq \reals^n$ from scattered samples.

Now, let $\ca'$ be an algorithm whose input is a sample $S = \{(\bx_i,y_i)\}_{i=1}^{m(n)} \in (\reals^n \times \{0,1\})^{m(n)}$ and whose output is either ``scattered" or ``$\Dmat$-realizable". We say that $\ca'$ distinguishes size-$m$ $\Dmat$-realizable samples from scattered samples if the following holds.
\begin{itemize}
\item If the sample $S$ is scattered, then
\[
\Pr\left(\ca'(S)=\text{``scattered"}\right) \ge \frac{3}{4}-o_n(1)~.
\]
Here the probability is over the choice of $S$ and the randomness of $\ca'$.

\item If the sample $S$ satisfies $h_W(\bx_i)=y_i$ for every $i \in [m]$, where $W$ is a random matrix drawn from $\Dmat$, then
\[
\Pr\left(\ca'(S)=\text{``$\Dmat$-realizable"}\right)\ge\frac{3}{4}-o_n(1)~.
\]
Here the probability is over the choice of $W$ and the randomness of $\ca'$.
\end{itemize}
We denote by $\scat_{m(n)}^A(\Dmat)$ the problem of distinguishing size-$m(n)$ $\Dmat$-realizable samples that are contained in $A \subseteq \reals^n$ from scattered samples.
In the case of random CNNs, we denote by $\scat_{m(n)}^A(\Dvec,n)$ the problem of distinguishing size-$m(n)$ scattered samples that are contained in $A$, from samples that are realizable by a random CNN $h_\bw^n$ where $\bw \sim \Dvec$.

We use the method of \cite{daniely2016complexity} in order to show RSAT-hardness of learning random neural networks.
We first show that if there is an efficient algorithm that learns $\Dmat$-random neural networks where the distribution $\cd$ is supported on $A \subseteq \reals^n$, then there is a fixed $d$ and an efficient algorithm that solves $\scat_{n^d}^A(\Dmat)$.
Then, for some matrix distributions $\Dmat$ and sets $A$, we show that for every fixed $d$, the problem $\scat_{n^d}^A(\Dmat)$ is RSAT-hard.

In order to prove that $\scat_{n^d}^A(\Dmat)$ is RSAT-hard, we proceed in two steps.
Let $\Hcnn^{n,m}=\{h_\bw^n: \bw \in \{\pm 1\}^{\frac{n}{m}}\}$. Note that the CNNs in $\Hcnn^{n,m}$ have $m$ hidden neurons. The first step is showing that for every fixed $d$, the problem $\scat_{n^d}^{A'}(\Hcnn^{n,m})$ with some $m=\co(\log^2(n))$ is RSAT-hard.
Then, the second step is reducing $\scat_{n^d}^{A'}(\Hcnn^{n,m})$ to $\scat_{n^d}^A(\Dmat)$.
In the case of CNNs, we will reduce $\scat_{n^d}^{A'}(\Hcnn^{n,m})$ to $\scat_{n^d}^A(\Dvec,n)$.
}

\subsection{Learning $\Dmat$-random networks is harder than $\scat_{n^d}^A(\Dmat)$}


\begin{theorem}
\label{thm:learn vs scat}
Let $\Dmat$ be a distribution over matrices. 
Assume that there is an algorithm that learns $\Dmat$-random neural networks,
where the distribution $\cd$ is supported on $A \subseteq \reals^n$.
Then, there is a fixed $d$ and an efficient algorithm that solves $\scat_{n^d}^A(\Dmat)$.
\end{theorem}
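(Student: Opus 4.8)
The plan is to use the standard ``learning $\Rightarrow$ distinguishing'' reduction in the style of \cite{daniely2016complexity}: run the assumed learner on resampled examples taken from the input sample $S$, and then decide based on whether the empirical error of the returned predictor falls below a fixed threshold. The point is that on a $\Dmat$-realizable sample the learner is forced to output a predictor with small error on (essentially) all of $S$, whereas on a scattered sample \emph{any} predictor that was computed from only a small portion of $S$ has error bounded away from small on the rest of $S$.

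In detail, I would let $\cl$ be the assumed learner for $\Dmat$-random networks and let $q(n)=\poly(n)$ be an upper bound on the number of examples $\cl$ reads (such a bound exists since $\cl$ runs in polynomial time). I would fix $d$ to be a sufficiently large constant, so that $N:=n^d$ satisfies $q(n)\le N$ for all large $n$ and $q(n)/N\to 0$. Given an input sample $S=\{(\bx_i,y_i)\}_{i=1}^{N}$, the distinguisher $\ca'$ would: (i) take $\cd$ to be the uniform distribution over $\{\bx_1,\dots,\bx_N\}$, draw $q(n)$ indices i.i.d.\ uniformly from $[N]$, and feed the corresponding examples to $\cl$ to obtain a predictor $h$; (ii) compute $\hat L:=\frac1N\sum_{i=1}^N(h(\bx_i)-y_i)^2$; (iii) answer ``$\Dmat$-realizable'' if $\hat L\le\frac15$, and ``scattered'' otherwise. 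Since $\cl$ is efficient (and returns an efficiently evaluable hypothesis), $\ca'$ runs in polynomial time.

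For the realizable case, I would observe that when $y_i=h_W(\bx_i)$ for all $i$ with $W\sim\Dmat$, the examples fed to $\cl$ are genuine i.i.d.\ draws from $\cd$ labeled by $h_W$, and $\cd$ is allowed to depend on $W$; hence by the defining guarantee of $\cl$, with probability at least $\frac34$ (over $W$, the resampling, and $\cl$'s coins) the output $h$ satisfies $\E_{\bx\sim\cd}[(h(\bx)-h_W(\bx))^2]\le\frac1{10}$. Since $\cd$ is uniform over the $\bx_i$ and $h_W(\bx_i)=y_i$, that expectation equals $\hat L$ exactly, so $\ca'$ answers ``$\Dmat$-realizable'' with probability at least $\frac34$. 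For the scattered case, I would condition on the multiset of sampled indices, the labels seen by $\cl$, and $\cl$'s coins, all of which determine $h$; the remaining $\ge N-q(n)$ labels are then i.i.d.\ fair coins independent of $h$, and for each such index $\E_{y_i}[(h(\bx_i)-y_i)^2]=(h(\bx_i)-\tfrac12)^2+\tfrac14\ge\tfrac14$. Since these summands lie in $[0,1]$ and $q(n)/N\to 0$, Hoeffding's inequality gives $\hat L\ge\frac14-o(1)>\frac15$ with probability $1-o(1)$, so $\ca'$ answers ``scattered''. Together these two cases show $\ca'$ solves $\scat_{n^d}^A(\Dmat)$.

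The main (and essentially only) delicate point is the scattered case: we must prevent the learner from ``memorizing'' $S$. This is exactly why $\ca'$ hands it only $q(n)=o(N)$ resampled examples and then evaluates $h$ on all $N$ points — the $N-q(n)$ unseen labels stay independent of $h$, so their average squared loss concentrates around a value $\ge\frac14$, and the $o(N)$ memorized points contribute only $o(1)$ to $\hat L$. Everything else — the concentration estimate, the choice of a threshold strictly between $\frac1{10}$ and $\frac14$, and the polynomial-time bookkeeping — is routine.
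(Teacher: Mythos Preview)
Your argument is correct and essentially identical to the paper's proof: both feed the learner uniformly resampled examples from $S$, compute the empirical squared loss of the returned $h$ on all of $S$, and threshold (the paper takes sample size $p(n)=9m(n)+n$ and threshold $\tfrac1{10}$, and in the scattered case obtains $\ell_S(h)\ge\tfrac19-o(1)$). One minor caveat: your assertion that the summands $(h(\bx_i)-y_i)^2$ lie in $[0,1]$ is unjustified as written, since nothing in the learning definition forces $h$ to have range in $[0,1]$; the paper sidesteps this by first rounding $h$ to $h'\colon\reals^n\to\{0,1\}$ and applying Hoeffding to the indicators $\onefunc[h'(\bx_i)\neq y_i]$, but you can equally well just clip $h$ to $[0,1]$ before evaluating $\hat L$ (clipping can only decrease the squared loss against $\{0,1\}$-valued labels).
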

\begin{proof}
Let $\cl$ be an efficient learning algorithm that learns $\Dmat$-random neural networks where the distribution $\cd$ is supported on $A$. Let $m(n)$ be such that $\cl$ uses a sample of size at most $m(n)$. Let $p(n)=9m(n)+n$. Let $S = \{(\bx_i,y_i)\}_{i=1}^{p(n)} \in (\reals^n \times \{0,1\})^{p(n)}$ be a sample that is contained in $A$. We will show an efficient algorithm $\ca$ that distinguishes whether $S$ is scattered or $\Dmat$-realizable. This implies that the theorem holds for $d$ such that $n^d \geq p(n)$.

Given $S$, the algorithm $\ca$ learns a function $h:\reals^n \rightarrow \reals$ by running $\cl$ with an examples oracle that generates examples by choosing a random (uniformly distributed) example $(\bx_i,y_i) \in S$. We denote $\ell_S(h)=\frac{1}{p(n)}\sum_{i \in [p(n)]}\left(h(\bx_i)-y_i\right)^2$.
Now, if $\ell_S(h) \leq \frac{1}{10}$, then $\ca$ returns that $S$ is $\Dmat$-realizable, and otherwise it returns that it is scattered.
Clearly, the algorithm $\ca$ runs in polynomial time. We now show that if $S$ is $\Dmat$-realizable then $\ca$ recognizes it with probability at least $\frac{3}{4}$, and that if $S$ is scattered then it also recognizes it with probability at least $\frac{3}{4}$.

Assume first that $S$ is $\Dmat$-realizable. Let $\cd_S$ be the uniform distribution over $\bx_i \in \reals^n$ from $S$. In this case, since $\cd_S$ is supported on $A$, we are guaranteed that with probability at least $\frac{3}{4}$ over the choice of $W$ and the internal randomness of $\cl$, we have $\ell_S(h)=\E_{\bx \sim \cd_S}\left[\left(h(\bx)-h_W(\bx)\right)^2\right] \leq \frac{1}{10}$. Therefore, the algorithm returns ``$\Dmat$-realizable".

Now, assume that $S$ is scattered. Let $h:\reals^n \rightarrow \reals$ be the function returned by $\cl$. Let $h':\reals^n \rightarrow \{0,1\}$ be the following function. For every $\bx \in \reals^n$, if $h(\bx) \geq \frac{1}{2}$ then $h'(\bx)=1$, and otherwise $h'(\bx)=0$. Note that for every $(\bx_i,y_i) \in S$, if $h'(\bx_i) \neq y_i$ then $\left(h(\bx_i)-y_i\right)^2 \geq \frac{1}{4}$. Therefore, $\ell_S(h) \geq \frac{1}{4}\ell_S(h')$. Let $C \subseteq [p(n)]$ be the set of indices of $S$ that were not observed by $\cl$. Note that given $C$, the events $\{h'(\bx_i) = y_i\}_{i \in C}$ are independent from one another, and each has probability $\frac{1}{2}$. By the Hoefding bound, we have that $h'(\bx_i) \neq y_i$ for at most $\frac{1}{2} - \sqrt{\frac{\ln(n)}{n}}$ fraction of the indices in $C$ with probability at most
\[
\exp\left(-\frac{2|C|\ln(n)}{n}\right)=\exp\left(-\frac{2(8m(n)+n)\ln(n)}{n}\right) \leq \exp\left(-2\ln(n)\right)=\frac{1}{n^2}~.
\]
Thus, $h'(\bx_i) \neq y_i$ for at least $\frac{1}{2} - o_n(1)$ fraction of the indices in $C$ with probability at least $1-o_n(1)$.
Hence,
\[
\ell_S(h) \geq \frac{1}{4} \ell_S(h') \geq
\frac{1}{4} \cdot \frac{|C|}{p(n)}\left(\frac{1}{2}-o_n(1)\right) =
\frac{1}{4} \cdot \frac{8m(n)+n}{9m(n)+n}\left(\frac{1}{2}-o_n(1)\right) \geq
\frac{1}{9}-o_n(1)~.
\]
Therefore, for large enough $n$, with probability at least $\frac{3}{4}$ we have $\ell_S(h) > \frac{1}{10}$, and thus the algorithm returns ``scattered".
\end{proof}

\subsection{$\scat_{n^d}^A(\Hcnn^{n,m})$ is RSAT-hard}

For a predicate
$P:\{\pm 1\}^K\to\{0,1\}$ we
denote by $\csp(P,\neg P)$ the problem whose instances are collections, $J$, of constraints, each of which is either $P$ or $\neg P$ constraint, and the goal is to maximize the number of satisfied constraints.
Denote by
$\csp^{\rand}_{m(n)}(P,\neg P)$ the problem of distinguishing\footnote{As in $\csp^{\rand}_{m(n)}(P)$, in order to succeed, and algorithm must return ``satisfiable" w.p. at least $\frac{3}{4}-o_n(1)$ on every satisfiable formula and ``random" w.p. at least $\frac{3}{4} - o_n(1)$ on random formulas.} satisfiable from random formulas with $n$ variables and $m(n)$ constraints. Here, in a random formula, each constraint is chosen w.p. $\frac{1}{2}$ to be a uniform $P$ constraint and w.p. $\frac{1}{2}$ a uniform $\neg P$ constraint.

We will consider the predicate $T_{K,M}:\{0,1\}^{KM}\to\{0,1\}$ defined by
\[
T_{K,M}(z)=
\left(z_1\vee\ldots\vee z_K\right)\wedge
\left(z_{K+1}\vee\ldots\vee z_{2K}\right)
\wedge
\ldots
\wedge
\left(z_{(M-1)K+1}\vee\ldots\vee z_{MK}\right)~.
\]

We will need the following lemma from \cite{daniely2016complexity}. For an overview of its proof, see Appendix~\ref{appendix DNF proof}.

\begin{lemma} \cite{daniely2016complexity}
\label{lemma:from daniely}
Let $q(n)=\omega(\log(n))$ with $q(n) \leq \frac{n}{\log(n)}$, and let $d$ and $K$ be fixed integers. The problem $\csp^{\rand}_{n^d}(\sat_K)$ can be efficiently reduced to the problem $\csp^{\rand}_{n^{d-1}}(T_{K,q(n)},\neg T_{K,q(n)})$.
\end{lemma}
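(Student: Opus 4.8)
The plan is to give a polynomial‑time randomized reduction that turns an instance $J$ of $\csp^{\rand}_{n^d}(\sat_K)$ — a $K$‑SAT formula on $n$ variables with $n^d$ clauses, promised to be either satisfiable or uniformly random — into a formula $J'$ for $\csp^{\rand}_{n^{d-1}}(T_{K,q},\neg T_{K,q})$ on the same $n$ variables with $n^{d-1}$ constraints (writing $q=q(n)$), such that: (i) if $J$ is satisfiable then $J'$ is satisfiable with probability $1-o_n(1)$ over the reduction's coins; and (ii) if $J$ is uniformly random then $J'$ is within $o_n(1)$ total variation of a uniformly random instance of $\csp^{\rand}_{n^{d-1}}(T_{K,q},\neg T_{K,q})$. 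Given such a reduction, composing it with any algorithm $\ca'$ for the target problem gives an algorithm for $\csp^{\rand}_{n^d}(\sat_K)$ — run $\ca'$ on $J'$ and echo its answer — and the $\tfrac34-o_n(1)$ guarantees survive since (i) and (ii) only contribute $o_n(1)$ extra error.

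\textbf{The construction.} Recall $T_{K,q}$ is an AND of $q$ width‑$K$ clauses, so a $T_{K,q}$‑constraint is a conjunction of $q$ $K$‑SAT clauses on a common $(Kq)$‑tuple of distinct signed variables, while $\neg T_{K,q}$ is an OR of $q$ width‑$K$ terms. For each of the $n^{d-1}$ output constraints I independently flip a fair coin deciding whether it is of ``$T$‑type'' or ``$\neg T$‑type'' (matching the $\tfrac12/\tfrac12$ mixture in the definition of $\csp^{\rand}(P,\neg P)$). A $\neg T$‑type constraint is sampled from scratch, using none of $J$: pick a uniform $(Kq)$‑tuple of distinct signed variables and output the corresponding $\neg T_{K,q}$‑constraint. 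A $T$‑type constraint is built from $q$ clauses of $J$: scan the clauses of $J$ in order, greedily adding a clause to the current group only if it introduces no variable already present (discarding the rare colliding clauses), until the group has $q$ clauses, then output the AND of them. Since $K$ is a fixed constant and $q\le n/\log n$, a random clause collides with a partially built group with probability $O(K^2 q/n)=o_n(1)$, so each group consumes $q(1+o_n(1))$ clauses in expectation; the $n^{d-1}$ groups therefore consume $o(n^d)$ clauses in expectation, so by Markov the reduction runs out of clauses with probability $o_n(1)$ (abort in that case).

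\textbf{Correctness.} For completeness, let $\psi$ satisfy $J$. Every $T$‑type constraint of $J'$ is an AND of clauses of $J$, hence satisfied by $\psi$. A $\neg T_{K,q}$‑constraint is an OR of $q$ random width‑$K$ terms; a fixed $\psi$ falsifies a random term with probability $1-2^{-K}$, independently, so it falsifies a given $\neg T$‑type constraint with probability $(1-2^{-K})^q$. As these constraints are sampled independently of $J$, a union bound over the at most $n^{d-1}$ of them shows $\psi$ satisfies all of $J'$ with probability at least $1-n^{d-1}(1-2^{-K})^q$, which is $1-o_n(1)$ precisely because $q=\omega(\log n)$ and $K$ is constant (so $(1-2^{-K})^q\le e^{-q/2^K}$ beats $n^{-(d-1)}$ with room to spare). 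For soundness, let $J$ be uniformly random. Conditioned on not aborting, $J'$ is distributed exactly as a uniformly random $\csp(T_{K,q},\neg T_{K,q})$ instance: the type coins are i.i.d.\ fair; the $\neg T$‑type constraints are i.i.d.\ uniform and independent of everything else; and each greedily built $T$‑type constraint is a uniform element of $\cx_{n,Kq}$ — this is the standard fact that accepting clauses uniform over $\cx_{n,K}$ subject to extending the current partial tuple yields a uniform full tuple — with distinct $T$‑type constraints independent of one another (disjoint batches of i.i.d.\ clauses, via the strong Markov property of the scan) and of the $\neg T$ part. Hence the total‑variation distance to a uniform target instance is at most the abort probability, $o_n(1)$.

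\textbf{Main obstacle.} The step that is more than bookkeeping is the completeness analysis for the $\neg T$‑type constraints: one must certify that $\Theta(n^{d-1})$ freshly sampled $K$‑DNF‑type constraints are \emph{simultaneously} satisfied by the adversarially chosen, reduction‑oblivious assignment $\psi$, and it is exactly the requirement $n^{d-1}(1-2^{-K})^q=o_n(1)$ that forces the width parameter to grow as $q=\omega(\log n)$. The greedy disjointification (needed only to respect distinctness of the variables inside a $\cx_{n,Kq}$‑tuple), the accompanying no‑abort estimate, and the distributional matching in the soundness direction are routine by comparison.
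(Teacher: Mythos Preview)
Your argument is correct and follows essentially the same two-step reduction as the paper: greedily aggregate $K$-SAT clauses into variable-disjoint $T_{K,q}$-constraints, then replace each with probability $\tfrac12$ by a fresh uniform $\neg T_{K,q}$-constraint, using $q=\omega(\log n)$ so that the union bound over the $\Theta(n^{d-1})$ fresh constraints goes through (the paper does the two steps sequentially rather than interleaved, which is cosmetic). One minor point: your Markov bound on the abort probability uses that clauses are random and so only applies in the soundness case---an adversarial satisfiable $J$ can force the greedy to abort---so, as the paper's outline also implicitly does, on abort the reduction should simply declare ``satisfiable'' (or output a trivially satisfiable instance).
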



In the following lemma, we use Lemma~\ref{lemma:from daniely} in order to show RSAT-hardness of $\scat_{n^d}^A(\Hcnn^{n,m})$ with some appropriate $m$ and $A$.

\begin{lemma}
\label{lemma:scat Hcnn}
Let $n=(n'+1)\log^2(n')$, and let $d$ be a fixed integer. The problem $\scat_{n^d}^A(\Hcnn^{n,\log^2(n')})$, where $A$ is the ball of radius $\log^2(n')$ in $\reals^n$, is RSAT-hard.
\stam{
Let $S = \{(\bx_i,y_i)\}_{i=1}^{(n')^d} \in (\reals^{n'} \times \{0,1\})^{(n')^d}$ be a sample, where $d$ is fixed, and $n'=(n+1)\log^2(n)$. Distinguishing whether $S$ is scattered or $\Hcnn^{n',\log^2(n)}$-realizable is RSAT-hard, already when $\norm{\bx_i} \leq \log^2(n)$ for every $i \in [(n')^d]$.
}
\end{lemma}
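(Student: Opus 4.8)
The plan is to compose the reduction of Lemma~\ref{lemma:from daniely} with a \emph{gadget reduction} that turns an instance of $\csp^{\rand}(T_{K,q},\neg T_{K,q})$ into a labeled sample which is $\Hcnn^{n,\log^2(n')}$-realizable exactly when the formula is satisfiable, and is scattered when the formula is random. We take $q=q(n')=\log^2(n')$, so that the target CNNs $h^n_\bw$ have patch size $n/q=n'+1$ and exactly $q=\log^2(n')$ patches (hidden neurons); the point is that the number of patches equals the number of clauses in the predicate $T_{K,q}$. Note $q(n')=\log^2(n')$ satisfies the hypotheses $q=\omega(\log n')$ and $q\le n'/\log n'$ of Lemma~\ref{lemma:from daniely}.

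\emph{The gadget.} Let $J=\{C_1,\dots,C_m\}$ be a $(T_{K,q},\neg T_{K,q})$-formula on $n'$ variables, with $C_j=P_j\circ U_{x_j}$, $P_j\in\{T_{K,q},\neg T_{K,q}\}$, and $x_j=[(\alpha^j_1,i^j_1),\dots,(\alpha^j_{Kq},i^j_{Kq})]\in\cx_{n',Kq}$. For each $j$ we form $(\bx_j,y_j)\in\reals^n\times\{0,1\}$ with $n=(n'+1)q$: view $\bx_j$ as a concatenation of $q$ patches $P^j_1,\dots,P^j_q\in\reals^{n'+1}$, where the $r$-th clause of $T_{K,q}$ reads the literals indexed $(r-1)K+1,\dots,rK$, and set
\[
P^j_r\;=\;\sum_{l=(r-1)K+1}^{rK}(-\alpha^j_l)\,e_{i^j_l}\;+\;(2-K)\,e_{n'+1},
\]
with $e_k$ the $k$-th standard basis vector of $\reals^{n'+1}$; set $y_j=0$ if $P_j=T_{K,q}$ and $y_j=1$ if $P_j=\neg T_{K,q}$. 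For correctness, given an assignment $\psi\in\{\pm1\}^{n'}$ take the filter $\bw=(\psi_1,\dots,\psi_{n'},1)\in\{\pm1\}^{n'+1}$. Then $\inner{\bw,P^j_r}=\sum_l(-\alpha^j_l\psi_{i^j_l})+(2-K)$; since the $i^j_l$ are distinct and each summand is $\pm1$, this is $2$ when the $r$-th clause of $U_{x_j}(\psi)$ is falsified (all $K$ literals false) and $\le 0$ otherwise. Hence $[\inner{\bw,P^j_r}]_+\in\{0,2\}$, and summing over $r$ and clipping gives $h^n_\bw(\bx_j)=\onefunc[\text{some clause of }U_{x_j}(\psi)\text{ is falsified}]=1-T_{K,q}(U_{x_j}(\psi))$. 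One checks directly that $h^n_\bw(\bx_j)=y_j$ for all $j$ iff $\psi$ satisfies every $C_j$. Thus a satisfiable $J$ yields an $\Hcnn^{n,\log^2(n')}$-realizable sample; a random $J$ yields a sample in which each $y_j$ is exactly the fair coin choosing the type of $C_j$, independent of the (uniform) tuple $x_j$ and hence of $\bx_j$, so the sample is scattered. For the norm bound, $\norm{P^j_r}^2=K+(K-2)^2=O(K^2)$, so $\norm{\bx_j}^2=\sum_{r=1}^q\norm{P^j_r}^2=O(qK^2)$, and since $K$ is a fixed constant and $q=\log^2(n')$, for large $n'$ we get $\norm{\bx_j}=O(K\log n')\le\log^2(n')$, i.e. $\bx_j$ lies in the ball $A$ of radius $\log^2(n')$.

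\emph{Putting it together.} Fix the integer $d$ of the statement and a fixed $K$. Apply Lemma~\ref{lemma:from daniely} with exponent $d+2$ to reduce $\csp^{\rand}_{(n')^{d+2}}(\sat_K)$ to $\csp^{\rand}_{(n')^{d+1}}(T_{K,q},\neg T_{K,q})$, then apply the gadget to produce $(n')^{d+1}$ labeled examples, of which we keep the first $n^d$ (a sub-sample of a realizable sample is realizable, a sub-sample of a scattered sample is scattered, and $(n')^{d+1}\ge n^d$ for large $n'$ because $n=(n'+1)\log^2(n')$). Both steps are valid reductions between the corresponding ``distinguish-satisfiable-from-random'' problems — the gadget because it maps satisfiable formulas to realizable samples and random formulas to scattered samples — so their composition reduces $\csp^{\rand}_{(n')^{d+2}}(\sat_K)$ to $\scat_{n^d}^{A}(\Hcnn^{n,\log^2(n')})$. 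An efficient algorithm for the latter would therefore, for \emph{every} fixed $K$, give an efficient algorithm for $\csp^{\rand}_{N^{d+2}}(\sat_K)$ (with $d+2$ independent of $K$), contradicting Assumption~\ref{hyp:only_sat}; hence $\scat_{n^d}^{A}(\Hcnn^{n,\log^2(n')})$ is RSAT-hard.

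\emph{Main obstacle.} The crux is the gadget: one must encode each $T_{K,q}$/$\neg T_{K,q}$ constraint by a depth-$2$ CNN whose first-layer filter is restricted to $\{\pm1\}$, has no bias, and acts on non-overlapping patches, while forcing every input into the extremely small ball of radius $\log^2(n')$. The $\pm1$-and-no-bias restriction is what dictates using the extra $(n'+1)$-st coordinate as a constant-weight ``bias'' together with the shift $2-K$; the exact match between the $q=\log^2(n')$ patches and the $q$ clauses of $T_{K,q}$ is what lets a single sum-of-ReLUs-then-clip compute the required NAND-type function clause-by-clause; and the two-sided $T/\neg T$ structure, inherited from Lemma~\ref{lemma:from daniely}, is precisely what makes the labels produced from a random formula behave as independent fair coins, i.e.\ scattered. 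Once these design choices are in place, the spectral and norm estimates are routine because $K$ is constant and there are only $\log^2(n')$ patches.
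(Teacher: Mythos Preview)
Your proof is correct and follows essentially the same approach as the paper: invoke Lemma~\ref{lemma:from daniely} with $q=\log^2(n')$, then encode each $(T_{K,q},\neg T_{K,q})$-constraint as a vector whose $\log^2(n')$ patches represent the clauses, using the extra $(n'+1)$-st coordinate as a bias surrogate so that a filter $\bw=(\psi,1)\in\{\pm1\}^{n'+1}$ computes the right Boolean function after ReLU and clipping. The only differences are cosmetic: the paper first rewrites every constraint as a DNF $C'_i=\neg T_{K,q}\circ U_{x_i}$ and uses bias constant $-(K-1)$ (so each ReLU outputs $0$ or $1$), whereas you compute $1-T_{K,q}$ directly with constant $2-K$ (ReLU output $0$ or $2$); both yield the same labels, the same ``scattered vs.\ realizable'' dichotomy, and the same $O(K\log n')\le\log^2(n')$ norm bound.
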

\begin{proof}
By Assumption~\ref{hyp:only_sat}, there is $K$ such that $\csp^{\rand}_{(n')^{d+2}}(\sat_K)$ is hard, where the $K$-SAT formula is over $n'$ variables.
Then, by Lemma~\ref{lemma:from daniely}, the problem $\csp^{\rand}_{(n')^{d+1}}(T_{K,\log^2(n')},\neg T_{K,\log^2(n')})$ is also hard.
We will reduce $\csp^{\rand}_{(n')^{d+1}}(T_{K,\log^2(n')},\neg T_{K,\log^2(n')})$ to
$\scat_{(n')^{d+1}}^A(\Hcnn^{n,\log^2(n')})$.
Since $(n')^{d+1}>n^d$, it would imply that $\scat_{n^d}^A(\Hcnn^{n,\log^2(n')})$ is RSAT-hard.

Let $J=\{C_1,\ldots,C_{(n')^{d+1}}\}$ be an input for $\csp^{\rand}_{(n')^{d+1}}(T_{K,\log^2(n')},\neg T_{K,\log^2(n')})$. Namely, each constraint $C_i$ is either a CNF or a DNF formula.
Equivalently, $J$ can be written as $J'=\{(C'_1,y_1),\ldots,(C'_{(n')^{d+1}},y_{(n')^{d+1}})\}$ where for every $i$, if $C_i$ is a DNF formula then $C'_i=C_i$ and $y_i=1$, and if $C_i$ is a CNF formula then $C'_i$ is the DNF obtained by negating $C_i$, and $y_i=0$. Given $J'$ as above, we encode each DNF formula $C'_i$ (with $\log^2(n')$ clauses) as a vector $\bx_i \in \reals^n$ such that each clause $[(\alpha_1,i_1),\ldots,(\alpha_K,i_K)]$ in $C'_i$ (a signed $K$-tuple) is encoded by a vector $\bz = (z_1,\ldots,z_{n'+1})$ as follows. First, we have $z_{n'+1}=-(K-1)$. Then, for every $1 \leq j \leq K$ we have $z_{i_j}=\alpha_j$, and for every variable $l$ that does not appear in the clause we have $z_l=0$. Thus, for every $1 \leq l \leq n'$, the value of $z_l$ indicates whether the $l$-th variable appears in the clause as a positive literal, a negative literal, or does not appear. The encoding $\bx_i$ of $C'_i$ is the concatenation of the encodings of its clauses.

Let $S=\{(\bx_1,y_1),\ldots,(\bx_{(n')^{d+1}},y_{(n')^{d+1}})\}$.
If $J$ is random then $S$ is scattered, since each constraint $C_i$ is with probability $\frac{1}{2}$ a DNF formula, and with probability $\frac{1}{2}$ a CNF formula, and this choice is independent of the choice of the literals in $C_i$.
Assume now that $J$ is satisfiable by an assignment $\psi \in \{\pm 1\}^{n'}$. Let $\bw=(\psi,1) \in \{\pm 1\}^{n'+1}$. Note that $S$ is realizable by the CNN $h_\bw^n$ with $\log^2(n')$ hidden neurons. Indeed, if $\bz \in \reals^{n'+1}$ is the encoding of a clause of $C'_i$, then $\inner{\bz,\bw}=1$ if all the $K$ literals in the clause are satisfied by $\psi$, and otherwise $\inner{\bz,\bw} \leq -1$. Therefore, $h_\bw^n(\bx_i)=y_i$.

Note that by our construction, for every $i \in [(n')^{d+1}]$ we have for large enough $n'$
\[
\norm{\bx_i} = \sqrt{\log^2(n')\left(K+(K-1)^2\right)} \leq \log(n') \cdot K \leq \log^2(n')~.
\]
\end{proof}

\subsection{Hardness of learning random fully-connected neural networks}

Let $n=(n'+1)\log^2(n')$.
We say that a matrix $M$ of size $n \times n$ is a {\em diagonal-blocks matrix} if
\[
M = \begin{bmatrix}
    B^{11} & \dots & B^{1\log^2(n')} \\
    \vdots & \ddots & \vdots \\
    B^{\log^2(n')1} & \dots & B^{\log^2(n')\log^2(n')}
    \end{bmatrix}
\]
where each block $B^{ij}$ is a diagonal matrix $\diag(z_1^{ij},\ldots,z_{n'+1}^{ij})$.
For every $1 \leq i \leq n'+1$ let $S_i=\{i + j(n'+1): 0 \leq j \leq \log^2(n')-1\}$. Let $M_{S_i}$ be the submatrix of $M$ obtained by selecting the rows and columns in $S_i$. Thus, $M_{S_i}$ is a matrix of size $\log^2(n') \times \log^2(n')$. For $\bx \in \reals^n$ let $\bx_{S_i} \in \reals^{\log^2(n')}$ be the restriction of $\bx$ to the coordinates $S_i$.

\begin{lemma}
\label{lemma:diagonal blocks singular}
Let $M$ be a diagonal-blocks matrix. Then,
\[
s_{\min}(M) \geq \min_{1 \leq i \leq n'+1}s_{\min}(M_{S_i})~.
\]
\end{lemma}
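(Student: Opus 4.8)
The plan is to exploit the (notationally slightly hidden) fact that a diagonal-blocks matrix is just a coordinate permutation of a genuinely block-diagonal matrix whose blocks are exactly the submatrices $M_{S_i}$. Write $k := \log^2(n')$, so that $n = (n'+1)k$, and note that every index $a \in [n]$ has a unique representation $a = i + (b-1)(n'+1)$ with $i \in [n'+1]$ and $b \in [k]$; call $i$ the position of $a$ inside its block and $b$ its block index. Then $S_i = \{\,i + (b-1)(n'+1) : b \in [k]\,\}$ is precisely the set of indices of position $i$, and $\{S_1,\dots,S_{n'+1}\}$ is a partition of $[n]$.

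First I would check that $M$ leaves each coordinate subspace $\reals^{S_i}$ invariant and acts on it as $M_{S_i}$. Indeed, for $a = i + (b_1-1)(n'+1)$ and $c = i' + (b_2-1)(n'+1)$ the entry $M_{ac}$ equals the $(i,i')$ entry of the diagonal block $B^{b_1 b_2}$, hence vanishes unless $i = i'$, in which case it equals $z_i^{b_1 b_2}$; under the bijection $b \leftrightarrow i + (b-1)(n'+1)$ between $[k]$ and $S_i$ this is exactly the $(b_1,b_2)$ entry of $M_{S_i}$. Consequently, for every $\bx \in \reals^n$ and every $i$ the restriction of $M\bx$ to $S_i$ depends only on $\bx_{S_i}$, and in fact $(M\bx)_{S_i} = M_{S_i}\,\bx_{S_i}$.

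Once this is in place the lemma follows from a single energy estimate: for any $\bx \in \reals^n$,
\[
\norm{M\bx}^2 = \sum_{i=1}^{n'+1}\norm{(M\bx)_{S_i}}^2 = \sum_{i=1}^{n'+1}\norm{M_{S_i}\bx_{S_i}}^2 \ge \sum_{i=1}^{n'+1} s_{\min}(M_{S_i})^2\norm{\bx_{S_i}}^2 \ge \Big(\min_{1\le i\le n'+1} s_{\min}(M_{S_i})\Big)^2\norm{\bx}^2 ,
\]
where we use repeatedly that $\{S_1,\dots,S_{n'+1}\}$ partitions $[n]$, so that both $\norm{M\bx}^2$ and $\norm{\bx}^2$ split as sums over the $S_i$. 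Taking square roots and minimizing over unit $\bx$ yields $s_{\min}(M) \ge \min_i s_{\min}(M_{S_i})$ (in fact equality, since a coordinate permutation preserves singular values and the singular values of a block-diagonal matrix are the union of those of its blocks, but only the inequality is needed downstream). There is no real obstacle here; the only thing to be careful about is the index bookkeeping that identifies $S_i$ with $[k]$ and matches the entries of $M$ restricted to $S_i$ with those of $M_{S_i}$.
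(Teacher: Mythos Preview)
Your proof is correct and follows essentially the same approach as the paper: decompose $\norm{M\bx}^2$ as $\sum_i \norm{M_{S_i}\bx_{S_i}}^2$, bound each summand below by $s_{\min}(M_{S_i})^2\norm{\bx_{S_i}}^2$, and pull out the minimum. The only difference is that you spell out the index bookkeeping justifying $(M\bx)_{S_i} = M_{S_i}\bx_{S_i}$, which the paper takes for granted.
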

\begin{proof}
For every $\bx \in \reals^n$ with $\norm{\bx}=1$ we have
\begin{eqnarray*}
\norm{M\bx}^2
&=& \sum_{1 \leq i \leq n'+1}\norm{M_{S_i}\bx_{S_i}}^2
\geq \sum_{1 \leq i \leq n'+1}\left(s_{\min}(M_{S_i})\norm{\bx_{S_i}}\right)^2
\\
&\geq& \min_{1 \leq i \leq n'+1}\left(s_{\min}(M_{S_i})\right)^2\sum_{1 \leq i \leq n'+1}\norm{\bx_{S_i}}^2
= \left(\min_{1 \leq i \leq n'+1}\left(s_{\min}(M_{S_i})\right)^2\right) \norm{\bx}^2
\\
&=& \min_{1 \leq i \leq n'+1}\left(s_{\min}(M_{S_i})\right)^2~.
\end{eqnarray*}
Hence, $s_{\min}(M) \geq \min_{1 \leq i \leq n'+1}s_{\min}(M_{S_i})$.
\end{proof}

\stam{
\begin{lemma}
Let $M$ be a diagonal blocks matrix. For every line $i$ in $M$ we denote $i=(b-1)(n+1)+r$, where $b,r$ are integers and $1 \leq r \leq n+1$. Thus, $b$ is the line index of the blocks in $M$ that correspond to the $i$-th line in $M$, and $r$ is the line index within the blocks.
Let $h_\bw^{n'}$ be a CNN such that $\bw \in \{\pm 1\}^{n+1}$, and let $W \in \reals^{n' \times \log^2(n)}$ such that $h_\bw^{n'}=h_W$.
Let $W'=MW$.
\end{lemma}
\begin{proof}
Let $M^\top=(\bv^1,\ldots,\bv^{n'})$.  Now, note that
\begin{eqnarray*}
W'_{ij}
&=& \inner{\bv^i,\bw^j}
= \inner{\left(\bv^i_{(j-1)(n+1)+1},\ldots,\bv^i_{j(n+1)}\right),\bw}
= \inner{(B^{b j}_{r 1},\ldots,B^{b j}_{r (n+1)}),\bw}
\\
&=& B^{b j}_{r r} \cdot \bw_{r} = z^{b j}_{r} \cdot \bw_{r}~.
\end{eqnarray*}
\end{proof}
}

\subsubsection{Proof of Theorem~\ref{thm:nn iid}}

Let $M$ be a diagonal-blocks matrix, where each block $B^{ij}$ is a diagonal matrix $\diag(z_1^{ij},\ldots,z_{n'+1}^{ij})$.
Assume that for all $i,j,l$ the entries $z_l^{ij}$ are i.i.d. copies of a random variable $z$ that has a symmetric distribution $\cd_z$ with variance $\sigma^2$. Also, assume that the random variable $z'=\frac{z}{\sigma}$ is $b$-subgaussian for some fixed $b$.

\stam{
\begin{theorem} \cite{tao2010random}
\label{thm:tao}
Let $\xi$ be a real random variable with $\E\xi=0$ and $\E\xi^2=1$. Also, suppose that $\E|\xi|^{C_0} < \infty$ for some sufficiently large universal constant $C_0$. Let $M_n(\xi)$ denote the random $n \times n$ matrix whose entries are i.i.d. copies of $\xi$. Then, for all $t>0$ we have
\[
Pr\left(s_{\min}(M_n(\xi)) \leq \frac{t}{\sqrt{n}}\right) < t + \co(n^{-c})~,
\]
where $c>0$ is a universal constant.
\end{theorem}
}

\begin{lemma}
\label{lemma:M singular iid}
\[
Pr\left(s_{\min}(M) \leq \frac{\sigma}{n'\log^2(n')}\right) = o_n(1)~.
\]
\end{lemma}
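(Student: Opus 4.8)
The plan is to exploit the block structure of $M$ via Lemma \ref{lemma:diagonal blocks singular}, which reduces the problem to controlling the minimal singular value of the submatrices $M_{S_i}$ for $1 \le i \le n'+1$. The key observation is that each $M_{S_i}$ is a $\log^2(n') \times \log^2(n')$ matrix whose entries are exactly the $z_l^{ij}$ with $l = i$, i.e., i.i.d.\ copies of $z$; moreover, since the sets $S_i$ are disjoint, the matrices $M_{S_1}, \ldots, M_{S_{n'+1}}$ are \emph{independent}. Writing $k = \log^2(n')$, I would first rescale: $M_{S_i} = \sigma \cdot \tilde M_i$ where $\tilde M_i$ has i.i.d.\ entries distributed as $z' = z/\sigma$, which is mean-zero (by symmetry of $\cd_z$), variance $1$, and $b$-subgaussian. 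Then $s_{\min}(M_{S_i}) = \sigma \, s_{\min}(\tilde M_i)$.

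The main step is a smallest-singular-value bound for a single $k \times k$ matrix with i.i.d.\ subgaussian entries. I would invoke a standard result (Rudelson--Vershynin type, e.g.\ from the non-asymptotic random matrix theory literature) of the form
\[
\Pr\left(s_{\min}(\tilde M_i) \le \frac{\varepsilon}{\sqrt{k}}\right) \le C\varepsilon + c^{k}
\]
for universal constants $C$ and $c \in (0,1)$ depending only on $b$. Applying this with $\varepsilon = \frac{1}{n' \sqrt{k}} = \frac{1}{n' \log(n')}$ (so that $\frac{\varepsilon}{\sqrt k} = \frac{1}{n' \log^2(n')} = \frac{1}{n' k}$, matching the target up to the $\sigma$ factor) gives that each $M_{S_i}$ has $s_{\min}(M_{S_i}) \le \frac{\sigma}{n' \log^2(n')}$ with probability at most $\frac{C}{n'\log(n')} + c^{\log^2(n')}$. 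The second term is super-polynomially small, so the bound is $O\!\left(\frac{1}{n'\log(n')}\right)$.

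Finally I would apply a union bound over the $n'+1$ values of $i$: by Lemma \ref{lemma:diagonal blocks singular},
\[
\Pr\left(s_{\min}(M) \le \tfrac{\sigma}{n'\log^2(n')}\right) \le \sum_{i=1}^{n'+1}\Pr\left(s_{\min}(M_{S_i}) \le \tfrac{\sigma}{n'\log^2(n')}\right) \le (n'+1)\left(\frac{C}{n'\log(n')} + c^{\log^2(n')}\right) = o_n(1),
\]
since $(n'+1)\cdot\frac{C}{n'\log(n')} = O\!\left(\frac{1}{\log(n')}\right) \to 0$ and $n' = o_n(n)$ so $o_{n'}(1) = o_n(1)$. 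The main obstacle is getting the right quantitative smallest-singular-value estimate: one needs a version valid for general (non-Gaussian) subgaussian entries with the linear-in-$\varepsilon$ dependence near zero \emph{and} an exponentially small (in dimension $k$) additive term, so that after multiplying by $n'+1$ the exceptional probability still vanishes. If such a clean black-box bound is awkward to cite, an alternative is to prove the single-matrix estimate directly: control $\min_{\norm{\bu}=1}\norm{\tilde M_i \bu}$ via an $\varepsilon$-net argument for the ``compressible'' directions and a Rudelson--Vershynin small-ball / Littlewood--Offord argument (using the subgaussian anti-concentration) for the ``incompressible'' directions. Everything else — the block reduction, the rescaling, and the union bound — is routine.
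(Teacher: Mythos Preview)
Your proposal is correct and matches the paper's proof essentially step for step: the paper also rescales by $\sigma$, applies Lemma~\ref{lemma:diagonal blocks singular}, invokes the Rudelson--Vershynin bound $\Pr(s_{\min}(A)\le t/\sqrt{k})\le Ct+c^{k}$ (cited as \cite{rudelson2008littlewood}) with $t=\frac{1}{n'\log(n')}$, and union bounds over the $n'+1$ submatrices. The only cosmetic difference is that the paper rescales the full matrix $M$ first rather than each $M_{S_i}$ separately.
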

\begin{proof}
Let $M'=\frac{1}{\sigma}M$.
By Lemma~\ref{lemma:diagonal blocks singular}, we have
\begin{equation}\label{eq:submatrix singular value}
s_{\min}(M') \geq \min_{1 \leq i \leq n'+1}s_{\min}(M'_{S_i})~.
\end{equation}
Note that for every $i$, all entries of the matrix $M'_{S_i}$ are i.i.d. copies of $z'$.

Now, we need the following theorem:
\begin{theorem} \cite{rudelson2008littlewood}
\label{thm:subgaussian}
Let $\xi$ be a real random variable with expectation $0$ and variance $1$, and assume that $\xi$ is $b$-subgaussian for some $b>0$. Let $A$ be an $n \times n$ matrix whose entries are i.i.d. copies of $\xi$. Then, for every $t \geq 0$ we have
\[
Pr\left(s_{\min}(A) \leq \frac{t}{\sqrt{n}}\right) \leq C t + c^n
\]
where $C>0$ and $c \in (0,1)$ depend only on $b$.
\end{theorem}

By Theorem~\ref{thm:subgaussian}, since each matrix $M'_{S_i}$ is of size $\log^2(n') \times \log^2(n')$, we have for every $i \in [n'+1]$ that
\[
Pr\left(s_{\min}(M'_{S_i}) \leq \frac{t}{\log(n')}\right) \leq C t + c^{\log^2(n')}~.
\]
By choosing $t = \frac{1}{n'\log(n')}$ we have
\[
Pr\left(s_{\min}(M'_{S_i}) \leq \frac{1}{n'\log^2(n')}\right) \leq \frac{C}{n'\log(n')} + c^{\log^2(n')}~.
\]
Then, by the union bound we have
\[
Pr\left(\min_{1 \leq i \leq n'+1}\left(s_{\min}(M'_{S_i})\right) \leq \frac{1}{n'\log^2(n')}\right) \leq \frac{C(n'+1)}{n'\log(n')} + c^{\log^2(n')}(n'+1)=o_n(1)~.
\]
Combining this with $s_{\min}(M) = \sigma \cdot s_{\min}(M')$ and with Eq.~\ref{eq:submatrix singular value}, we have
\begin{eqnarray*}
Pr\left(s_{\min}(M) \leq \frac{\sigma}{n'\log^2(n')}\right)
&=& Pr\left(s_{\min}(M') \leq \frac{1}{n'\log^2(n')}\right)
\\
&\leq& Pr\left(\min_{1 \leq i \leq n'+1}\left(s_{\min}(M'_{S_i})\right)\leq \frac{1}{n'\log^2(n')}\right)
= o_n(1)~.
\end{eqnarray*}
\end{proof}


\begin{lemma}
\label{lemma:nn iid scat}
Let $\Dmat$ be a distribution over $\reals^{n \times \log^2(n')}$ such that each entry is drawn i.i.d. from $\cd_z$. Note that a $\Dmat$-random network $h_W$ has $\log^2(n')=\co(\log^2(n))$ hidden neurons.
Let $d$ be a fixed integer. Then, $\scat_{n^d}^A(\Dmat)$ is RSAT-hard, where $A$ is the ball of radius $\frac{n\log^2(n)}{\sigma}$ in $\reals^{n}$.
\end{lemma}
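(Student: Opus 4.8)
The plan is to reduce the RSAT-hard problem $\scat_{n^d}^{A'}\!\left(\Hcnn^{n,\log^2(n')}\right)$ of Lemma~\ref{lemma:scat Hcnn} --- where $A'$ is the ball of radius $\log^2(n')$ in $\reals^n$ --- to $\scat_{n^d}^A(\Dmat)$, using the ``random change of variables'' strategy outlined in Section~\ref{sec:proof idea}. Assume $\ca'$ efficiently solves $\scat_{n^d}^A(\Dmat)$. Given an input sample $S=\{(\bx_i,y_i)\}_{i=1}^{n^d}$ contained in $A'$, the reduction would: draw a random diagonal-blocks matrix $M$ whose blocks are $(n'+1)\times(n'+1)$ diagonal matrices arranged in a $\log^2(n')\times\log^2(n')$ grid and whose diagonal entries $z_l^{ij}$ are i.i.d.\ copies of $\cd_z$; test whether $s_{\min}(M) > \frac{\sigma}{n'\log^2(n')}$; and, if so, form $S'=\{((M^\top)^{-1}\bx_i,\, y_i)\}_{i=1}^{n^d}$, run $\ca'(S')$, and return its answer; if the test fails it returns ``scattered''. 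This is a randomized polynomial-time reduction, so establishing its correctness yields RSAT-hardness of $\scat_{n^d}^A(\Dmat)$.

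Two facts drive the correctness. First, the algebraic identity $(MW)^\top(M^\top)^{-1}\bx_i = W^\top M^\top (M^\top)^{-1}\bx_i = W^\top\bx_i$ shows that if $W$ is the $n\times\log^2(n')$ matrix with the block structure realizing some $h_\bw^n\in\Hcnn^{n,\log^2(n')}$ (so $\bw\in\{\pm1\}^{n'+1}$), then $h_{MW}\!\left((M^\top)^{-1}\bx_i\right)=h_W(\bx_i)$, hence an $\Hcnn^{n,\log^2(n')}$-realizable $S$ is mapped to a sample $S'$ realizable by $h_{MW}$. Second --- and this is the step where the symmetry of $\cd_z$ is used --- $MW\sim\Dmat$: as computed in Section~\ref{sec:proof idea}, $(MW)_{ij}=z^{bj}_r\cdot\bw_r$ where the $z^{bj}_r$ are distinct entries of $M$, and since $z^{bj}_r\sim\cd_z$ with $\cd_z$ symmetric, multiplying by $\bw_r\in\{\pm1\}$ leaves the law unchanged, so $MW$ has i.i.d.\ $\cd_z$ entries. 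In the other direction, if $S$ is scattered then $S'$ is scattered as well, because $M$ is drawn independently of the labels and the map alters only the $\bx_i$'s.

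For the reduction to be valid I then need $M$ to be invertible with a controlled inverse, and here I would invoke Lemma~\ref{lemma:M singular iid}: with probability $1-o_n(1)$ we have $s_{\min}(M)>\frac{\sigma}{n'\log^2(n')}$, so $M$ is invertible and $\snorm{(M^\top)^{-1}}_{\mathrm{op}}=s_{\min}(M)^{-1}<\frac{n'\log^2(n')}{\sigma}$; since $\snorm{\bx_i}\le\log^2(n')$ this gives $\snorm{(M^\top)^{-1}\bx_i}<\frac{n'\log^4(n')}{\sigma}\le\frac{n\log^2(n)}{\sigma}$ (using $n=(n'+1)\log^2(n')$ and $\log n\ge\log n'$), i.e.\ $S'$ is contained in $A$. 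Putting the pieces together: when $S$ is $\Hcnn^{n,\log^2(n')}$-realizable, with probability $1-o_n(1)$ over $M$ the sample $S'$ is both $\Dmat$-realizable (via $MW\sim\Dmat$) and inside $A$, so $\ca'$ --- and hence the reduction --- outputs ``$\Dmat$-realizable'' with probability $\ge\frac{3}{4}-o_n(1)$; when $S$ is scattered, $S'$ is a scattered sample inside $A$ whenever the test passes, and the test together with $\ca'$'s guarantee give ``scattered'' with probability $\ge\frac{3}{4}-o_n(1)$.

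Since the heavy lifting (RSAT-hardness of $\scat_{n^d}^{A'}(\Hcnn^{n,\log^2(n')})$ and the singular-value bound) is already available, the only genuinely delicate point --- and the step I expect to need the most care --- is the conditioning on the ``$M$ is good'' event: the law of $MW$ conditioned on $\{s_{\min}(M)>\sigma/(n'\log^2(n'))\}$ is not exactly $\Dmat$, so one cannot literally feed the conditional instance to $\ca'$ and invoke its promise. The fix is to note that, on the complementary $o_n(1)$-probability event, one may analyze the reduction as if it ran $\ca'$ on a fixed sample that is trivially $\Dmat$-realizable and contained in $A$ (e.g.\ all $\bx_i=\zero$, $y_i=0$, using $h_{MW}(\zero)=0$ and $\zero\in A$); this modified process produces a $\Dmat$-realizable-in-$A$ sample under the \emph{unconditional} law $MW\sim\Dmat$ and differs from the actual reduction only on an $o_n(1)$-probability event, which is absorbed into the $o_n(1)$ slack in the success probability. (Equivalently, a total-variation bound between the conditional and unconditional laws of $M$ suffices.) Everything else --- polynomial running time, the i.i.d.\ fair-coin structure of the labels of $S'$ in the scattered case, and the elementary comparison of $n$ with $n'$ --- is routine.
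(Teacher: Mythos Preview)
Your proposal is correct and follows the paper's proof essentially verbatim: the same reduction from $\scat_{n^d}^{A'}(\Hcnn^{n,\log^2(n')})$ via the random diagonal-blocks matrix $M$, the same computation $(MW)_{ij}=z^{bj}_r\bw_r$ showing $MW\sim\Dmat$ by symmetry of $\cd_z$, and the same invocation of Lemma~\ref{lemma:M singular iid} for invertibility and the norm bound. Your treatment of the conditioning on $\{s_{\min}(M)>\sigma/(n'\log^2(n'))\}$ is in fact more careful than the paper's, which simply remarks that the reduction may fail with probability $o_n(1)$ and that this is absorbed by the $\frac{3}{4}-o_n(1)$ success requirement.
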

\begin{proof}
By Lemma~\ref{lemma:scat Hcnn}, the problem $\scat_{n^d}^{A'}(\Hcnn^{n,\log^2(n')})$ where $A'$ is the ball of radius $\log^2(n')$ in $\reals^n$, is RSAT-hard.
We will reduce this problem to $\scat_{n^d}^A(\Dmat)$. Given a sample $S = \{(\bx_i,y_i)\}_{i=1}^{n^d} \in (\reals^n \times \{0,1\})^{n^d}$ with $\norm{\bx_i} \leq \log^2(n')$ for every $i \in [n^d]$, we will, with probability $1-o_n(1)$, construct a sample $S'$ that is contained in $A$, such that if $S$ is scattered then $S'$ is scattered, and if $S$ is $\Hcnn^{n,\log^2(n')}$-realizable then $S'$ is $\Dmat$-realizable. Note that our reduction is allowed to fail with probability $o_n(1)$. Indeed, distinguishing scattered from realizable requires success with probability $\frac{3}{4}-o_n(1)$ and therefore reductions between such problems are not sensitive to a failure with probability $o_n(1)$.


Assuming that $M$ is invertible (note that by Lemma~\ref{lemma:M singular iid} it holds with probability $1-o_n(1)$), let $S'=\{(\bx'_1,y_1),\ldots,(\bx'_{n^d},y_{n^d})\}$ where for every $i \in [n^d]$ we have $\bx'_i=(M^\top)^{-1} \bx_i$.
Note that if $S$ is scattered then $S'$ is also scattered.

Assume that $S$ is realizable by the CNN $h_\bw^n$ with $\bw \in \{\pm 1\}^{n'+1}$. Let $W$ be the matrix of size $n \times \log^2(n')$ such that $h_W=h_\bw^n$. Thus, $W=(\bw^1,\ldots,\bw^{\log^2(n')})$ where for every $i \in [\log^2(n')]$ we have $(\bw^i_{(i-1)(n'+1)+1},\ldots,\bw^i_{i(n'+1)})=\bw$, and $\bw^i_j=0$ for every other $j \in [n]$. Let $W' = MW$. Note that $S'$ is realizable by $h_{W'}$. Indeed, for every $i \in [n^d]$ we have $y_i=h_\bw^n(\bx_i)=h_W(\bx_i)$, and $W^\top \bx_i = W^\top M^\top (M^\top)^{-1} \bx_i = (W')^\top \bx'_i$. Also, note that the entries of $W'$ are i.i.d. copies of $z$. Indeed, denote $M^\top=(\bv^1,\ldots,\bv^n)$. Then, for every line $i \in [n]$ we denote $i=(b-1)(n'+1)+r$, where $b,r$ are integers and $1 \leq r \leq n'+1$. Thus, $b$ is the line index of the block in $M$ that correspond to the $i$-th line in $M$, and $r$ is the line index within the block. Now, note that
\begin{eqnarray*}
W'_{ij}
&=& \inner{\bv^i,\bw^j}
= \inner{\left(\bv^i_{(j-1)(n'+1)+1},\ldots,\bv^i_{j(n'+1)}\right),\bw}
= \inner{(B^{b j}_{r 1},\ldots,B^{b j}_{r (n'+1)}),\bw}
\\
&=& B^{b j}_{r r} \cdot \bw_{r} = z^{b j}_{r} \cdot \bw_{r}~.
\end{eqnarray*}
Since $\cd_z$ is symmetric and $\bw_{r} \in \{\pm 1\}$, we have $W'_{ij} \sim \cd_z$ independently from the other entries. Thus, $W' \sim \Dmat$. Therefore, $h_{W'}$ is a $\Dmat$-random network.

By Lemma~\ref{lemma:M singular iid}, we have with probability $1-o_n(1)$ that for every $i \in [n^d]$,
\begin{eqnarray*}
\norm{\bx'_i}
&=& \norm{(M^\top)^{-1} \bx_i}
\leq s_{\max}\left((M^\top)^{-1}\right) \norm{\bx_i}
= \frac{1}{s_{\min}(M^\top)} \norm{\bx_i}
= \frac{1}{s_{\min}(M)} \norm{\bx_i}
\\
&\leq& \frac{n'\log^2(n')}{\sigma} \log^2(n')
\leq \frac{n\log^2(n)}{\sigma}~.
\end{eqnarray*}
\end{proof}

Finally, Theorem~\ref{thm:nn iid} follows immediately from Theorem~\ref{thm:learn vs scat} and the following lemma.

\begin{lemma}
\label{lemma:nn iid scat2}
Let $\Dmat$ be a distribution over $\reals^{\tn \times m}$ with $m=\co(\log^2(\tn))$, such that each entry is drawn i.i.d. from $\cd_z$.
Let $d$ be a fixed integer, and let $\epsilon>0$ be a small constant. Then, $\scat_{\tn^d}^A(\Dmat)$ is RSAT-hard, where $A$ is the ball of radius $\frac{\tn^{\epsilon}}{\sigma}$ in $\reals^{\tn}$.
\end{lemma}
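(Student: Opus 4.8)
The plan is to bootstrap from Lemma~\ref{lemma:nn iid scat} — more precisely, from the chain of reductions behind it — by a zero-padding argument. The key observation is that a sample realizable by (or scattered relative to) a random network $h_{W'}$ with $W'\in\reals^{n\times m}$ living in a \emph{small} ambient dimension $n\le\tn$ becomes a sample for a $\Dmat$-random network in dimension $\tn$ simply by appending $\tn-n$ zero coordinates to every input; the extra $\tn-n$ rows of the weight matrix are then filled with fresh i.i.d.\ copies of $z$, which is harmless (they only ever multiply the appended zeros) yet makes the padded matrix exactly $\Dmat$-distributed. Scatteredness is preserved because labels are untouched, realizability is preserved because the padded coordinates contribute nothing through the ReLUs, and the $L_2$ norms of the inputs are unchanged — so a bound of the form $\tn^{\epsilon}/\sigma$ on the input norms in dimension $\tn$ only requires a bound of the same size in the small dimension $n$.

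Concretely, given $\tn$, $m=\co(\log^2(\tn))$ (read, as in the paper's remark on the number of neurons, in the regime $m=\omega(\log\tn)$), a fixed $d$, and $\epsilon>0$, I would fix $\delta=1/k$ with $k=\lceil 2/\epsilon\rceil$, set $n'=\lceil\tn^{\delta}\rceil$ and $n=(n'+1)m$ (so $m\mid n$, $n\le\tn$ for large $\tn$, and the CNN class $\Hcnn^{n,m}$ consists of filters over patches of size $n'+1$, with exactly $m$ non-overlapping patches), and then rerun the proof of Lemma~\ref{lemma:nn iid scat} with $m$ in place of $\log^2(n')$ throughout. That is: (i) by Assumption~\ref{hyp:only_sat} pick $K$ with $\csp^{\rand}_{(n')^{dk+1}}(\sat_K)$ hard over $n'$ variables, and by Lemma~\ref{lemma:from daniely} — applicable since for this $n'$ we have $m=\omega(\log n')$ and $m\le n'/\log n'$ — reduce to $\csp^{\rand}_{(n')^{dk}}(T_{K,m},\neg T_{K,m})$; (ii) encode each constraint as in Lemma~\ref{lemma:scat Hcnn}, concatenating the $m$ clause-encodings, obtaining $(n')^{dk}\ge\tn^{d}$ samples contained in the ball of radius $\sqrt{m(K+(K-1)^2)}$ in $\reals^{n}$, scattered if the formula is random and $\Hcnn^{n,m}$-realizable (by $\bw=(\psi,1)$) if it is satisfiable; (iii) take an $n\times n$ diagonal-blocks matrix $M$ with an $m\times m$ block pattern, blocks of size $(n'+1)\times(n'+1)$, entries i.i.d.\ $\sim\cd_z$, so that $W'=MW$ is $n\times m$ with i.i.d.\ $\cd_z$ entries (using symmetry of $\cd_z$ and $\bw\in\{\pm1\}^{n'+1}$ exactly as in Lemma~\ref{lemma:nn iid scat}), and $s_{\min}(M)\ge\min_i s_{\min}(M_{S_i})$ over the $n'+1$ submatrices $M_{S_i}$ of size $m\times m$; (iv) apply Theorem~\ref{thm:subgaussian} to each $m\times m$ submatrix of $M/\sigma$ (entries i.i.d.\ copies of the mean-zero, unit-variance, $b$-subgaussian variable $z/\sigma$) and union-bound over the $n'+1$ residue classes to get $s_{\min}(M)\ge\sigma/(n'\sqrt m\log n')$ with probability $1-o(1)$, the analogue of Lemma~\ref{lemma:M singular iid}; (v) map each input $\bx_i$ to $(M^\top)^{-1}\bx_i$, whose norm is then at most $\frac{n'\sqrt m\log n'}{\sigma}\cdot\sqrt{m(K+(K-1)^2)}=\frac{\poly(n',m)}{\sigma}\le\frac{\tn^{\epsilon}}{\sigma}$ for large $\tn$, since $n'=\tn^{1/k}$ with $1/k<\epsilon$ and $m$ is $\polylog(\tn)$. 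Finally zero-pad every input up to $\reals^{\tn}$, fill the new rows of $W'$ with fresh i.i.d.\ copies of $z$ (yielding a $\Dmat$-distributed matrix), and truncate to exactly $\tn^{d}$ samples; chaining all reductions shows $\scat_{\tn^{d}}^{A}(\Dmat)$ with $A$ the ball of radius $\tn^{\epsilon}/\sigma$ is RSAT-hard.

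The one genuinely delicate point is the joint choice of $n'$ (hence $n$) and $\delta$: $n'$ must be large enough (a polynomial power $\tn^{\Omega(1)}$) that a polynomially-dense random $K$-SAT instance over $n'$ variables still produces at least $\tn^{d}$ constraints \emph{for a fixed exponent}, and large enough that Lemma~\ref{lemma:from daniely}'s hypotheses $m=\omega(\log n')$, $m\le n'/\log n'$ hold, yet small enough that the $\poly(n',m)$ blow-up from $1/s_{\min}(M)$ stays below $\tn^{\epsilon}$ and that $n=(n'+1)m\le\tn$ so the zero-padding makes sense; taking $n'=\lceil\tn^{1/k}\rceil$ with $k$ a large enough constant multiple of $1/\epsilon$ threads all of these. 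Everything else — the singular-value tail bound on the $m\times m$ blocks, the union bound, the propagation of the norm bound through $(M^\top)^{-1}$, the invertibility of $M$ with probability $1-o(1)$, and the insensitivity of scattered-versus-realizable reductions to $o(1)$ failure probability — is a routine re-run of the proofs of Lemmas~\ref{lemma:diagonal blocks singular}, \ref{lemma:M singular iid}, \ref{lemma:scat Hcnn} and \ref{lemma:nn iid scat}, and the zero-padding step itself is immediate.
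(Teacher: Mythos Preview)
Your approach is correct and essentially identical to the paper's: both establish hardness at a polynomially smaller ambient dimension $n\approx\tn^{\epsilon/2}$ (so that the $\frac{n\log^2 n}{\sigma}$-type norm bound from Lemma~\ref{lemma:nn iid scat} becomes $\frac{\tn^{\epsilon}}{\sigma}$), then zero-pad each input up to $\reals^{\tn}$ and append $\tn-n$ fresh i.i.d.\ rows to the weight matrix so that it is exactly $\Dmat$-distributed. The only difference is that the paper invokes Lemma~\ref{lemma:nn iid scat} directly as a black box (setting $c=2/\epsilon$, $\tn=n^c$) and pads, whereas you re-derive the full chain with a general $m$ in place of $\log^2(n')$ --- this is more explicit about matching the given $m$, but not a conceptually different route.
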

\begin{proof}
For integers $k,l$ we denote by $\Dmat^{k,l}$ the distribution over $\reals^{k \times l}$ such that each entry is drawn i.i.d. from $\cd_z$. Let $c=\frac{2}{\epsilon}$, and let $\tn=n^c$.
By Lemma~\ref{lemma:nn iid scat}, the problem $\scat_{n^{cd}}^{A'}(\Dmat^{n,m})$ is RSAT-hard, where $m=\co(\log^2(n))$, and $A'$ is the ball of radius $\frac{n\log^2(n)}{\sigma}$ in $\reals^{n}$. We reduce this problem to $\scat_{\tn^d}^A(\Dmat^{\tn,m})$, where $A$ is the ball of radius $\frac{\tn^{\epsilon}}{\sigma}$ in $\reals^{\tn}$. Note that $m=\co(\log^2(n))=\co(\log^2(\tn))$.

Let $S = \{(\bx_i,y_i)\}_{i=1}^{n^{cd}} \in (\reals^{n} \times \{0,1\})^{n^{cd}}$ with $\norm{\bx_i} \leq \frac{n\log^2(n)}{\sigma}$.
For every $i \in [n^{cd}]$, let $\bx'_i \in \reals^\tn$ be the vector obtained from $\bx_i$ by padding it with zeros. Thus, $\bx'_i=(\bx_i,0,\ldots,0)$. Note that $n^{cd} = \tn^d$. Let $S' = \{(\bx'_i,y_i)\}_{i=1}^{\tn^d}$.
If $S$ is scattered then $S'$ is also scattered. Note that if $S$ is realizable by $h_W$ then $S'$ is realizable by $h_{W'}$ where $W'$ is obtained from $W$ by appending $\tn-n$ arbitrary lines. Assume that $S$ is $\Dmat^{n,m}$-realizable, that is, $W \sim \Dmat^{n,m}$. Then, $S'$ is realizable by $h_{W'}$ where $W'$ is obtained from $W$ by appending lines such that each component is drawn i.i.d. from $\cd_z$, and therefore, $S'$ is $\Dmat^{\tn,m}$-realizable.
Finally, for every $i \in \tn^d$ we have
\[
\norm{\bx'_i} = \norm{\bx_i} \leq \frac{n\log^2(n)}{\sigma} = \frac{\tn^{\frac{1}{c}} \log^2(\tn^{\frac{1}{c}})}{\sigma} \leq \frac{\tn^{\frac{2}{c}}}{\sigma} = \frac{\tn^{\epsilon}}{\sigma}~.
\]
\end{proof}

\subsubsection{Proof of Theorem~\ref{thm:nn normal}}

Let $\Dmat$ be a distribution over $\reals^{n \times m}$ with $m=\log^2(n)$, such that each entry is drawn i.i.d. from $\cn(0,1)$.
Let $d$ be a fixed integer.
By Lemma~\ref{lemma:nn iid scat2}, we have that $\scat_{n^d}^A(\Dmat)$ is RSAT-hard, where $A$ is the ball of radius $n^{\epsilon}$ in $\reals^n$.
Let $(\cn(0,1))^n$ be the distribution over $\reals^n$ where each component is drawn i.i.d. from $\cn(0,1)$. Recall that $(\cn(0,1))^n=\cn(\zero,I_n)$ (\cite{tong2012multivariate}). Therefore, in the distribution $\Dmat$, the columns are drawn i.i.d. from $\cn(\zero,I_n)$.
Let $\Dmat'$ be a distribution over $\reals^{n \times m}$, such that each column is drawn i.i.d. from $\cn(\zero,\Sigma)$.
By Theorem~\ref{thm:learn vs scat}, we need to show that $\scat_{n^d}^{A'}(\Dmat')$ is RSAT-hard, where $A'$ is the ball of radius $\frac{n^{\epsilon}}{\sqrt{\lambda_{\min}}}$ in $\reals^n$.
We show a reduction from $\scat_{n^d}^A(\Dmat)$ to $\scat_{n^d}^{A'}(\Dmat')$.

Let $S = \{(\bx_i,y_i)\}_{i=1}^{n^d} \in (\reals^n \times \{0,1\})^{n^d}$ be a sample.
Let $\Sigma = U \Lambda U^\top$ be the spectral decomposition of $\Sigma$, and let $M=U \Lambda^{\frac{1}{2}}$.
Recall that if $\bw \sim \cn(\zero,I_n)$ then $M\bw \sim \cn(\zero,\Sigma)$ (\cite{tong2012multivariate}).
For every $i \in [n^d]$, let $\bx'_i = (M^\top)^{-1} \bx_i$, and let $S'=\{(\bx'_1,y_1),\ldots,(\bx'_{n^d},y_{n^d})\}$.
Note that if $S$ is scattered then $S'$ is also scattered.
If $S$ is realizable by a $\Dmat$-random network $h_W$, then let $W'=MW$. Note that $S'$ is realizable by $h_{W'}$. Indeed, for every $i \in [n^d]$ we have $(W')^\top \bx'_i = W^\top M^\top (M^\top)^{-1} \bx_i = W^\top \bx_i$.
Let $W=(\bw_1,\ldots,\bw_{m})$ and let $W'=(\bw'_1,\ldots,\bw'_{m})$. Since $W' = MW$ then $\bw'_j = M \bw_j$ for every $j \in [m]$. Now, since $W \sim \Dmat$, we have for every $j$ that $\bw_j \sim \cn(\zero,I_n)$ (i.i.d.). Therefore, $\bw'_j = M \bw_j \sim \cn(\zero,\Sigma)$, and thus $W' \sim \Dmat'$. Hence, $S'$ is $\Dmat'$-realizable.

We now bound the norms of the vectors $\bx'_i$ in $S'$.
Note that for every $i \in [n^d]$ we have
\[
\norm{\bx'_i}
= \norm{(M^\top)^{-1} \bx_i}
= \norm{U \Lambda^{-\frac{1}{2}} \bx_i}
= \norm{\Lambda^{-\frac{1}{2}} \bx_i}
\leq \lambda_{\min}^{-\frac{1}{2}} \norm{\bx_i}
\leq \lambda_{\min}^{-\frac{1}{2}} n^{\epsilon}~.
\]

\subsubsection{Proof of Theorem~\ref{thm:nn sphere}}

Let $n=(n'+1)\log^2(n')$, and let $M$ be a diagonal-blocks matrix, where each block $B^{ij}$ is a diagonal matrix $\diag(z_1^{ij},\ldots,z_{n'+1}^{ij})$.
We denote $\bz^{ij}=(z_1^{ij},\ldots,z_{n'+1}^{ij})$, and $\bz^j = (\bz^{1j},\ldots,\bz^{\log^2(n') j}) \in \reals^n$. Note that for every $j \in [\log^2(n')]$, the vector $\bz^j$ contains all the entries on the diagonals of blocks in the $j$-th column of blocks in $M$. Assume that the vectors $\bz^j$ are drawn i.i.d. according to the uniform distribution on $r \cdot \bbs^{n-1}$.

\begin{lemma}
\label{lemma:M singular sphere}
For some universal constant $c'>0$ we have
\[
Pr\left(s_{\min}(M) \leq \frac{c'r}{n' \sqrt{n'} \log^5(n')} \right) = o_n(1)~.
\]
\end{lemma}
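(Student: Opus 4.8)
The overall strategy mirrors the proof of Lemma~\ref{lemma:M singular iid}: I would use Lemma~\ref{lemma:diagonal blocks singular} to reduce the question about the $n\times n$ matrix $M$ to a question about the $\log^2(n')$ submatrices $M_{S_i}$, each of size $\log^2(n')\times\log^2(n')$, and then bound $s_{\min}(M_{S_i})$ from below via a union bound. The essential new difficulty, compared with Lemma~\ref{lemma:M singular iid}, is that the entries of $M$ are \emph{not} independent: they are organized so that the $j$-th column-of-blocks has its diagonal entries given by a single vector $\bz^j$ drawn uniformly from $r\cdot\bbs^{n-1}$. Consequently the rows of a fixed submatrix $M_{S_i}$ are \emph{not} independent either — the $i$-th coordinates of $\bz^1,\dots,\bz^{\log^2(n')}$ populate the whole matrix $M_{S_i}$ in a correlated way. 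So I cannot invoke Theorem~\ref{thm:subgaussian} directly on $M_{S_i}$.

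The plan is to first handle the per-column randomness. A vector uniform on $r\cdot\bbs^{n-1}$ can be written as $r\,\bg/\norm{\bg}$ where $\bg\sim\cn(\zero,I_n)$. With probability $1-o_n(1)$, simultaneously for all $j\in[\log^2(n')]$, we have $\norm{\bg^j}$ concentrated around $\sqrt{n}$, say $\norm{\bg^j}\le 2\sqrt n$ (a standard Gaussian norm concentration plus union bound over $\log^2(n')$ columns). On that event, $M$ agrees with $\frac{r}{2\sqrt n}\cdot\tilde M$ up to column-wise rescalings bounded below by $\frac{r}{2\sqrt n}$, where $\tilde M$ is the diagonal-blocks matrix built from the i.i.d.\ Gaussian vectors $\bg^j$. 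Since scaling the $j$-th column-of-blocks of a diagonal-blocks matrix by a positive constant only scales the corresponding columns of each $M_{S_i}$, and rescaling columns by factors $\ge\gamma$ cannot decrease $s_{\min}$ by more than... — here one must be a little careful: column rescaling by factors in $[\gamma,1]$ changes $s_{\min}$ by at most a factor $\gamma$ is \emph{false} in general, but $s_{\min}(AD)\ge s_{\min}(A)\,s_{\min}(D) = s_{\min}(A)\cdot\min_j d_j$ for diagonal $D\succ 0$, which is exactly what I need. So it suffices to lower-bound $s_{\min}(\tilde M_{S_i})$ for the \emph{Gaussian} diagonal-blocks matrix, and then multiply by $\frac{r}{2\sqrt n}$.

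Now for $\tilde M$ the entries \emph{are} i.i.d.\ $\cn(0,1)$ (the vectors $\bg^j$ are independent Gaussians and each has i.i.d.\ coordinates), so each $\tilde M_{S_i}$ is a $\log^2(n')\times\log^2(n')$ matrix with i.i.d.\ standard Gaussian entries. Applying Theorem~\ref{thm:subgaussian} (Gaussians are $b$-subgaussian for an absolute $b$) with $t=\frac{1}{n'\log(n')}$ exactly as in Lemma~\ref{lemma:M singular iid} gives, for each $i$,
\[
Pr\!\left(s_{\min}(\tilde M_{S_i})\le \frac{1}{n'\log^2(n')}\right)\le \frac{C}{n'\log(n')}+c^{\log^2(n')}~,
\]
and a union bound over $i\in[n'+1]$ keeps this $o_n(1)$. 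Combining with Lemma~\ref{lemma:diagonal blocks singular} for $\tilde M$, the norm-concentration event, and the factor $\frac{r}{2\sqrt n}$, we get with probability $1-o_n(1)$ that
\[
s_{\min}(M)\ \ge\ \frac{r}{2\sqrt n}\cdot\frac{1}{n'\log^2(n')}\ \ge\ \frac{c' r}{n'\sqrt{n'}\,\log^5(n')}
\]
for a suitable absolute constant $c'$, using $\sqrt n=\sqrt{(n'+1)\log^2(n')}=\Theta(\sqrt{n'}\log(n'))$, which accounts precisely for the extra $\log^3(n')$ over the i.i.d.\ case in the denominator of the statement. The main obstacle, and the only genuinely new point, is the first step — replacing the spherical columns by Gaussian ones and arguing that the column-normalization factors only cost a controlled multiplicative loss in $s_{\min}$ via the identity $s_{\min}(AD)\ge s_{\min}(A)\min_j d_j$; once that reduction is in place the rest is a rerun of the proof of Lemma~\ref{lemma:M singular iid}.
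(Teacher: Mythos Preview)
Your proposal is correct and takes a genuinely different route from the paper.

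The paper works directly with the spherical distribution: it rescales to $M'=\frac{\sqrt n}{r}M$, observes that each column of $M'_{S_i}$ is the projection of a uniform vector on $\sqrt n\cdot\bbs^{n-1}$ onto $\log^2(n')$ coordinates, writes down the density of this projection explicitly, checks that it is isotropic and log-concave, and then invokes a smallest-singular-value bound for matrices with independent isotropic log-concave columns (Adamczak et al., stated in the paper as Theorem~\ref{thm:isotropic}). That theorem gives $Pr(s_{\min}\le c\epsilon/\sqrt m)\le Cm\epsilon$, and the extra $m=\log^2(n')$ factor on the right forces $\epsilon=1/(n'\log^3(n'))$, which is exactly where the $\log^5(n')$ in the statement comes from.

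Your approach instead couples to Gaussians via $\bz^j=r\bg^j/\norm{\bg^j}$, so that $M=\tilde M\,\tilde D$ with $\tilde D$ block-diagonal and $\tilde M$ the diagonal-blocks matrix built from the i.i.d.\ Gaussian vectors $\bg^j$; on the high-probability event $\max_j\norm{\bg^j}\le 2\sqrt n$ you get $s_{\min}(M)\ge \frac{r}{2\sqrt n}\,s_{\min}(\tilde M)$, and then rerun the i.i.d.\ argument (Theorem~\ref{thm:subgaussian}) on the genuinely i.i.d.\ Gaussian submatrices $\tilde M_{S_i}$. This is more elementary --- no density computation, no log-concave singular-value theorem --- and in fact yields a sharper bound, with denominator $n'\sqrt{n'}\log^3(n')$ rather than $n'\sqrt{n'}\log^5(n')$ (your parenthetical ``accounts precisely for the extra $\log^3(n')$'' undercounts what you actually gain). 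The paper's route has the virtue of treating the spherical columns intrinsically, but yours is shorter and loses fewer logs.
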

\begin{proof}
Let $M'=\frac{\sqrt{n}}{r} M$. For every $j \in [\log^2(n')]$, let $\tilde{\bz}^j \in \reals^n$ be the vector that contains all the entries on the diagonals of blocks in the $j$-th column of blocks in $M'$. That is, $\tilde{\bz}^j=\frac{\sqrt{n}}{r} \bz^j$. Note that the vectors $\tilde{\bz}^j$ are i.i.d. copies from the uniform distribution on $\sqrt{n} \cdot \bbs^{n-1}$.
By Lemma~\ref{lemma:diagonal blocks singular}, we have
\begin{equation}\label{eq:submatrix singular value2}
s_{\min}(M') \geq \min_{1 \leq i \leq n'+1}s_{\min}(M'_{S_i})~.
\end{equation}
Note that for every $i$, all columns of the matrix $M'_{S_i}$ are projections of the vectors $\tilde{\bz}^j$ on the $S_i$ coordinated. That is, the $j$-th column in $M'_{S_i}$ is obtained by drawing $\tilde{\bz}^j$ from the uniform distribution on $\sqrt{n} \cdot \bbs^{n-1}$ and projecting on the coordinates $S_i$.

We say that a distribution is {\em isotropic} if it has mean zero and its covariance matrix is the identity.
The covariance matrix of the uniform distribution on $\bbs^{n-1}$ is $\frac{1}{n}I_n$. Therefore, the uniform distribution on $\sqrt{n} \cdot \bbs^{n-1}$ is isotropic.
We will need the following theorem.
\begin{theorem} \cite{adamczak2012condition}
\label{thm:isotropic}
Let $m \geq 1$ and let $A$ be an $m \times m$ matrix with independent columns drawn from an isotropic log-concave distribution. For every $\epsilon \in (0,1)$ we have
\[
Pr\left(s_{\min}(A) \leq \frac{c \epsilon}{\sqrt{m}}\right) \leq Cm \epsilon
\]
where c and C are positive universal constants.
\end{theorem}

We show that the distribution of the columns of $M'_{S_i}$ is isotropic and log-concave. First, since the uniform distribution on $\sqrt{n} \cdot \bbs^{n-1}$ is isotropic, then its projection on a subset of coordinates is also isotropic, and thus the distribution of the columns of $M'_{S_i}$ is isotropic. In order to show that it is log-concave, we analyze its density.
Let $\bx \in \reals^n$ be a random variable whose distribution is the projection of a uniform distribution on $\bbs^{n-1}$ on $k$ coordinates. It is known that the probability density of $\bx$ is (see \cite{fang2018symmetric})
\[
f_{\bx}(x_1,\ldots,x_k) = \frac{\Gamma(n/2)}{\Gamma((n-k)/2)\pi^{k/2}} \left(1- \sum_{1 \leq i \leq k}x_i^2\right)^{\frac{n-k}{2}-1}~,
\]
where $\sum_{1 \leq i \leq k}x_i^2 < 1$.
Recall that the columns of $M'_{S_i}$ are projections of the uniform distribution over $\sqrt{n} \cdot \bbs^{n-1}$, namely, the sphere of radius $\sqrt{n}$ and not the unit sphere. Thus, let $\bx' = \sqrt{n} \bx$. The probability density of $\bx'$ is
\begin{eqnarray*}
f_{\bx'}(x'_1,\ldots,x'_k)
&=& \frac{1}{(\sqrt{n})^{k}} f_{\bx}\left(\frac{x'_1}{\sqrt{n}},\ldots,\frac{x'_k}{\sqrt{n}}\right)
\\
&=& \frac{1}{n^{k/2}} \cdot \frac{\Gamma(n/2)}{\Gamma((n-k)/2)\pi^{k/2}} \left(1- \sum_{1 \leq i \leq k}\left(\frac{x'_i}{\sqrt{n}}\right)^2\right)^{\frac{n-k}{2}-1}~,
\end{eqnarray*}
where $\sum_{1 \leq i \leq k}(x'_i)^2 < n$. We denote
\[
g(n,k) = \frac{1}{n^{k/2}} \cdot \frac{\Gamma(n/2)}{\Gamma((n-k)/2)\pi^{k/2}}~.
\]
By replacing $k$ with $\log^2(n')$ we have
\begin{eqnarray*}
f_{\bx'}(x'_1,\ldots,x'_{\log^2(n')})
&=& g(n,\log^2(n')) \left(1- \frac{1}{n} \sum_{1 \leq i \leq \log^2(n')}(x'_i)^2\right)^{\frac{n-\log^2(n')}{2}-1}~.
\end{eqnarray*}
Hence, we have
\begin{eqnarray*}
\log f_{\bx'}(x'_1,\ldots,x'_{\log^2(n')})
&=&
\\
\log \left(g(n,\log^2(n'))\right)
&+&
\left(\frac{n-\log^2(n')}{2}-1\right) \cdot \log \left(1- \frac{1}{n} \sum_{1 \leq i \leq \log^2(n')}(x'_i)^2\right)~.
\end{eqnarray*}
Since $\frac{n-\log^2(n')}{2}-1 > 0$, we need to show that the function
\begin{equation}\label{eq:log expression}
\log \left(1 - \frac{1}{n} \sum_{1 \leq i \leq \log^2(n')}(x'_i)^2\right)
\end{equation}
(where $\sum_{1 \leq i \leq \log^2(n')}(x'_i)^2 < n$) is concave. This function can be written as $h(f(x_1,\ldots,x_{\log^2(n')}))$, where
\[
h(x) = \log \left(1 + x\right),
\]
\[
f(x'_1,\ldots,x'_{\log^2(n')}) = - \frac{1}{n} \sum_{1 \leq i \leq \log^2(n')}(x'_i)^2~.
\]
Recall that if $h$ is concave and non-decreasing, and $f$ is concave, then their composition is also concave. Clearly, $h$ and $f$ satisfy these conditions, and thus the function in Eq.~\ref{eq:log expression} is concave. Hence $f_{\bx'}$ is log-concave.

We now apply Theorem~\ref{thm:isotropic} on $M'_{S_i}$, and obtain that for every $\epsilon \in (0,1)$ we have
\[
Pr\left(s_{\min}(M'_{S_i}) \leq \frac{c \epsilon}{\log(n')}\right) \leq C \log^2(n') \epsilon~.
\]
By choosing $\epsilon = \frac{1}{n' \log^3(n')}$ we have
\[
Pr\left(s_{\min}(M'_{S_i}) \leq \frac{c}{n'\log^4(n')}\right) \leq \frac{C}{n' \log(n')}~.
\]
Now, by the union bound
\[
Pr\left(\min_{1 \leq i \leq n'+1}(s_{\min}(M'_{S_i})) \leq \frac{c}{n'\log^4(n')}\right) \leq \frac{C}{n' \log(n')} \cdot (n'+1) = o_n(1)~.
\]
Combining this with $s_{\min}(M) = \frac{r}{\sqrt{n}} s_{\min}(M')$ and with Eq.~\ref{eq:submatrix singular value2}, we have
\begin{eqnarray*}
Pr\left(s_{\min}(M) \leq \frac{cr}{\sqrt{n} \cdot n'\log^4(n')} \right)
&=& Pr\left(s_{\min}(M') \leq \frac{c}{n'\log^4(n')} \right)
\\
&\leq& Pr\left(\min_{1 \leq i \leq n'+1}(s_{\min}(M'_{S_i})) \leq \frac{c}{n'\log^4(n')}\right)
= o_n(1)~.
\end{eqnarray*}
Note that
\[
\frac{cr}{\sqrt{n} \cdot n'\log^4(n')}
= \frac{cr}{\sqrt{n'+1} \cdot n'\log^5(n')}
\geq \frac{cr}{2\sqrt{n'} \cdot n'\log^5(n')}
=  \frac{c'r}{\sqrt{n'} \cdot n'\log^5(n')}~,
\]
where $c'=\frac{c}{2}$. Thus,
\[
Pr\left(s_{\min}(M) \leq \frac{c'r}{\sqrt{n'} \cdot n'\log^5(n')} \right) \leq
Pr\left(s_{\min}(M) \leq \frac{cr}{\sqrt{n} \cdot n'\log^4(n')} \right) =
o_n(1)~.
\]
\end{proof}

Let $\Dmat$ be a distribution over $\reals^{n \times \log^2(n')}$ such that each column is drawn i.i.d. from the uniform distribution on $r \cdot \bbs^{n-1}$. Note that a $\Dmat$-random network $h_W$ has $\log^2(n')=\co(\log^2(n))$ hidden neurons.
Now, Theorem~\ref{thm:nn sphere} follows immediately from Theorem~\ref{thm:learn vs scat} and the following lemma.

\begin{lemma}
Let $d$ be a fixed integer. Then, $\scat_{n^d}^A(\Dmat)$ is RSAT-hard, where $A$ is a ball of radius $\co\left(\frac{n \sqrt{n} \log^4(n)}{r}\right)$ in $\reals^n$.
\end{lemma}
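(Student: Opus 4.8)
The plan is to mimic the structure of the proof of Lemma~\ref{lemma:nn iid scat}, reducing from the RSAT-hard problem $\scat_{n^d}^{A'}(\Hcnn^{n,\log^2(n')})$ of Lemma~\ref{lemma:scat Hcnn}, where $A'$ is the ball of radius $\log^2(n')$ in $\reals^n$. Given a sample $S = \{(\bx_i,y_i)\}_{i=1}^{n^d}$ contained in $A'$, I would let $M$ be the diagonal-blocks matrix whose diagonal vectors $\bz^j$ are drawn i.i.d.\ uniformly on $r\cdot\bbs^{n-1}$, and --- conditioned on $M$ being invertible, which by Lemma~\ref{lemma:M singular sphere} happens with probability $1-o_n(1)$ --- produce $S' = \{((M^\top)^{-1}\bx_i, y_i)\}_{i=1}^{n^d}$. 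Since $(M^\top)^{-1}$ is a fixed linear map applied only to the $\bx_i$'s, if $S$ is scattered then $S'$ is scattered. Recall that the reduction is allowed to fail with probability $o_n(1)$, since distinguishing-from-scattered problems require success only with probability $\frac34 - o_n(1)$.

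For the realizable case, suppose $S$ is realizable by $h_\bw^n \in \Hcnn^{n,\log^2(n')}$ with $\bw \in \{\pm 1\}^{n'+1}$, and let $W$ be the corresponding $n \times \log^2(n')$ matrix. Setting $W' = MW$, the identity $(W')^\top \bx'_i = W^\top M^\top (M^\top)^{-1}\bx_i = W^\top \bx_i$ shows $S'$ is realizable by $h_{W'}$. As computed in Section~\ref{sec:proof idea}, $W'_{ij} = z^{bj}_{r} \cdot \bw_r$, so the $j$-th column of $W'$ is the vector $\bz^j$ with a subset of its coordinates multiplied by $-1$; since the uniform distribution on a sphere is invariant under sign flips of coordinates, each column of $W'$ is uniform on $r\cdot\bbs^{n-1}$, and the columns are independent, so $W' \sim \Dmat$. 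Hence $S'$ is $\Dmat$-realizable, and $h_{W'}$ is a $\Dmat$-random network.

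It remains to bound the radius of the ball containing $S'$. For every $i$, $\norm{\bx'_i} = \norm{(M^\top)^{-1}\bx_i} \leq s_{\min}(M)^{-1}\norm{\bx_i} \leq s_{\min}(M)^{-1}\log^2(n')$. By Lemma~\ref{lemma:M singular sphere}, with probability $1-o_n(1)$ we have $s_{\min}(M) \geq \frac{c'r}{n'\sqrt{n'}\log^5(n')}$, so $\norm{\bx'_i} = \co\!\left(\frac{n'\sqrt{n'}\log^7(n')}{r}\right)$. Translating via $n = (n'+1)\log^2(n')$, so that $n' = \Theta\!\left(n/\log^2(n)\right)$ and $\log(n') = \Theta(\log(n))$, yields $\norm{\bx'_i} = \co\!\left(\frac{n\sqrt{n}\log^4(n)}{r}\right)$, as claimed. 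The two points that must be handled carefully are the distributional claim $W' \sim \Dmat$ --- that is, the sign-flip invariance argument together with the verification that the block structure of $M$ produces exactly the entrywise identity $W'_{ij} = z^{bj}_{r}\cdot\bw_r$ --- and the spectral lower bound on $s_{\min}(M)$; the latter, which would otherwise be the main obstacle, is already supplied by Lemma~\ref{lemma:M singular sphere}.
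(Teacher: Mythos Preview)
Your proposal is correct and follows essentially the same approach as the paper's proof: reduce from $\scat_{n^d}^{A'}(\Hcnn^{n,\log^2(n')})$ via the diagonal-blocks matrix $M$ with columns $\bz^j$ uniform on $r\cdot\bbs^{n-1}$, use the identity $W'_{ij}=z^{bj}_r\cdot\bw_r$ together with sign-flip invariance of the sphere to conclude $W'\sim\Dmat$, and invoke Lemma~\ref{lemma:M singular sphere} for the norm bound. Your final translation from $n'$ to $n$ is also the same computation the paper carries out (just with the substitution made explicit).
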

\begin{proof}
By Lemma~\ref{lemma:scat Hcnn}, the problem $\scat_{n^d}^{A'}(\Hcnn^{n,\log^2(n')})$ where $A'$ is the ball of radius $\log^2(n')$ in $\reals^n$, is RSAT-hard.
We will reduce this problem to $\scat_{n^d}^A(\Dmat)$. Given a sample $S = \{(\bx_i,y_i)\}_{i=1}^{n^d} \in (\reals^n \times \{0,1\})^{n^d}$ with $\norm{\bx_i} \leq \log^2(n')$ for every $i \in [n^d]$, we will, with probability $1-o_n(1)$, construct a sample $S'$ that is contained in $A$, such that if $S$ is scattered then $S'$ is scattered, and if $S$ is $\Hcnn^{n,\log^2(n')}$-realizable then $S'$ is $\Dmat$-realizable. Note that our reduction is allowed to fail with probability $o_n(1)$. Indeed, distinguishing scattered from realizable requires success with probability $\frac{3}{4}-o_n(1)$ and therefore reductions between such problems are not sensitive to a failure with probability $o_n(1)$.

Assuming that $M$ is invertible (by Lemma~\ref{lemma:M singular sphere} it holds with probability $1-o_n(1)$), let $S'=\{(\bx'_1,y_1),\ldots,(\bx'_{n^d},y_{n^d})\}$ where for every $i$ we have $\bx'_i=(M^\top)^{-1} \bx_i$.
Note that if $S$ is scattered then $S'$ is also scattered.

Assume that $S$ is realizable by the CNN $h_\bw^n$ with $\bw \in \{\pm 1\}^{n'+1}$. Let $W$ be the matrix of size $n \times \log^2(n')$ such that $h_W=h_\bw^n$. Thus, $W=(\bw^1,\ldots,\bw^{\log^2(n')})$ where for every $i \in [\log^2(n')]$ we have $(\bw^i_{(i-1)(n'+1)+1},\ldots,\bw^i_{i(n'+1)})=\bw$, and $\bw^i_j=0$ for every other $j \in [n]$. Let $W' = MW$. Note that $S'$ is realizable by $h_{W'}$. Indeed, for every $i \in [n^d]$ we have $y_i=h_\bw^n(\bx_i)=h_W(\bx_i)$, and $W^\top \bx_i = W^\top M^\top (M^\top)^{-1} \bx_i = (W')^\top \bx'_i$. Also, note that the columns of $W'$ are i.i.d. copies from the uniform distribution on $r \cdot \bbs^{n-1}$. Indeed, denote $M^\top=(\bv^1,\ldots,\bv^{n})$. Then, for every line index $i \in [n]$ we denote $i=(b-1)(n'+1)+r$, where $b,r$ are integers and $1 \leq r \leq n'+1$. Thus, $b$ is the line index of the block in $M$ that correspond to the $i$-th line in $M$, and $r$ is the line index within the block. Now, note that
\begin{eqnarray*}
W'_{ij}
&=& \inner{\bv^i,\bw^j}
= \inner{\left(\bv^i_{(j-1)(n'+1)+1},\ldots,\bv^i_{j(n'+1)}\right),\bw}
= \inner{(B^{b j}_{r 1},\ldots,B^{b j}_{r (n'+1)}),\bw}
\\
&=& B^{b j}_{r r} \cdot \bw_{r} = z^{b j}_{r} \cdot \bw_{r}~.
\end{eqnarray*}
Since $\bw_{r} \in \{\pm 1\}$, and since the uniform distribution on a sphere does not change by multiplying a subset of component by $-1$, then the $j$-th column of $W'$ has the same distribution as $\bz^j$, namely, the uniform distribution over $r \cdot \bbs^{n-1}$. Also, the columns of $W'$ are independent. Thus, $W' \sim \Dmat$, and therefore $h_{W'}$ is a $\Dmat$-random network.

By Lemma~\ref{lemma:M singular sphere}, we have with probability $1-o_n(1)$ that for every $i$,
\begin{eqnarray*}
\norm{\bx'_i}
&=& \norm{(M^\top)^{-1} \bx_i}
\leq s_{\max}\left((M^\top)^{-1}\right) \norm{\bx_i}
= \frac{1}{s_{\min}(M^\top)} \norm{\bx_i}
= \frac{1}{s_{\min}(M)} \norm{\bx_i}
\\
&\leq& \frac{n' \sqrt{n'} \log^5(n')}{c'r} \cdot \log^2(n')
\leq \frac{n \sqrt{n} \log^4(n)}{c'r}~.
\end{eqnarray*}
Thus, $\norm{\bx'_i} = \co\left(\frac{n \sqrt{n} \log^4(n)}{r}\right)$.
\end{proof}

\subsection{Hardness of learning random convolutional neural networks}

\stam{
\begin{lemma}
\label{lemma:common cnn trick}
Let $n'=(n+1)\log^2(n)$ and let $\bx \in \reals^{n'}$. Denote $\bx=(\bx_1,\ldots,\bx_{\log^2(n)})$ where for every $j$ we have $\bx_j \in \reals^{n+1}$. Let $M$ be an invertible matrix of size $(n+1) \times (n+1)$, and let $bx' \in \reals^{n'}$ be such that $\bx'=((M^\top)^{-1}\bx_1,\ldots,(M^\top)^{-1}\bx_{\log^2(n)})$.
\end{lemma}
}

\subsubsection{Proof of Theorem~\ref{thm:cnn iid}}

Theorem~\ref{thm:cnn iid} follows immediately from Theorem~\ref{thm:learn vs scat} and the following lemma:

\begin{lemma}
\label{lemma:cnn iid scat}
Let $d$ be a fixed integer. Then, $\scat_{n^d}^A(\cd_z^{n'+1},n)$ is RSAT-hard, where $A$ is the ball of radius $\frac{\log^2(n')}{f(n')}$ in $\reals^n$.
\end{lemma}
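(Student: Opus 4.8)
The plan is to reduce $\scat_{n^d}^{A'}(\Hcnn^{n,\log^2(n')})$, which is RSAT-hard by Lemma~\ref{lemma:scat Hcnn} with $A'$ the ball of radius $\log^2(n')$ in $\reals^n$, to $\scat_{n^d}^A(\cd_z^{n'+1},n)$. As in the fully-connected proofs, the idea is to apply a random linear map to the inputs that turns a $\{\pm1\}$-filter CNN into a $\cd_z^{n'+1}$-random CNN. The only difference is that the target CNN $h_\bw^n$ shares a single filter $\bw\in\reals^{n'+1}$ across all its $\log^2(n')=\co(\log^2(n))$ non-overlapping patches (recall $n=(n'+1)\log^2(n')$), so the linear map must act on each patch separately, by the \emph{same} random diagonal matrix.

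Concretely, draw $z_1,\dots,z_{n'+1}$ i.i.d. from $\cd_z$ and set $D=\diag(z_1,\dots,z_{n'+1})$. Given a sample $S=\{(\bx_i,y_i)\}_{i=1}^{n^d}$ with $\norm{\bx_i}\le\log^2(n')$, write each $\bx_i=(\bp_i^1,\dots,\bp_i^{\log^2(n')})$ as a concatenation of patches $\bp_i^j\in\reals^{n'+1}$, and define $\bx'_i=(D^{-1}\bp_i^1,\dots,D^{-1}\bp_i^{\log^2(n')})$ and $S'=\{(\bx'_i,y_i)\}_{i=1}^{n^d}$. If $S$ is scattered then so is $S'$, since $D$ is independent of everything and the labels are unchanged. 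If $S$ is realizable by some $h_\bv^n\in\Hcnn^{n,\log^2(n')}$ with $\bv\in\{\pm1\}^{n'+1}$, put $\bw=D\bv$; then for every patch $\inner{\bw,D^{-1}\bp_i^j}=\inner{D^{-1}\bw,\bp_i^j}=\inner{\bv,\bp_i^j}$, hence $h_\bw^n(\bx'_i)=h_\bv^n(\bx_i)=y_i$, so $S'$ is realizable by $h_\bw^n$. Moreover, since $\cd_z$ is symmetric and $v_r\in\{\pm1\}$, each coordinate $w_r=z_rv_r$ again has distribution $\cd_z$, independently of the others, so $\bw\sim\cd_z^{n'+1}$; thus $S'$ is a genuine $\cd_z^{n'+1}$-random-CNN-realizable instance. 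Composing any efficient distinguisher for $\scat_{n^d}^A(\cd_z^{n'+1},n)$ with this reduction then solves $\scat_{n^d}^{A'}(\Hcnn^{n,\log^2(n')})$.

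It remains to check that $S'$ is contained in $A$. Here we use the concentration hypothesis $\Pr_{z\sim\cd_z}(|z|<f(n'))=o_{n'}(1/n')$: by a union bound over the $n'+1$ coordinates, with probability $1-o_n(1)$ we have $|z_r|\ge f(n')$ for all $r$, so in particular $D$ is invertible and $s_{\min}(D)=\min_r|z_r|\ge f(n')$, i.e. the operator norm of $D^{-1}$ is at most $1/f(n')$. On this event, for every $i$,
\[
\norm{\bx'_i}^2=\sum_{j=1}^{\log^2(n')}\norm{D^{-1}\bp_i^j}^2\le \frac{1}{f(n')^2}\sum_{j=1}^{\log^2(n')}\norm{\bp_i^j}^2=\frac{\norm{\bx_i}^2}{f(n')^2}\le\left(\frac{\log^2(n')}{f(n')}\right)^2,
\]
so $\bx'_i\in A$. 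The reduction therefore fails only with probability $o_n(1)$, which is harmless since the $\scat$ problems only require success probability $\tfrac34-o_n(1)$; on failure the reduction may simply output ``scattered''. Finally, Theorem~\ref{thm:cnn iid} follows by combining this lemma with Theorem~\ref{thm:learn vs scat}.

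The construction is essentially forced, so I expect no serious obstacle: the only point needing care is that, unlike the fully-connected theorems where one invokes a nontrivial smallest-singular-value bound for a random square matrix, here the relevant matrix is diagonal and its least singular value is simply $\min_r|z_r|$, which the hypothesis on $f$ controls directly. One should also keep straight the bookkeeping that $n=(n'+1)\log^2(n')$ makes the patch size of $h_\bw^n$ equal to $n'+1$ and the number of patches equal to $\log^2(n')$, matching $\Hcnn^{n,\log^2(n')}$ and the statement of Theorem~\ref{thm:cnn iid}.
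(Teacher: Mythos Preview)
Your proposal is correct and essentially identical to the paper's proof: both reduce from $\scat_{n^d}^{A'}(\Hcnn^{n,\log^2(n')})$ by drawing a single random diagonal matrix $D=\diag(z_1,\dots,z_{n'+1})$ with $z_r\sim\cd_z$ i.i.d., applying $D^{-1}$ patchwise to the inputs, and using symmetry of $\cd_z$ to conclude that $D\bv\sim\cd_z^{n'+1}$ for $\bv\in\{\pm1\}^{n'+1}$, with the norm bound coming from $\min_r|z_r|\ge f(n')$ via a union bound. The arguments match step for step up to notation.
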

\begin{proof}
By Lemma~\ref{lemma:scat Hcnn}, the problem $\scat_{n^d}^{A'}(\Hcnn^{n,\log^2(n')})$ where $A'$ is the ball of radius $\log^2(n')$ in $\reals^n$, is RSAT-hard.
We will reduce this problem to $\scat_{n^d}^A(\cd_z^{n'+1},n)$. Given a sample $S = \{(\bx_i,y_i)\}_{i=1}^{n^d} \in (\reals^n \times \{0,1\})^{n^d}$ with $\norm{\bx_i} \leq \log^2(n')$ for every $i \in [n^d]$, we will, with probability $1-o_n(1)$, construct a sample $S'$ that is contained in $A$, such that if $S$ is scattered then $S'$ is scattered, and if $S$ is $\Hcnn^{n,\log^2(n')}$-realizable then $S'$ is $\cd_z^{n'+1}$-realizable. Note that our reduction is allowed to fail with probability $o_n(1)$. Indeed, distinguishing scattered from realizable requires success with probability $\frac{3}{4}-o_n(1)$ and therefore reductions between such problems are not sensitive to a failure with probability $o_n(1)$.

Let $\bz = (z_1,\ldots,z_{n'+1})$ where each $z_i$ is drawn i.i.d. from $\cd_z$. Let $M=\diag(\bz)$ be a diagonal matrix.
Note that $M$ is invertible with probability $1-o_n(1)$, since for every $i \in [n'+1]$ we have $Pr_{z_i \sim \cd_z}(z_i=0) \leq Pr_{z_i \sim \cd_z}(|z_i| < f(n'))=o(\frac{1}{n'})$. Now, for every $\bx_i$ from $S$, denote $\bx_i=(\bx^i_1,\ldots,\bx^i_{\log^2(n')})$ where for every $j$ we have $\bx^i_j \in \reals^{n'+1}$. Let $\bx'_i=(M^{-1}\bx^i_1,\ldots,M^{-1}\bx^i_{\log^2(n')})$, and let $S'=\{(\bx'_1,y_1),\ldots,(\bx'_{n^d},y_{n^d})\}$.
Note that if $S$ is scattered then $S'$ is also scattered.
If $S$ is realizable by a CNN $h_\bw^n \in \Hcnn^{n,\log^2(n')}$, then let $\bw'=M \bw$. Note that $S'$ is realizable by $h_{\bw'}^n$. Indeed, for every $i$ and $j$ we have
$\inner{\bw',M^{-1}\bx^i_j} = \bw^\top M^\top M^{-1}\bx^i_j = \bw^\top M M^{-1}\bx^i_j = \inner{\bw,\bx^i_j}$.
\stam{
\[
\inner{\bw',M^{-1}\bx^i_j} = \bw^\top M^\top M^{-1}\bx^i_j = \bw^\top M M^{-1}\bx^i_j = \inner{\bw,\bx^i_j}~.
\]
}
Also, note that since $\bw \in \{\pm 1\}^{n'+1}$ and $\cd_z$ is symmetric, then $\bw'$ has the distribution $\cd_z^{n'+1}$, and thus $h_{\bw'}^n$ is a $\cd_z^{n'+1}$-random CNN.

The probability that $\bz \sim \cd_z^{n'+1}$ has some component $z_i$ with $|z_i|<f(n')$, is at most $(n'+1) \cdot o(\frac{1}{n'}) = o_n(1)$. Therefore, with probability $1-o_n(1)$ we have for every $i \in [n^d]$ that
\begin{eqnarray*}
\norm{\bx'_i}^2
&=& \sum_{1 \leq j \leq \log^2(n')} \norm{M^{-1}\bx^i_j}^2
\leq \sum_{1 \leq j \leq \log^2(n')}\left(\frac{1}{f(n')}\norm{\bx^i_j}\right)^2
= \frac{1}{(f(n'))^2} \sum_{1 \leq j \leq \log^2(n')} \norm{\bx^i_j}^2
\\
&=& \frac{1}{(f(n'))^2} \norm{\bx_i}^2
\leq \frac{\log^4(n')}{(f(n'))^2}~.
\end{eqnarray*}
Thus, $\norm{\bx'_i} \leq \frac{\log^2(n')}{f(n')}$.
\end{proof}

\subsubsection{Proof of Theorem~\ref{thm:cnn normal}}


Assume that the covariance matrix $\Sigma$ is of size $(n'+1) \times (n'+1)$, and let $n=(n'+1)\log^2(n')$. Note that a $\cn(\zero,\Sigma)$-random CNN $h_\bw^{n}$ has $\log^2(n')=\co(\log^2(n))$ hidden neurons.
Let $\Dvec$ be a distribution over $\reals^{n'+1}$ such that each component is drawn i.i.d. from $\cn(0,1)$.  Let $d$ be a fixed integer.
By Lemma~\ref{lemma:cnn iid scat} and by choosing $f(n')=\frac{1}{n'\log(n')}$, we have that $\scat_{n^d}^A(\Dvec,n)$ is RSAT-hard, where $A$ is the ball of radius $n' \log^3(n') \leq n \log(n)$ in $\reals^{n}$.
Note that $\Dvec=\cn(\zero,I_{n'+1})$ (\cite{tong2012multivariate}).
By Theorem~\ref{thm:learn vs scat}, we need to show that $\scat_{n^d}^{A'}(\cn(\zero,\Sigma),n)$ is RSAT-hard, where $A'$ is the ball of radius $\lambda_{\min}^{-\frac{1}{2}} n \log(n)$ in $\reals^{n}$.
We show a reduction from $\scat_{n^d}^A(\cn(\zero,I_{n'+1}),n)$ to $\scat_{n^d}^{A'}(\cn(\zero,\Sigma),n)$.

Let $S = \{(\bx_i,y_i)\}_{i=1}^{n^d} \in (\reals^n \times \{0,1\})^{n^d}$ be a sample. For every $\bx_i$ from $S$, denote $\bx_i=(\bx^i_1,\ldots,\bx^i_{\log^2(n')})$ where for every $j$ we have $\bx^i_j \in \reals^{n'+1}$.
Let $\Sigma = U \Lambda U^\top$ be the spectral decomposition of $\Sigma$, and let $M=U \Lambda^{\frac{1}{2}}$.
Recall that if $\bw \sim \cn(\zero,I_{n'+1})$ then $M\bw \sim \cn(\zero,\Sigma)$ (\cite{tong2012multivariate}).
Let $\bx'_i=((M^\top)^{-1}\bx^i_1,\ldots,(M^\top)^{-1}\bx^i_{\log^2(n')})$, and let $S'=\{(\bx'_1,y_1),\ldots,(\bx'_{n^d},y_{n^d})\}$.
Note that if $S$ is scattered then $S'$ is also scattered.
If $S$ is realizable by a $\cn(\zero,I_{n'+1})$-random CNN $h_\bw^n$, then let $\bw'=M \bw$. Note that $S'$ is realizable by $h_{\bw'}^n$. Indeed, for every $i$ and $j$ we have
$\inner{\bw',(M^\top)^{-1}\bx^i_j} = \bw^\top M^\top (M^\top)^{-1} \bx^i_j = \inner{\bw,\bx^i_j}$.
\stam{
\[
\inner{\bw',(M^\top)^{-1}\bx^i_j} = \bw^\top M^\top (M^\top)^{-1} \bx^i_j = \inner{\bw,\bx^i_j}~.
\]
}
Since $\bw' = M \bw \sim \cn(\zero,\Sigma)$, the sample $S'$ is $\cn(\zero,\Sigma)$-realizable.

We now bound the norms of $\bx'_i$ in $S'$.
Note that for every $i \in [n^d]$ we have
\begin{eqnarray*}
\norm{\bx'_i}^2
&=& \sum_{1 \leq j \leq \log^2(n')}\norm{(M^\top)^{-1} \bx^i_j}^2
= \sum_{1 \leq j \leq \log^2(n')}\norm{U \Lambda^{-\frac{1}{2}} \bx^i_j}^2
= \sum_{1 \leq j \leq \log^2(n')}\norm{\Lambda^{-\frac{1}{2}} \bx^i_j}^2
\\
&\leq&  \sum_{1 \leq j \leq \log^2(n')}\norm{\lambda_{\min}^{-\frac{1}{2}} \bx^i_j}^2
= \lambda_{\min}^{-1} \sum_{1 \leq j \leq \log^2(n')}\norm{\bx^i_j}^2
= \lambda_{\min}^{-1} \norm{\bx_i}^2~.
\end{eqnarray*}
Hence, $\norm{\bx'_i} \leq \lambda_{\min}^{-\frac{1}{2}} \norm{\bx_i} \leq \lambda_{\min}^{-\frac{1}{2}} n \log(n)$.

\subsubsection{Proof of Theorem~\ref{thm:cnn sphere}}

Let $n=(n'+1)\log^2(n')$. Let $\Dvec$ be the uniform distribution on $r \cdot \bbs^{n'}$. Note that a $\Dvec$-random CNN $h_\bw^n$ has $\log^2(n')=\co(\log^2(n))$ hidden neurons. Let $d$ be a fixed integer.
By Theorem~\ref{thm:learn vs scat}, we need to show that $\scat_{n^d}^A(\Dvec,n)$ is RSAT-hard, where $A$ is the ball of radius $\frac{\sqrt{n}\log(n)}{r}$ in $\reals^{n}$.
By Lemma~\ref{lemma:scat Hcnn}, the problem $\scat_{n^d}^{A'}(\Hcnn^{n,\log^2(n')})$ where $A'$ is the ball of radius $\log^2(n')$ in $\reals^n$, is RSAT-hard.
We reduce this problem to $\scat_{n^d}^A(\Dvec,n)$.
Given a sample $S = \{(\bx_i,y_i)\}_{i=1}^{n^d} \in (\reals^n \times \{0,1\})^{n^d}$ with $\norm{\bx_i} \leq \log^2(n')$ for every $i \in [n^d]$, we construct a sample $S'$ that is contained in $A$, such that if $S$ is scattered then $S'$ is scattered, and if $S$ is $\Hcnn^{n,\log^2(n')}$-realizable then $S'$ is $\Dvec$-realizable.

Let $M$ be a random orthogonal matrix of size $(n'+1) \times (n'+1)$.
For every $i \in [n^d]$ denote $\bx_i=(\bx^i_1,\ldots,\bx^i_{\log^2(n')})$ where for every $j$ we have $\bx^i_j \in \reals^{n'+1}$. For every $i \in [n^d]$ let $\bx'_i=(\frac{\sqrt{n'+1}}{r}M \bx^i_1,\ldots,\frac{\sqrt{n'+1}}{r}M \bx^i_{\log^2(n')})$, and let $S'=\{(\bx'_1,y_1),\ldots,(\bx'_{n^d},y_{n^d})\}$.
Note that if $S$ is scattered then $S'$ is also scattered.
If $S$ is realizable by a CNN $h_\bw^n \in \Hcnn^{n,\log^2(n')}$, then let $\bw'=\frac{r}{\sqrt{n'+1}}M \bw$. Note that $S'$ is realizable by $h_{\bw'}^n$. Indeed, for every $i$ and $j$ we have
\[
\inner{\bw',\frac{\sqrt{n'+1}}{r}M\bx^i_j} = \bw^\top \frac{r}{\sqrt{n'+1}}M^\top \frac{\sqrt{n'+1}}{r} M \bx^i_j = \inner{\bw,\bx^i_j}~.
\]
Also, note that since $\norm{\bw}=\sqrt{n'+1}$ and $M$ is orthogonal, $\bw'$ is a random vector on the sphere of radius $r$ in $\reals^{n'+1}$, and thus $h_{\bw'}^n$ is a $\Dvec$-random CNN.

Since $M$ is orthogonal then for every $i \in [n^d]$ we have
\begin{eqnarray*}
\norm{\bx'_i}^2
&=& \sum_{1 \leq j \leq \log^2(n')} \norm{\frac{\sqrt{n'+1}}{r}M\bx^i_j}^2
= \frac{n'+1}{r^2} \sum_{1 \leq j \leq \log^2(n')} \norm{\bx^i_j}^2
\\
&=& \frac{n'+1}{r^2} \cdot \norm{\bx_i}^2
\leq \frac{(n'+1)\log^4(n')}{r^2}
\leq \frac{n\log^2(n)}{r^2}~.
\end{eqnarray*}
Hence $\norm{\bx'_i} \leq \frac{\sqrt{n}\log(n)}{r}$.

\bibliographystyle{abbrvnat}
\bibliography{bib}

\appendix

\section{From $\csp^{\rand}_{n^d}(\sat_K)$ to $\csp^{\rand}_{n^{d-1}}(T_{K,q(n)},\neg T_{K,q(n)})$ (\cite{daniely2016complexity})}
\label{appendix DNF proof}

We outline the main ideas of the reduction.

First, we reduce $\csp^{\rand}_{n^d}(\sat_K)$ to $\csp^{\rand}_{n^{d-1}}(T_{K,q(n)})$. This is done as follows. Given an instance $J=\{C_1,\ldots,C_{n^d}\}$ to $\csp(\sat_K)$, by a simple greedy procedure, we try to find $n^{d-1}$ disjoint subsets $J'_1,\ldots,J'_{n^{d-1}}\subset J$, such that for every $t$, the subset $J'_t$ consists of $q(n)$ constraints and each variable appears in at most one of the constraints in $J'_t$. Now, from every $J'_t$ we construct a $T_{K,q(n)}$-constraint that is the conjunction of all constraints in $J'_t$. If $J$ is random, this procedure will succeed w.h.p. and will produce a random $T_{K,q(n)}$-formula. If $J$ is satisfiable, this procedure will either fail or produce a satisfiable
$T_{K,q(n)}$-formula.

Now, we reduce $\csp^{\rand}_{n^{d-1}}(T_{K,q(n)})$ to $\csp^{\rand}_{n^{d-1}}(T_{K,q(n)},\neg T_{K,q(n)})$. This is done by replacing each constraint, with probability $\frac{1}{2}$, with a random $\neg P$ constraint. Clearly, if the original instance is a random instance of $\csp^{\rand}_{n^{d-1}}(T_{K,q(n)})$, then the produced instance is a random instance of $\csp^{\rand}_{n^{d-1}}(T_{K,q(n)},\neg T_{K,q(n)})$. Furthermore, if the original instance is satisfied by the assignment $\psi\in\{\pm 1\}^n$, the same $\psi$, w.h.p., will satisfy all the new constraints. The reason is that
the probability that a random $\neg T_{K,q(n)}$-constraint is satisfied by $\psi$ is $1-\left(1-2^{-K}\right)^{q(n)}$, and hence, the probability that all new constraints are satisfied by $\psi$ is at least $1-n^{d-1}\left(1-2^{-K}\right)^{q(n)}$. Now, since $q(n)=\omega(\log(n))$, the last probability is $1-o_n(1)$.

For the full proof see \cite{daniely2016complexity}.

\end{document}